\newtheorem{rep@theorem}{\rep@title}
\newcommand{\newreptheorem}[2]{%
\newenvironment{rep#1}[1]{%
 \def\rep@title{#2 \ref{##1}}%
 \begin{rep@theorem}}%
 {\end{rep@theorem}}}
\newcommand{\set}[1]{\{#1\}} 
\newcommand{\ignore}[1]{}
\DeclareMathOperator*{\E}{\mathbb E}
\renewcommand{\P}{\mathbb P}
\newcommand{\I}{\mathbf 1}
\newcommand{\R}{\mathfrak R}
\newcommand{\sR}{\R^\text{seq}}
\DeclareMathOperator{\VCdim}{VC-dim}
\DeclareMathOperator{\disc}{disc} 
\newcommand{\cA}{\mathcal A}
\newcommand{\cL}{\mathcal L}
\newcommand{\cX}{\mathcal X}
\newcommand{\cY}{\mathcal Y}
\newcommand{\cZ}{\mathcal Z}
\newcommand{\cF}{\mathcal F}
\newcommand{\cG}{\mathcal G}
\newcommand{\cD}{\mathcal D}
\newcommand{\cH}{\mathcal H}
\newcommand{\cN}{\mathcal N}
\newcommand{\cQ}{\mathcal Q}
\newcommand{\bK}{\mathbf K}
\newcommand{\bZ}{\mathbf Z}
\newcommand{\bY}{\mathbf Y}
\newcommand{\bX}{\mathbf X}
\newcommand{\bq}{\mathbf q}
\newcommand{\bp}{\mathbf p}
\newcommand{\bz}{\mathbf z}
\newcommand{\bx}{\mathbf x}
\newcommand{\by}{\mathbf y}
\newcommand{\bw}{\mathbf w}
\newcommand{\bu}{\mathbf u}
\newcommand{\bd}{\mathbf d}
\newcommand{\br}{\mathbf r}
\newcommand{\bv}{\mathbf v}
\newcommand{\bc}{\mathbf c}
\newcommand{\be}{\boldsymbol\epsilon}
\newcommand{\bg}{\boldsymbol\gamma}
\newcommand{\ba}{\boldsymbol\alpha}
\newcommand{\bs}{\boldsymbol\sigma}
\newcommand{\bP}{\mathbf P}
\newcommand{\tl}{\widetilde }
\newcommand{\hh}{\widehat }
\newcommand{\Rset}{\mathbb R} 
\newcommand{\e}{\epsilon}  
\renewcommand{\d}{\delta}  
\newcommand{\s}{\sigma}    
\begin{document}

\title{Theory and Algorithms for Forecasting Time Series}

\author{\name Vitaly Kuznetsov \email vitaly@cims.nyu.edu\\
        \addr Google Research\\
        \ignore{New York University\\}
        New York, NY 10011, USA
        \AND
        \name Mehryar Mohri \email mohri@cims.nyu.edu \\
        \addr Courant Institute and Google Research\\
        \ignore{New York University\\}
        New York, NY 10011, USA}

\editor{}

\maketitle

\begin{abstract}
  We present data-dependent learning bounds for the general scenario
  of non-stationary non-mixing stochastic processes.  Our learning
  guarantees are expressed in terms of a data-dependent measure of
  sequential complexity and a discrepancy measure that can be
  estimated from data under some mild assumptions.
  We also also provide novel analysis of stable time series forecasting
  algorithm using this new notion of discrepancy that we introduce.
  We use our learning bounds to devise new algorithms for non-stationary time
  series forecasting for which we report some preliminary experimental
  results. An extended abstract has appeared in \citep{KuznetsovMohri2015}.
\end{abstract}

\begin{keywords}
time series, forecasting, non-stationary, non-mixing, generalization bounds,
discrepancy, expected sequential covering numbers, sequential Rademacher complexity, stability
\end{keywords}

\section{Introduction}
\label{sec:intro}

Time series forecasting plays a crucial role in a number of domains
ranging from weather forecasting and earthquake prediction to
applications in economics and finance. The classical statistical
approaches to time series analysis are based on generative models such
as the autoregressive moving average (ARMA) models, or their
integrated versions (ARIMA) and several other extensions
\citep{Engle1982,Bollerslev1986,BrockwellDavis1986,BoxJenkins1990,Hamilton1994}.
Most of these models rely on strong assumptions about the noise terms,
often assumed to be i.i.d.\ random variables sampled from a Gaussian
distribution, and the guarantees provided in their support are only
asymptotic.

An alternative non-parametric approach to time series analysis
consists of extending the standard i.i.d.\ statistical learning theory
framework to that of stochastic processes. In much of this work, the
process is assumed to be stationary and suitably mixing
\citep{Doukhan1994}. Early work along this approach consisted of the
VC-dimension bounds for binary classification given by
\cite{Yu1994}\ignore{based on independent block technique due to
  Bernstein } under the assumption of stationarity and $\beta$-mixing.
Under the same assumptions, \cite{Meir2000} presented bounds in terms
of covering numbers for regression losses and
\cite{MohriRostamizadeh2009} proved general data-dependent Rademacher
complexity learning bounds. \cite{Vidyasagar1997} showed that PAC
learning algorithms in the i.i.d.\ setting preserve their PAC learning
property in the $\beta$-mixing stationary scenario.  A similar result
was proven by \cite{ShaliziKontorovich2013} for mixtures of
$\beta$-mixing processes and by \cite{BertiRigo1997} and
\cite{Pestov2010} for exchangeable random variables.
\cite{AlquierWintenberger2010} and \cite{AlquierLiWintenberger2014}
also established PAC-Bayesian learning guarantees under weak
dependence and stationarity.

A number of algorithm-dependent bounds have also been derived for the
stationary mixing setting. 
\cite{LozanoKulkarniSchapire2006} studied
the convergence of regularized boosting.
 \cite{MohriRostamizadeh2010}
gave data-dependent generalization bounds for stable algorithms for
$\varphi$-mixing and $\beta$-mixing stationary processes.
\cite{SteinwartChristmann2009} proved fast learning rates for
regularized algorithms with $\alpha$-mixing stationary sequences and
\cite{ModhaMasry1998} gave guarantees for certain classes of models
under the same assumptions.

However, stationarity and mixing are often not valid assumptions. For
example, even for Markov chains, which are among the most widely used
types of stochastic processes in applications, stationarity does not
hold unless the Markov chain is started with an equilibrium
distribution. Similarly, long memory models such as ARFIMA, may not be
mixing or mixing may be arbitrarily slow \citep{Baillie1996}.  In
fact, it is possible to construct first order autoregressive processes
that are not mixing \citep{Andrews1983}.  Additionally, the mixing
assumption is defined only in terms of the distribution of the
underlying stochastic process and ignores the loss function and the
hypothesis set used. This suggests that mixing may not be the right
property to characterize learning in the setting of stochastic
processes.

A number of attempts have been made to relax the assumptions of
stationarity and mixing. \cite{AdamsNobel2010} proved asymptotic
guarantees for stationary ergodic sequences. \cite{AgarwalDuchi2013}
gave generalization bounds for asymptotically stationary (mixing)
processes in the case of stable on-line learning algorithms.
\cite{KuznetsovMohri2014} established learning guarantees for fully
non-stationary $\beta$- and $\varphi$-mixing processes.

In this paper, we consider the general case of non-stationary
non-mixing processes.  We are not aware of any prior work providing
generalization bounds in this setting. In fact, our bounds appear to
be novel even when the process is stationary (but not mixing).  The
learning guarantees that we present hold for both bounded and
unbounded memory models. Deriving generalization bounds for unbounded
memory models even in the stationary mixing case was an open question
prior to our work \citep{Meir2000}.  Our guarantees cover the majority
of approaches used in practice, including various autoregressive and
state space models.

The key ingredients of our generalization bounds are a data-dependent
measure of sequential complexity (\emph{expected sequential covering number}
or \emph{sequential Rademacher complexity} \citep{RakhlinSridharanTewari2010})
and a measure of \emph{discrepancy} between the sample and target
distributions.  \ignore{Sequential complexities has been shown to
  characterize learning in the on-line learning setting
  \citep{RakhlinSridharanTewari2010,RakhlinSridharanTewari2011b,RakhlinSridharanTewari2015}.}
\cite{KuznetsovMohri2014,KuznetsovMohri2016} also give generalization bounds in terms of
discrepancy. However, unlike these result,
our analysis does not require any mixing assumptions which are hard to
verify in practice.\ignore{The presence of the discrepancy measure in
  our bounds also indicates a connection between forecasting
  non-stationary time series and the domain adaption problem
  \citep{KiferBenDavidGehrke2004,BenDavidBlitzerCrammerPereira2007,BlitzerCrammerKuleszaPereiraWortman2008,MansourMohriRostamizadeh2009,MansourMohriRostamizadeh2009b}
  and drifting scenario
  \citep{BenDavidBenedekMansour1989,Bartlett1992,BarveLong1996,MohriMunozMedina2012}.
  In these settings, it is often desirable to estimate discrepancy
  from data. } More importantly, under some additional mild
assumption, the discrepancy measure that we propose can be estimated
from data, which leads to data-dependent learning guarantees for
non-stationary non-mixing case.

We also show that our methodology can be applied to analysis of stable
time series forecasting algorithms, which extends previous results of
\cite{MohriRostamizadeh2010} to the setting of non-stationary non-mixing
stochastic processes. Proof techniques combining decoupled tangent sequences
with stability analysis can be of independent interest.

We devise new algorithms for non-stationary time series forecasting
that benefit from our data-dependent guarantees. The parameters of
generative models such as ARIMA are typically estimated via the
maximum likelihood technique, which often leads to non-convex
optimization problems. In contrast, our objective is convex and leads
to an optimization problem with a unique global solution that can be
found efficiently. Another issue with standard generative models is
that they address non-stationarity in the data via a
\emph{differencing} transformation which does not always lead to a
stationary process. In contrast, we address the problem of
non-stationarity in a principled way using our learning
guarantees.

The rest of this paper is organized as follows.  The formal definition
of the time series forecasting learning scenario as well as that of
several key concepts is given in Section~\ref{sec:prelim}. In
Section~\ref{sec:bounds}, we introduce and prove our new
generalization bounds. Section~\ref{sec:kernel-classes} provides
an analysis in the special case of kernel-based hypotheses with
regression losses. Section~\ref{sec:stability} provides an analysis
of regularized ERM algorithms based on the notion of algorithmic
stability. In Section~\ref{sec:discrepancy}, we give
data-dependent learning bounds based on the empirical
discrepancy. These results are used to devise new
forecasting algorithms in Section~\ref{sec:algo}. In
Appendix~\ref{sec:experiments}, we report the results of preliminary
experiments using these algorithms.

\section{Preliminaries}
\label{sec:prelim}

We consider the following general time series prediction setting where
the learner receives a realization $(X_1, Y_1), \ldots, (X_T, Y_T)$ of
some stochastic process, with $(X_t, Y_t) \in \cZ = \cX \times \cY$.
The objective of the learner is to select out of a specified family
$H$ a hypothesis $h\colon \cX \to \cY$ that achieves a small
\emph{path-dependent} generalization error
\begin{align}
\label{eq:path-error}
\cL_{T+1}(h, \bZ_1^T) =\E[L(h(X_{T+1}), Y_{T + 1}) | Z_1, \ldots, Z_T]
\end{align}
conditioned on observed data, where
$L\colon \cY \times \cY \to [0, \infty)$ is a given loss function.
The path-dependent generalization error that we consider in
this work is a finer measure of the generalization ability than the
\emph{averaged} generalization error
\begin{align}
\label{eq:avg-error}
\cL_{T+1}(h) = \E[L(h(X_{T+1}), Y_{T + 1})] =\E[\E[L(h(X_{T+1}), Y_{T + 1}) | Z_1, \ldots, Z_T]]
\end{align}
since it only takes into consideration the realized history of the
stochastic process and does not average over the set of all possible
histories. The results that we present in this paper also apply to the
setting where the time parameter $t$ can take non-integer values and
prediction lag is an arbitrary number $l \geq 0$. That is, the error
is defined by $\E[L(h(X_{T+l}), Y_{T + l})| Z_1, \ldots, Z_T]$ but for
notational simplicity we set $l = 1$\ignore{ in the sequel}.

Our setup covers a larger number of scenarios commonly used in
practice. The case $\cX = \cY^p$ corresponds to a large class of
autoregressive models.  Taking $\cX = \cup_{p=1}^\infty \cY^p$ leads
to growing memory models which, in particular, include state space
models.  More generally, $\cX$ may contain both the history of the
process $\set{Y_t}$ and some additional side information. Note that
output space $\cY$ may also be high-dimensional. This covers
both the case when we are trying to forecast high-dimensional
time series as well as multi-step forecasting.

To simplify the notation,\ignore{ in the rest of the paper,} we will often
use the
shorter notation $f(z) = L(h(x), y)$, for any $z = (x, y) \in \cZ$ and
introduce the family $\cF = \set{(x, y) \to L(h(x), y) \colon h \in
  H}$ containing such functions $f$. We will assume a bounded
loss function, that is $|f| \leq M$ for all $f \in \cF$ for some $M
\in \Rset_+$. Finally, we will use the shorthand $\bZ_a^b$ to denote a
sequence of random variables $Z_a, Z_{a+1}, \ldots, Z_b$.

The key quantity of interest in the analysis of generalization is the
following supremum of the empirical process defined as follows:
\begin{equation}
\label{eq:sup-emp-process}
\Phi(\bZ_1^T) = \sup_{f \in \cF} \Bigg( \E[f(Z_{T + 1}) | \bZ_1^T]
- \sum_{t = 1}^T q_t f(Z_t)
\Bigg),
\end{equation}
where $q_1, \ldots, q_T$ are real numbers, which in the standard
learning scenarios are chosen to be uniform. In our general setting,
different $Z_t$s may follow different distributions, thus distinct
weights could be assigned to the errors made on different sample
points depending on their relevance to forecasting the future $Z_{T +
  1}$. The generalization bounds that we present below are for an
arbitrary sequence $\bq = (q_1, \ldots q_T)$ which, in particular,
covers the case of uniform weights. Remarkably, our bounds do not even
require the non-negativity of $\bq$.  \ignore{As shown later
(Section~\ref{sec:algo}), this added flexibility also helps us derive
new algorithms for forecasting non-stationary time series.}

The two key ingredients of our analysis are sequential complexities
\citep{RakhlinSridharanTewari2010} and a novel
discrepancy measure between target and source distributions. In the next
two sections we provide a detailed overview of these notions.

\subsection{Sequential Complexities}
\label{sec:seq_complexities}

Our generalization bounds are expressed in terms of data-dependent
measures of sequential complexity such as expected sequential covering
number or sequential Rademacher complexity
\citep{RakhlinSridharanTewari2010}, which we review in this section.
\ignore{  We give a brief overview of the
notion of sequential covering number and refer the reader to the
aforementioned reference for further details.}

We adopt the following
definition of a complete binary tree: a $\cZ$-valued complete binary
tree $\bz$ is a sequence $(z_1, \ldots, z_T)$ of $T$ mappings
$z_t \colon \set{\pm 1}^{t - 1} \to \cZ$, $t \in [1, T]$.  A path in
the tree is $\s = (\s_1, \ldots, \s_{T - 1}) \in \set{\pm 1}^{T-1}$.
To simplify the
notation we will write $z_t(\bs)$ instead of
$z_t(\s_1, \ldots, \s_{t - 1})$, even though $z_t$ depends only on the
first $t - 1$ elements of $\bs$. The following definition generalizes
the classical notion of covering numbers to sequential setting.  A set
$V$ of $\Rset$-valued trees of depth $T$ is a \emph{sequential
  $\alpha$-cover} (with respect to $\bq$-weighted $\ell_p$ norm) of a
function class $\cG$ on a tree $\bz$ of depth $T$ if for all
$g \in \cG$ and all $\bs \in \set{\pm}^T$, there is $\bv \in V$ such
that
\begin{equation*}
\Bigg( \sum_{t = 1}^T \big| \bv_t(\bs) -  g(\bz_t(\bs)) \big|^p
\Bigg)^{\frac 1 p}
\leq \| \bq \|_q^{-1} \alpha,
\end{equation*}
where $\|\cdot\|_q$ is the dual norm.
The \emph{(sequential) covering number} $\cN_p(\alpha,\cG,\bz)$
of a function class $\cG$ on a given tree $\bz$ is defined to be
the size of the minimal sequential cover. The \emph{maximal covering number}
is then taken to be $\cN_p(\alpha,\cG) = \sup_{\bz} \cN_p(\alpha,\cG,\bz)$. 
One can check that in the case of uniform weights this definition
coincides with the standard definition of sequential covering
numbers. Note that this is a purely combinatorial notion of complexity
which ignores the distribution of the process in the given learning
problem.

Data-dependent sequential covering numbers can be defined as follows.
Given a stochastic process distributed according to the distribution
$\bp$ with $\bp_t(\cdot|\bz_1^{t-1})$ denoting the conditional
distribution at time $t$, we sample a $\cZ \times \cZ$-valued tree of
depth $T$ according to the following procedure. Draw two independent
samples $Z_1, Z'_1$ from $\bp_1$: in the left child of the root draw
$Z_2,Z'_2$ according to $\bp_2(\cdot|Z_1)$ and in the right child
according to $\bp_2(\cdot|Z'_2)$. More generally, for a node that can
be reached by a path $(\s_1, \ldots, \s_t)$, we draw $Z_t,Z'_t$
according to $\bp_t(\cdot|S_1(\s_1), \ldots, S_{t-1}(\s_{t-1}))$,
where $S_t(1) = Z_t$ and $S_t(-1) = Z'_t$. Let $\bz$ denote the tree
formed using $Z_t$s and define the \emph{expected covering number} to
be $\E_{\bz \sim T(\bp)}[\cN_p(\alpha, \cG, \bz)]$, where $T(\bp)$ denotes
the distribution of $\bz$. For i.i.d.\ sequences expected sequential covering
numbers exactly coincide with the notion of expected covering numbers
from classical statistical learning theory.

The sequential
Rademacher complexity of a function class $\cZ$ is defined as the following:
\begin{align}
\label{eq:seq-rademacher}
\sR_T(\cG) =  \sup_{\bz}
\E \Bigg[ \sup_{g \in \cG}  \sum_{t = 1}^T \s_t q_t g( z_t(\bs) ) 
\Bigg],
\end{align}
where the supremum is taken over all complete binary trees of depth
$T$ with values in $\cZ$ and where $\bs$ is a sequence of Rademacher
random variables.

Similarly, one can also define
\emph{distribution-dependent} sequential Rademacher complexity
as well as other notions of sequential complexity such
as \emph{Littlestone dimension} and \emph{sequential metric entropy} that
have been shown to characterize learning in on-line framework.
For further details, we refer the reader to
\citep{Littlestone1987,RakhlinSridharanTewari2010,RakhlinSridharanTewari2011,RakhlinSridharanTewari2015,RakhlinSridharanTewari2015b}

\ignore{
In a similar manner, one can define other measures of complexity such
as sequential Rademacher complexity and the Littlestone dimension
\citep{RakhlinSridharanTewari2015} as well as their data-dependent
counterparts \citep{RakhlinSridharanTewari2011}.}

\subsection{Discrepancy Measure}
\label{sec:disc}

The final ingredient needed for expressing our learning guarantees is
the notion of \emph{discrepancy} between target distribution and the
distribution of the sample:
\begin{align}
\label{eq:discrepancy}
\disc(\bq) = \sup_{f \in \cF} \bigg( \E [f(Z_{T + 1})|\bZ_1^T] -
\sum_{t = 1}^T q_t \E [f(Z_{t}) | \bZ_1^{t-1}] \bigg).
\end{align}
The discrepancy $\disc$ is a natural measure of the non-stationarity
of the stochastic process $\bZ$ with respect to both the loss function
$L$ and the hypothesis set $H$. In particular, note that if the
process $\bZ$ is i.i.d., then we simply have $\disc(\bq) = 0$ provided
that $q_t$s form a probability distribution.

As a more insightful example, consider the case of
a time-homogeneous Markov chain on a set $\set{0, \ldots, N-1}$
such that $\bP(X_t \equiv (i-1)\text{ mod } N | X_{t-1} = i) = p$ and
$\bP(X_t \equiv (i+1)\text{ mod } N  | X_{t-1} = i) = 1-p$
for some $0\leq p \leq 1$. This process is non-stationary if it
is not started with an equilibrium distribution. Suppose
that the set of hypothesis is $\set{x \mapsto a(x-1) + b(x+1) \colon
a + b = 1, a,b \geq 0}$ and the loss function $L(y,y') = \ell(|y-y'|)$
for some $\ell$. It follows that for any $(a,b)$
\begin{align*}
\E [f(Z_{t}) | \bZ_1^{t-1}] = p|a-b - 1| + (1-p)|a-b + 1|
\end{align*}
and hence $\disc(\bq) = 0$ provided $\bq$ is a probability distribution.
Note that if we chose a larger hypothesis set
$\set{x \mapsto a(x-1) + b(x+1) \colon a,b \geq 0}$ then
\begin{align*}
\E [f(Z_{t}) | \bZ_1^{t-1}] = p|(a+b-1)X_t + a-b| + (1-p)|(a+b+1)X_t + a-b|
\end{align*}
and in general it may be the case that $\disc(\bq) \neq 0$. This highlights
an important property of discrepancy: it takes into account not only
the underlying distribution of the stochastic process but other
components of the learning problem such as the loss function and
the hypothesis set that is being used.

The weights $\bq$ play a crucial role in the learning problem as well.
Consider our earlier example, where transition probability distributions
$(p_i, 1-p_i)$ are different for each state $i$.
Note that choosing $\bq$ to be a uniform distribution, in general,
leads to a non-zero discrepancy. However, with $q'_t = \I_{X_{t-1} = X_T}$
and $q_t = q'_t / \sum_{t=1}^T q'_t$ discrepancy is zero. Note that
in fact it is not the only choice that leads to a zero discrepancy
in this example and in fact any distribution that is supported
on $t$s for which $X_{t-1} = X_{T}$ will lead to the same result.
However, $q_t$s based on $q'_t$ have an advantage of providing
the largest effective sample.

Similar results can be established for weakly stationary
stochastic process as well.

\ignore{ As a
more insightful example, consider the case of a weakly stationary
process $\bY_1^T$.  Recall that the process is weakly stationary if
$\E[Y_t]$ does not depend on $t$ and there is a function $\rho$ such
that $\E[Y_t Y_s] = \rho(t-s)$.  If $L$ is a squared loss, $H =
\set{\by \to \bw \cdot \by}$ is a set of linear hypothesis and $\cZ =
\cY^p \times \cY$ then one can show that $\Delta = 0$. See
Lemma~\ref{lm:weak-stationary} in Appendix~\ref{sec:proofs}.  This
example illustrates the fact that the discrepancy $\Delta$ is
sensitive to both the loss function $L$ and the hypothesis set $H$.
Finally, observe that if $\bq$ is a positive sequence, then the
following upper bounds holds for $\Delta$:
\begin{align}
\label{eq:old-discrepancy}
\Delta \leq \sum_{t = 1}^T q_t \sup_{f \in \cF} \Big|\E [f(Z_{T + 1})] - \E [f(Z_{t})] \Big|,
\end{align}
where the right-hand side is the discrepancy measure used in the
learning bounds given by \citet{KuznetsovMohri2014} for non-stationary
mixing case. This shows that our generalization bounds are in terms of
a finer measure of non-stationarity.}

It is also possible to give bounds on $\disc(\bq)$ in terms of other natural
distances between distribution. For instance, if $\bq$ is a probability
distribution then Pinsker's inequality yields
\begin{align*}\textstyle
\disc(\bq) \leq M \Big\| \bP_{T + 1}(\cdot|\bZ_1^T) \!-\!
\sum_{t = 1}^T q_t \bP_{t}(\cdot|\bZ_1^{t-1}) \Big\|
\leq
\tfrac{1}{\sqrt{2}} D^{\frac{1}{2}} \Bigg(\bP_{T + 1}(\cdot|\bZ_1^T) \parallel
 \sum_{t = 1}^T q_t \bP_{t}(\cdot|\bZ_1^{t-1})\Bigg),
\end{align*}
where $\|\cdot\|$ is the total variation distance and
$D(\cdot \parallel \cdot)$ the relative entropy, $\bP_{t + 1}(\cdot|\bZ_1^t)$
the conditional distribution of $Z_{t+1}$,
and $ \sum_{t = 1}^T q_t \bP_{t}(\cdot|\bZ_1^{t-1})$
the mixture of the sample marginals. Note that these upper bounds
are often too loose since they are agnostic to the loss function
and the hypothesis set that is being used. For our earlier
Markov chain example, the support of $\bP_{T+1}(\cdot|\bZ_1^T)$
is $\set{X_T -1, X_T+1}$ while the mixture
$\frac{1}{T} \sum_{t = 1}^T \bP_{t}(\cdot|\bZ_1^{t-1})$ is likely
to be supported on the whole set $\set{0, \ldots, N-1}$ which leads
to a large total variation distance. Of course, it is possible
to choose $q_t$s so that the mixture is also supported only on
$\set{X_T -1, X_T+1}$ but that may reduce the effective sample size
which is not necessary when working with $\disc(\bq)$.
\ignore{Alternatively, if the target distribution 
at lag $l$, $\bP = \bP_{T + l}$ is a
stationary distribution of an asymptotically stationary process $\bZ$
\citep{AgarwalDuchi2013,KuznetsovMohri2014}, then for $q_t=1/T$ we have
\begin{align*}
\Delta
\leq \frac{M}{T} \sum_{t = 1}^T
\| \bP - \bP_{t+l}(\cdot | \bZ^{t}_{-\infty})\|_\text{TV} \leq
 \upphi(l), 
\end{align*}
where $\upphi(l) =\sup_s \sup_\bz [\| \bP - \bP_{l+s}(\cdot |
\bz^s_{-\infty})\|_\text{TV}]$ is the coefficient of asymptotic
stationarity. The process is asymptotically stationary if
$\lim_{l\to\infty}\upphi(l) = 0$.}

However, the most important property of
the discrepancy $\disc(\bq)$ is that, as shown later in
Section~\ref{sec:discrepancy}, it can be estimated from data under
some additional mild assumptions.
\citep{KuznetsovMohri2014} also give generalization bounds
based on averaged generalization error
for non-stationary mixing processes in terms of a related notion
of discrepancy. It is not known if the discrepancy measure used in
\citep{KuznetsovMohri2014} can be estimated from data.

\section{Generalization Bounds} 
\label{sec:bounds}

In this section, we prove new generalization bounds for forecasting
non-stationary time series. The first step consists of using
\emph{decoupled tangent} sequences to establish concentration results
for the supremum of the empirical process $\Phi(\bZ_1^T)$. Given a
sequence of random variables $\bZ_1^T$ we say that ${{\bZ'}}_1^T$ is a
decoupled tangent sequence if $Z'_t$ is distributed according to
$\P( \cdot | \bZ_1^{t-1})$ and is independent of $\bZ_t^\infty$.  It
is always possible to construct such a sequence of random variables
\citep{DeLaPenaGine1999}.  The next theorem is the main result of this
section.

\begin{theorem}
\label{th:bound}
Let $\bZ_1^T$ be a sequence of random variables distributed according
to $\bp$. Fix $\epsilon > 2 \alpha > 0$. Then, the following holds:
\begin{equation*}
\P\big( \Phi(\bZ_1^T) - \disc(\bq) \geq \e \big)
\leq \E_{\bv \sim T(\bp)}\big[ \cN_1(\alpha, \cF, \bv) \big]
\exp\bigg(\!\!-\! \frac{(\e - 2\alpha)^2}{2 M^2 \| \bq \|_2^2}\bigg).
\end{equation*}
\end{theorem}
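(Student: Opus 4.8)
The plan is to bound the tail of $\Phi(\bZ_1^T) - \disc(\bq)$ by first replacing it with the supremum of a martingale-difference process and then controlling that process through decoupling, sequential symmetrization, and a covering argument. Writing $a_f = \E[f(Z_{T+1})\mid\bZ_1^T]$ for the term common to both $\Phi$ and $\disc$, the elementary inequality $\sup_f(a_f - b_f) - \sup_f(a_f - c_f) \le \sup_f(c_f - b_f)$ gives
\begin{equation*}
\Phi(\bZ_1^T) - \disc(\bq) \le \sup_{f\in\cF}\sum_{t=1}^T q_t\big(\E[f(Z_t)\mid\bZ_1^{t-1}] - f(Z_t)\big) =: \Psi,
\end{equation*}
so it suffices to bound $\P(\Psi \ge \e)$. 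The key structural observation is that, for each fixed $f$, the summands $q_t(\E[f(Z_t)\mid\bZ_1^{t-1}] - f(Z_t))$ are martingale differences with respect to the filtration generated by $\bZ_1^t$, bounded in absolute value by $2M|q_t|$; symmetrization will later improve the effective range to $M|q_t|$.

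Next I would pass to the moment generating function and decouple. Introducing the decoupled tangent sequence ${\bZ'}_1^T$, the tangency property gives $\E[f(Z'_t)\mid\bZ_1^T] = \E[f(Z_t)\mid\bZ_1^{t-1}]$, so each summand equals $\E[q_t(f(Z'_t)-f(Z_t))\mid\bZ_1^T]$. Pulling the supremum inside the conditional expectation and applying Jensen's inequality to $x\mapsto e^{\lambda x}$ yields, for $\lambda>0$,
\begin{equation*}
\E\big[e^{\lambda\Psi}\big] \le \E\Big[\exp\Big(\lambda\sup_{f\in\cF}\sum_{t=1}^T q_t\big(f(Z'_t)-f(Z_t)\big)\Big)\Big].
\end{equation*}
Conditionally on the past, $(Z_t,Z'_t)$ are exchangeable, so I can multiply each difference by a Rademacher sign $\sigma_t$ without changing the law; carrying this out one coordinate at a time produces exactly the $\cZ\times\cZ$-valued tree sampled from $T(\bp)$ and bounds the right-hand side by $\E_{\bv\sim T(\bp)}\E_{\bs}[\exp(\lambda\sup_f\sum_t \sigma_t q_t\,(\cdots))]$, where the supremum is now over functions evaluated along the tree. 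I expect this sequential symmetrization to be the main obstacle: unlike in the i.i.d.\ case, the sign flips must be justified by a delicate step-by-step conditioning argument that respects the dependence of later draws on earlier ones.

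Finally I would discretize and concentrate. On the sampled tree $\bv$, I replace $\sup_{f\in\cF}$ by a maximum over a minimal sequential $\alpha$-cover $V$ of $\cF$; the $\bq$-weighted $\ell_1$ norm in the definition of $\cN_1$ is calibrated (its dual factor is $\|\bq\|_\infty^{-1}$) precisely so that the error incurred in each weighted sum is at most $\alpha$, giving total slack $2\alpha$ and shifting the effective threshold to $\e-2\alpha$. For each fixed cover element $\bu\in V$, the quantity $\sigma_t q_t\,\bu_t(\bs)$ is a martingale difference in the Rademacher variables, since $\bu_t(\bs)$ depends only on $\sigma_1,\dots,\sigma_{t-1}$, with increments bounded by $M|q_t|$; Azuma--Hoeffding then gives the sub-Gaussian bound $\exp(\lambda^2 M^2\|\bq\|_2^2/2)$. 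A union bound over the at most $\cN_1(\alpha,\cF,\bv)$ elements of $V$, optimization of $\lambda$ through Markov's inequality, and taking the expectation over $\bv\sim T(\bp)$ then produce the stated product of $\E_{\bv\sim T(\bp)}[\cN_1(\alpha,\cF,\bv)]$ and $\exp(-(\e-2\alpha)^2/(2M^2\|\bq\|_2^2))$. The point requiring the most care is tracking the constant in the variance proxy through the symmetrization, so that the exponent comes out as exactly $2M^2\|\bq\|_2^2$ rather than a larger multiple.
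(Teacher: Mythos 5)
Your overall architecture matches the paper's: the reduction of $\Phi(\bZ_1^T)-\disc(\bq)$ to the martingale-difference supremum $\Psi$, the passage to the moment generating function, the use of the decoupled tangent sequence together with Jensen's inequality, the sequential symmetrization onto a tree drawn from $T(\bp)$ (this is exactly Lemma~\ref{lm:sym}), and the final cover--union bound--Hoeffding--optimize chain. You also correctly identify the symmetrization lemma as the delicate step and correctly read off the calibration of the $\bq$-weighted $\ell_1$ cover (dual factor $\|\bq\|_\infty^{-1}$).

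There is, however, one concrete missing step, and it is precisely the one you wave at when you say ``symmetrization will later improve the effective range to $M|q_t|$.'' After symmetrization the quantity in the exponent is $\lambda\sup_f\sum_t\sigma_t q_t\big(f(\bz'_t(\bs))-f(\bz_t(\bs))\big)$, which lives on a \emph{pair} of tangent trees and has increments of range $2M|q_t|$, not $M|q_t|$. If you discretize this difference directly, you need a cover on each of the two trees, so the union bound produces a \emph{product} of covering numbers on $(\bz,\bz')$, whose expectation is not the stated $\E_{\bv\sim T(\bp)}[\cN_1(\alpha,\cF,\bv)]$; and the conditional MGF of each increment is at best $\exp(2\lambda^2 q_t^2M^2)$, not $\exp(\lambda^2 q_t^2M^2/2)$. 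The paper resolves both issues at once by splitting the supremum of the difference into two single-tree suprema and applying Young's inequality $e^{a+b}\le\tfrac12 e^{2a}+\tfrac12 e^{2b}$, then invoking the symmetry of the tangent-tree law to collapse the two resulting terms into one expectation over a single tree; the price is that $\lambda$ is doubled, which is also where the $\e-2\alpha$ shift comes from (a single $\alpha$-cover slack multiplied by $2\lambda$). Tracking the constants through either route gives a variance proxy of $2\lambda^2M^2\|\bq\|_2^2$, hence an exponent of $-(\e-2\alpha)^2/(8M^2\|\bq\|_2^2)$ after optimizing $\lambda$ --- so the mechanism you propose for landing exactly on $2M^2\|\bq\|_2^2$ does not exist (indeed, the paper's own proof, read literally, also yields the constant $8$ rather than the stated $2$). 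You should add the Young's-inequality/symmetry step explicitly and either accept the weaker constant or explain how to avoid the factor-of-four loss.
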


\begin{proof}
  The first step is to observe that, since the difference of the
  suprema is upper bounded by the supremum of the difference, it
  suffices to bound the probability of the following event
\begin{align*}
\Bigg\{\sup_{f \in \cF} \Bigg( \sum_{t = 1}^T
q_t (\E[f(Z_{t}) | \bZ_1^{t-1}]  -  f(Z_{t})) \Bigg) \geq \e \Bigg\}.
\end{align*}
By Markov's inequality, for any $\lambda > 0$, the following
inequality holds:
\begin{align*}
& \P\Bigg(\sup_{f \in \cF} \Bigg( \sum_{t = 1}^T
q_t (\E[f(Z_{t}) | \bZ_1^{t-1}]  - f(Z_{t})) \Bigg) \geq \e \Bigg)\\ &\leq
\exp(-\lambda \e)
\E\Bigg[\exp\Bigg( \lambda
\sup_{f \in \cF} \Bigg( \sum_{t = 1}^T
q_t(\E[f(Z_{t}) | \bZ_1^{t-1}]  - f(Z_{t})) \Bigg) \Bigg) \Bigg].
\end{align*}
Since ${\bZ'}_1^T$ is a tangent sequence the following equalities hold:
$\E[f(Z_{t}) | \bZ_1^{t-1}] = \E[f(Z'_{t}) | \bZ_1^{t-1}] =
\E[f(Z'_{t}) | \bZ_1^{T}]$. Using these equalities
and Jensen's inequality, we obtain the following:
\begin{align*}
& \E\bigg[\exp\Big( \lambda
\sup_{f \in \cF} \sum_{t = 1}^T
q_t \big(\E[f(Z_{t}) | \bZ_1^{t-1}]  - f(Z_{t}) \big) \Big) \bigg]
\\ &=
\E\bigg[\exp\Big( \lambda
\sup_{f \in \cF} \E\Big[ \sum_{t = 1}^T
q_t \big( f(Z'_{t})  - f(Z_{t}) \big)  | \bZ_1^{T}\Big] \Big) \bigg]
\\ &\leq
\E\bigg[\exp\Big( \lambda
\sup_{f \in \cF} \sum_{t = 1}^T q_t \big( f(Z'_{t})  - f(Z_{t}) \big) \Big) \bigg],
\end{align*}
where the last expectation is taken over the joint measure of
$\bZ_1^T$ and ${\bZ'}_1^T$. Applying Lemma~\ref{lm:sym}
(Appendix~\ref{sec:proofs}), we can further bound this expectation by
\begin{align*}
&\E_{(\bz, \bz') \sim T(\bp)} \E_\sigma
\bigg[ \exp\bigg( \lambda
\sup_{f \in \cF} \sum_{t = 1}^T
\sigma_t  q_t \Big( f({\bz'}_{t}(\bs))  - f({\bz}_{t}(\bs)) \Big) \bigg) \bigg]
\\ & \leq
\E_{(\bz, \bz') \sim T(\bp)} \E_\sigma
\bigg[ \exp\bigg( \lambda
\sup_{f \in \cF} \sum_{t = 1}^T
\sigma_t  q_t f({\bz'}_{t}(\bs)) + \lambda \sup_{f \in \cF} \sum_{t = 1}^T
-\sigma_t q_t  f({\bz}_{t}(\bs))  \bigg) \bigg]
\\ & \leq
\tfrac{1}{2}
\E_{(\bz, \bz')} \E_\sigma
\bigg[ \exp\bigg( 2 \lambda
\sup_{f \in \cF} \sum_{t = 1}^T
\sigma_t  q_t f({\bz'}_{t}(\bs))  \bigg) \bigg]
 + \tfrac{1}{2}
\E_{(\bz, \bz')} \E_\sigma
\bigg[ \exp\bigg( 2 \lambda
\sup_{f \in \cF} \sum_{t = 1}^T
\sigma_t  q_t f({\bz}_{t}(\bs))  \bigg) \bigg]
\\ &=
\E_{\bz \sim T(\bp)} \E_\sigma
\bigg[ \exp\bigg( 2 \lambda
\sup_{f \in \cF} \sum_{t = 1}^T
\sigma_t  q_t f({\bz}_{t}(\bs))  \bigg) \bigg] 
\ignore{\\ &=
\E_{\bv \sim T(\P)} \E_\sigma
\bigg[ \exp\bigg( \lambda
\sup_{f \in \cF} \sum_{t = 1}^T
\sigma_t q_t g_f(\bv_t(\bs))  \bigg) \bigg]},
\end{align*}
where for the second inequality we used Young's inequality and
for the last equality we used symmetry.
Given $\bz$ let $C$ denote the minimal $\alpha$-cover
with respect to the $\bq$-weighted $\ell_1$-norm of
$\cF$ on $\bz$. Then, the following bound holds
\begin{align*}
\sup_{f \in \cF} \sum_{t = 1}^T
\sigma_t q_t f(\bz_t(\bs)) \leq
\max_{\bc \in C} \sum_{t = 1}^T
\sigma_t q_t \bc_t(\bs) + \alpha.
\end{align*}
\ignore{Therefore, it follows that for a fixed $\bz$,}
By the monotonicity of the exponential function,
\begin{align*}
\E_\sigma
\bigg[ \exp\bigg( 2\lambda
\sup_{f \in \cF} \sum_{t = 1}^T
\sigma_t q_t f(\bz_t(\bs)) \bigg) \bigg]
& \leq \exp(2 \lambda \alpha)
\E_\sigma
\bigg[\exp\bigg( 2\lambda
\max_{\bc \in C} \sum_{t = 1}^T
\sigma_t q_t \bc_t(\bs) \bigg) \bigg]
\\ &\leq
\exp(2\lambda \alpha) \sum_{\bc \in C}
\E_\sigma
\bigg[\exp\bigg( 2\lambda
\sum_{t = 1}^T
\sigma_t q_t \bc_t(\bs) \bigg) \bigg].
\end{align*}
Since $\bc_t(\bs)$ depends only on $\sigma_1, \ldots, \sigma_{T-1}$,
by Hoeffding's bound,
\begin{align*}
\E_\sigma
\bigg[\exp\bigg( 2\lambda
\sum_{t = 1}^T
\sigma_t q_t \bc_t(\bs) \bigg) \bigg]
 & =
\E \bigg[
\exp\bigg( 2\lambda
\sum_{t = 1}^{T-1}
\sigma_t q_t \bc_t(\bs) \bigg)
\E_{\sigma_T}
\bigg[\exp\bigg( 2\lambda
\sigma_T q_T \bc_T(\bs) \bigg) \bigg| \bs_1^{T-1} \bigg] \bigg]
\\ &\leq
\E \bigg[
\exp\bigg( 2\lambda
\sum_{t = 1}^{T-1}
\sigma_t q_t \bc_t(\bs) \bigg)
\exp(2 \lambda^2 q_T^2 M^2) \bigg]
\end{align*}
and iterating this inequality and using the union bound, we obtain the following:
\begin{equation*}
\P\bigg(\sup_{f \in \cF} \sum_{t = 1}^T
q_t (\E[f(Z_{t}) | \bZ_1^{t-1}]  - f(Z_{t})) \geq \e \bigg)\\
\leq \mspace{-10mu} \E_{\bv \sim T(\bp)} \mspace{-10mu} [ \cN_1(\alpha, \cG, \bv) ]
\exp\Big(\!\! -\lambda(\e - 2\alpha) + 2 \lambda^2 M^2 \|\bq\|_2^2 \Big).
\end{equation*}
Optimizing over $\lambda$ completes the proof.  
\end{proof}

An immediate consequence of Theorem~\ref{th:bound} is the following
result.
\begin{corollary}
\label{cor:bound}
For any $\delta > 0$, with probability at least $1 - \delta$, for all
$f \in \cF$ and all $\alpha>0$,
\begin{align*}
\E[f(Z_{T+1})|\bZ_1^T] \leq \sum_{t = 1}^T q_t f(Z_{t}) + \disc(\bq)
+ 2 \alpha + M \| \bq \|_2
\sqrt{2\log\frac{\E_{\bv \sim T(\P)}[ \cN_1(\alpha, \cF, \bv) ]}{\delta}}.
\end{align*}
\end{corollary}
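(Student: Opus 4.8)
The plan is to turn the exponential tail bound of Theorem~\ref{th:bound} into a high-probability guarantee by inverting it for $\e$ as a function of the confidence level $\delta$. Concretely, for a fixed $\alpha > 0$, set
\begin{equation*}
\e(\alpha) = 2\alpha + M \|\bq\|_2 \sqrt{2 \log\frac{\E_{\bv \sim T(\bp)}[\cN_1(\alpha, \cF, \bv)]}{\delta}}.
\end{equation*}
Plugging $\e = \e(\alpha)$ into the right-hand side of Theorem~\ref{th:bound} makes the exponent equal to $-\log\big(\E_{\bv}[\cN_1(\alpha, \cF, \bv)]/\delta\big)$, so the exponential factor becomes $\delta / \E_{\bv}[\cN_1(\alpha, \cF, \bv)]$ and the whole bound collapses to exactly $\delta$. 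First I would verify that this choice is admissible in the sense of the theorem, i.e. that $\e(\alpha) > 2\alpha > 0$: since $\cN_1(\alpha, \cF, \bv) \geq 1$ for every tree $\bv$, we have $\E_{\bv}[\cN_1(\alpha, \cF, \bv)] \geq 1 > \delta$ for $\delta < 1$, so the logarithm is positive, the square-root term is strictly positive, and hence $\e(\alpha) > 2\alpha$ as required.

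With this choice, Theorem~\ref{th:bound} gives $\P\big(\Phi(\bZ_1^T) - \disc(\bq) \geq \e(\alpha)\big) \leq \delta$, so on the complementary event, of probability at least $1 - \delta$, we have $\Phi(\bZ_1^T) \leq \disc(\bq) + \e(\alpha)$. Unwinding the definition $\Phi(\bZ_1^T) = \sup_{f \in \cF}\big(\E[f(Z_{T+1})|\bZ_1^T] - \sum_{t=1}^T q_t f(Z_t)\big)$, the supremum automatically supplies the ``for all $f \in \cF$'' quantifier: for every $f \in \cF$,
\begin{equation*}
\E[f(Z_{T+1})|\bZ_1^T] - \sum_{t=1}^T q_t f(Z_t) \leq \Phi(\bZ_1^T) \leq \disc(\bq) + \e(\alpha),
\end{equation*}
which is precisely the asserted inequality once $\e(\alpha)$ is expanded and the sum is moved to the right.

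The one point that needs genuine care, and the only real obstacle, is the ``for all $\alpha > 0$'' quantifier, which superficially appears to require a union bound over a continuum of values. The key observation that dissolves this difficulty is that the expected covering number $\E_{\bv \sim T(\bp)}[\cN_1(\alpha, \cF, \bv)]$ depends only on the process distribution $\bp$ and not on the realized sample $\bZ_1^T$; therefore $\alpha \mapsto \e(\alpha)$ is a \emph{deterministic} function. Consequently I would invoke Theorem~\ref{th:bound} a single time, at the deterministic minimizer $\alpha^\star$ of $\e(\alpha)$ (which exists since $\e(\alpha) \to \infty$ as $\alpha \to \infty$ through the $2\alpha$ term). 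On the resulting event of probability at least $1 - \delta$ we then have $\Phi(\bZ_1^T) - \disc(\bq) \leq \e(\alpha^\star) = \min_{\alpha > 0} \e(\alpha) \leq \e(\alpha)$ for \emph{every} $\alpha > 0$ at once, so the stated bound holds for all $\alpha$ on one and the same high-probability event. Should the infimum fail to be attained (e.g.\ for a finite class, where $\e(\alpha)$ may keep decreasing as $\alpha \to 0^+$), the same conclusion follows from continuity of measure applied to a minimizing sequence $\alpha_n$ with $\e(\alpha_n) \downarrow \inf_{\alpha>0}\e(\alpha)$. Everything else is routine algebraic inversion of the tail estimate together with the trivial passage from $\Phi$ to individual $f \in \cF$.
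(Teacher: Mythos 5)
Your proposal is correct and is exactly the argument the paper intends: the authors give no explicit proof, calling the corollary ``an immediate consequence'' of Theorem~\ref{th:bound}, and the standard inversion you perform---choosing $\e(\alpha) = 2\alpha + M\|\bq\|_2\sqrt{2\log(\E_{\bv}[\cN_1(\alpha,\cF,\bv)]/\delta)}$ so that the tail bound collapses to $\delta$, then unwinding $\Phi(\bZ_1^T)$ to obtain the statement for all $f \in \cF$---is that consequence. Your additional care with the ``for all $\alpha>0$'' quantifier, observing that $\alpha \mapsto \e(\alpha)$ is deterministic so a single application of the theorem at the (near-)minimizer suffices, correctly fills in a point the paper leaves implicit.
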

We are not aware of other finite sample bounds in a non-stationary
non-mixing case. In fact, our bounds appear to be novel even in the
stationary non-mixing case.
 
While \cite{RakhlinSridharanTewari2015} give high probability bounds
for a different quantity than the quantity of interest
in time series prediction,
\begin{align}
\label{eq:max_martingale_difference}
\sup_{f \in \cF} \Bigg( \sum_{t = 1}^T
q_t (\E[f(Z_{t}) | \bZ_1^{t-1}]  -  f(Z_{t})) \Bigg),
\end{align}
their analysis of this quantity can also be used in our context to derive
high probability bounds
for $\Phi(\bZ_1^T) - \disc(\bq)$. However, this approach results in bounds
that are in terms of purely combinatorial notions\ignore{ of sequential
complexity} such as maximal sequential covering numbers $\cN_1(\alpha, \cF)$.
While at first sight, this may seem as a minor technical detail, the
distinction is crucial in the setting of time series prediction.
Consider the following example. Let $Z_1$ be drawn from a uniform
distribution on $\set{0, 1}$ and $Z_t \sim p(\cdot | Z_{t-1})$ with
$p(\cdot|y)$ being a distribution over $\set{0,1}$ such that
$p(x | y) = 2/3$ if $x = y$ and $1/3$ otherwise. Let $\cG$ be defined
by $\cG = \set{g(x) = \I_{x \geq \theta} \colon \theta \in [0,1]}$.
Then, one can check that
$\E_{\bv \sim T(\P)}[ \cN_1(\alpha, \cG, \bv) ] = 2$, while
$\cN_1(\alpha, \cG) \geq 2^T$. The data-dependent bounds of
Theorem~\ref{th:bound} and Corollary~\ref{cor:bound} highlight the
fact that the task of time series prediction lies in between
the familiar i.i.d.\ scenario and adversarial on-line learning
setting. 

However, the key component of our learning guarantees is the
discrepancy term $\disc(\bq)$.  Note that in the general non-stationary
case, the bounds of Theorem~\ref{th:bound} may not converge to zero
due to the discrepancy between the target and sample
distributions. This is also consistent with the lower bounds of
\citet{BarveLong1996} that we discuss in more detail in
Section~\ref{sec:discrepancy}.  However, convergence can be
established in some special cases.  In the i.i.d.\ case our bounds
reduce to the standard covering numbers learning guarantees. In the
drifting scenario, with $\bZ_1^T$ being a sequence of independent
random variables, our discrepancy measure coincides with the one used
and studied in \citep{MohriMunozMedina2012}.  Convergence can also be
established for weakly stationary stochastic processes.
However, as we show in Section~\ref{sec:discrepancy}, the most
important advantage of our bounds is that the discrepancy measure we
use can be estimated from data.

We now show that expected sequential covering numbers can be
upper bounded in terms of the sequential Rademacher complexity.
While generalization bounds in terms of sequential Rademacher complexity
are not as tight as bounds in terms expected sequential covering numbers
since the former is a purely combinatorial notion, the analysis
of sequential Rademacher complexity may be simpler for certain
hypothesis classes such as for instance kernel-based hypothesis that
we study in Section~\ref{sec:kernel-classes}.
We have the following extension of Sudakov's
Minoration Theorem to the setting of sequential complexities.

\begin{theorem}
\label{th:sudakov}
The following bound holds:
\begin{align*}
\sup_{\alpha > 0} \frac{\alpha}{2} \sqrt{\log \cN_2(2\alpha, \cF)}
\leq 3 \sqrt{\frac{\pi}{2} \log T}
\sR_T(\cF),
\end{align*}
whenever $\cN_2(2\alpha, \cF) < \infty$.
\end{theorem}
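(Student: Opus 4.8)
The plan is to follow the classical route to Sudakov's minoration --- pass from the covering number to a well-separated packing, then lower-bound the expected supremum of the associated process over that packing --- but to carry it out in the sequential (tree) setting. Fix $\alpha>0$ with $N:=\cN_2(2\alpha,\cF)<\infty$. Because the maximal covering number is a supremum of integers, I can fix a single tree $\bz$ on which $\cN_2(2\alpha,\cF,\bz)=N$ and work on it throughout, invoking the outer $\sup_\bz$ in the definition of $\sR_T(\cF)$ only at the end. From the fact that $\bz$ cannot be covered at scale $2\alpha$ by fewer than $N$ trees I would extract functions $f_1,\dots,f_N\in\cF$ that are pairwise $2\alpha$-separated on $\bz$, and study the variables $X_i=\sum_{t=1}^T\sigma_t q_t f_i(z_t(\bs))$ as functions of the Rademacher path $\bs$. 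Two structural facts drive the argument: each $X_i$ is a martingale transform, so $\E_\bs X_i=0$; and since $\sigma_s$ has conditional mean zero given $\sigma_1,\dots,\sigma_{s-1}$ while $z_t(\bs)$ is $\sigma_1^{t-1}$-measurable, the cross terms cancel and $\E_\bs(X_i-X_j)^2=\E_\bs\sum_t q_t^2\,(f_i-f_j)^2(z_t(\bs))$. Thus the separation of the $f_i$ controls the $L^2$ spread of the family, and it remains to show that $\E_\bs\max_i X_i$ is at least a constant times $\alpha\sqrt{\log N}/\sqrt{\log T}$.

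For this core estimate I would prove a sequential (martingale) analogue of Sudakov's lemma by a backward induction on the depth, peeling one tree level at a time. At each node the conditional increment is a single signed term $\sigma_t q_t f(z_t(\bs))$, for which a one-dimensional anti-concentration bound lower-bounds the contribution; comparing the Rademacher sign to a Gaussian at this step is where I expect the constant $\sqrt{\pi/2}$ to enter (through $\E|\eta|=\sqrt{2/\pi}$ for a standard normal $\eta$). Aggregating these per-level contributions across the $T$ levels, and paying for the union over levels needed to convert an $L^2$ spread into a uniform lower bound on the maximum of $N$ variables, is where the factor $\sqrt{\log T}$ should appear. Summing the recursion and taking the supremum over $\alpha$ then yields the stated bound, with the absolute constant $3$ absorbed from the Sudakov step.

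The main obstacle is that one cannot reduce to the classical theorem. Because navigation down the tree is slaved to the very signs $\sigma_t$ that multiply the terms, the family $\{X_i\}$ is a nonlinear functional of $\bs$ and is not a jointly Gaussian (or even exchangeable) process, so Slepian's inequality and the ordinary Sudakov comparison are unavailable; in fact the sequential Rademacher complexity is not monotone in its weight vector $\bq$ (increasing a weight can reroute the tree and lower the value), which rules out the naive device of replacing Gaussian magnitudes by their maximum. Compounding this, the sequential cover is defined path-by-path, so the packing only guarantees that each pair $f_i,f_j$ is separated along \emph{some} path, and this per-path separation must be shown to survive the averaging over $\bs$ and to aggregate correctly through the level-by-level recursion. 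Getting the martingale anti-concentration to interact cleanly with the tree navigation --- and tracking the constants down to $3\sqrt{(\pi/2)\log T}$ --- is where the real work lies.
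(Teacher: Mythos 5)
There is a genuine gap: the entire proof hinges on the "core estimate" that you defer — lower-bounding $\E_{\bs}\max_i X_i$ for the Rademacher-path process $X_i=\sum_t \s_t q_t f_i(z_t(\bs))$ — and the mechanism you sketch for it (a bespoke backward induction over tree levels using one-dimensional anti-concentration, with $\sqrt{\log T}$ coming from a union over levels) is not an argument but a hope. You correctly diagnose why the classical route fails (the process is not Gaussian, Slepian/Sudakov--Fernique are unavailable for Bernoulli processes), but a direct Sudakov-type minoration for Bernoulli processes is notoriously hard even in the i.i.d.\ setting, and the tree navigation only makes it worse; nothing in your outline indicates how the per-node anti-concentration bounds would aggregate into a $\tfrac{\alpha}{2}\sqrt{\log N}$ lower bound on a maximum of $N$ dependent variables. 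As written, the proposal identifies the obstruction and then stops exactly where the proof has to begin.

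The paper's proof avoids this entirely by a Gaussian randomization trick that your plan is missing. One introduces the auxiliary complexity $\mathfrak{G}_T^{\text{seq}}(\cF,\bz)=\E_{\bg,\bs}\big[\sup_f \sum_t q_t\s_t\gamma_t f(z_t(\bs))\big]$ with independent standard Gaussians $\gamma_t$ riding on top of the Rademacher path variables. Conditionally on $\bs$ the tree is frozen, so the process indexed by a $2\alpha$-separated packing $V_0$ (extracted from a minimal cover exactly as you propose) \emph{is} jointly Gaussian in $\bg$ with pairwise $L^2$ distances at least $\alpha^2$; Sudakov--Fernique then gives $\E\big[\sup_{\bv\in V_0}(\cdot)\big]\geq \tfrac{\alpha}{2}\sqrt{\log|V_0|}$ path by path, which is the lower bound you could not produce. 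The factor $3\sqrt{(\pi/2)\log T}$ does not come from anti-concentration or a union over levels: it is the standard Gaussian-versus-Rademacher comparison of \cite{LedouxTalagrand1991} applied on the upper-bound side to strip off the $\gamma_t$'s, after which the doubled signs $\s_t\e_t$ are absorbed into a single sequential Rademacher complexity by re-indexing the tree ($z_t(\bs)=z_t^{\be}(\be\bs)$). Your preliminary observations (working on a fixed near-maximizing tree, passing from cover to packing, the vanishing of cross terms in $\E_{\bs}(X_i-X_j)^2$) are all sound and consistent with the paper, but without the Gaussian multipliers the argument does not close.
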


\begin{proof}
We consider the following Gaussian-Rademacher sequential complexity:
\begin{align}
\label{eq:gauss-rademacher}
\mathfrak{G}_T^\text{seq}(\cF, \bz) = \E_{\bg, \bs}\Bigg[
\sup_{f \in \cF} \Big( \sum_{t=1}^T q_t \s_t \gamma_t f(z_t(\bs)) \Big)\Bigg],
\end{align}
where $\bs$ is an independent sequence of Rademacher random variables, $\bg$
is an independent sequence of standard Gaussian random variables and
$\bz$ is a complete binary tree of depth $T$ with values in $\cZ$.

Observe that if $V$ is any $\alpha$-cover with respect to
the $\bq$-weighted $\ell_2$-norm of $\cF$ on $\bz$. Then the following
holds by independence of $\bg$ and $\bs$:
\begin{align*}
\mathfrak{G}^\text{seq}(\cF, \bz) \geq
\E_{\bg} \E_{\bs}\Bigg[
\sup_{\bv \in V} \Big( \sum_{t=1}^T q_t \s_t \gamma_t \bv_t(\bs) \Big)\Bigg]
=
\E_{\bs} \E_{\bg}\Bigg[
\sup_{\bv \in V} \Big( \sum_{t=1}^T q_t \s_t \gamma_t \bv_t(\bs) \Big)\Bigg].
\end{align*}
Observe that $V$ is also  $2\alpha$-cover with respect to
the $\bq$-weighted $\ell_2$-norm of $\cF$ on $\bz$.
We can obtain a smaller $2\alpha$-cover $V_0$ from $V$
be eliminating $\bv$s that are $\alpha$ close to some other $\bv' \in V$.
Since $V$ is finite, let $V = \set{\bv^1, \ldots, \bv^{|V|}}$,
and for each $\bv^i$ we delete $\bv^j \in \set{\bv_{i+1}, \ldots, \bv^{|V|}}$
such that
\begin{align*}
\Bigg( \sum_{t=1}^T \Big(q_t \bv^{i}_t(\bs)
 -  q_t {\bv}^{j}_t(\bs)\Big)^2 \Bigg)^{1/2} \leq
\alpha.
\end{align*}
It is straightforward to verify that $V_0$ is
$2\alpha$-cover with respect to
the $\bq$-weighted $\ell_2$-norm of $\cF$ on $\bz$.
Furthermore, it follows that for a fixed $\bs$, the following
holds: 
\begin{align*}
\E_{\bg}\Bigg[\Big( \sum_{t=1}^T q_t \s_t \gamma_t \bv_t(\bs)
 - \sum_{t=1}^T q_t \s_t \gamma_t {\bv'}_t(\bs)\Big)^2 \Bigg]
\geq \alpha^2.
\end{align*}
for any $\bv',\bv \in V_0$. Let $Z_i, i = 1, \ldots, |V_0|$ be a sequence
of independent Gaussian random variables with $\E[Z_i] = 0$ and
$\E[Z_i^2] = \alpha^2 / 2$. Observe that $\E[(Z_i - Z_j)] = \alpha^2$ and
hence by Sudakov-Fernique inequality it follows that
\begin{align*}
\E_{\bs} \E_{\bg}\Bigg[
\sup_{\bv \in V} \Big( \sum_{t=1}^T q_t \s_t \gamma_t \bv_t(\bs) \Big)\Bigg]
 &\geq
\E_{\bs} \E_{\bg}\Bigg[
\sup_{\bv \in V_0} \Big( \sum_{t=1}^T q_t \s_t \gamma_t \bv_t(\bs) \Big)\Bigg]
\\ &\geq
\E\Big[\max_{i=1,\ldots,|V_0|} Z_i \Big]
\\ &\geq \frac{\alpha}{2} \sqrt{\log |V_0|},
\end{align*}
where the last inequality is the standard result for Gaussian random
variables. Therefore, we conclude that
$\mathfrak{G}^\text{seq}(\cF, \bz) \geq 
\sup_{\alpha > 0} \frac{\alpha}{2} \sqrt{\log \cN_2(2\alpha, \cF, \bz)}$.
On the other hand, using standard properties of Gaussian complexity
\cite{LedouxTalagrand1991}:
\begin{align*}
\mathfrak{G}_T^\text{seq}(\cF, \bz)
\leq
3 \sqrt{\frac{\pi}{2} \log T}
\E_{\be, \bs}\Bigg[
\sup_{f \in \cF} \Big( \sum_{t=1}^T q_t \s_t \e_t f(z_t(\bs)) \Big)\Bigg],
\end{align*}
where $\be$ is an independent sequence of Rademacher variables.
We re-arrange $\bz$ into $\bz^{\be}$ so that $z_t(\bs) = z^{\be}_t(\be\bs)$
for all $\bs \in \set{\pm 1}^T$ and it follows that
\begin{align*}
\E_{\be, \bs}\Bigg[
\sup_{f \in \cF} \Big( \sum_{t=1}^T q_t \s_t \e_t f(z_t(\bs)) \Big)\Bigg]
& = \E_{\be, \bs}\Bigg[
\sup_{f \in \cF} \Big( \sum_{t=1}^T q_t \s_t \e_t f(z^{\be}_t(\be\bs)) \Big)\Bigg]
\\ &\leq \sup_{\bz}
\E_{\be, \bs}\Bigg[
\sup_{f \in \cF} \Big( \sum_{t=1}^T q_t \s_t \e_t f(z_t(\be\bs)) \Big)\Bigg] \\
&=
\E_{\bs}\Bigg[
\sup_{f \in \cF} \Big( \sum_{t=1}^T q_t \s_t f(z_t(\bs)) \Big)\Bigg].
\end{align*}
Therefore, the following inequality holds
\begin{align*}
\sup_{\alpha > 0} \frac{\alpha}{2} \sqrt{\log \cN_2(2\alpha, \cF, \bz)}
\leq 3 \sqrt{\frac{\pi}{2} \log T}
\sR_T(\cF)
\end{align*}
and conclusion of this theorem follows by taking supremum with
respect to $\bz$ on both sides of this inequality.
\end{proof}

Observe that since 
$\E_{\bv \sim T(\P)}[ \cN_1(\alpha, \cG, \bv) ]
\leq \E_{\bv \sim T(\P)}[ \cN_2(\alpha, \cG, \bv) ] \leq
\cN_2(\alpha, \cG)$. Then setting $\alpha = \|\bq\|_2 / 2$,
applying Corollary~\ref{cor:bound}
and Theorem~\ref{th:sudakov},
and using the fact that $\sqrt{x+y} \leq \sqrt{x} + \sqrt{y}$
for $x,y>0$, yields the following result.

\begin{corollary}
\label{cor:rad-bound}
For any $\delta > 0$, with probability at least $1 - \delta$, for all
$f \in \cF$ and all $\alpha>0$,
\begin{align*}
\E[f(Z_{T+1})|\bZ_1^T] \leq \sum_{t = 1}^T q_t f(Z_{t}) + \disc(\bq)
+ \|\bq\|_2 +
6 M  \sqrt{\pi \log T} \sR_T(\cF) + M \|\bq\|_2 \sqrt{2 \log \frac{1}{\d}}.
\end{align*}
\end{corollary}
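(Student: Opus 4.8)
The plan is to treat this as a packaging of the two results already established, Corollary~\ref{cor:bound} and Theorem~\ref{th:sudakov}, rather than as a fresh argument. I would start from Corollary~\ref{cor:bound}, which already bounds $\E[f(Z_{T+1})|\bZ_1^T]$ by the weighted empirical average, the discrepancy $\disc(\bq)$, the slack $2\alpha$, and a confidence term involving $\log \E_{\bv \sim T(\P)}[\cN_1(\alpha,\cF,\bv)]$. The entire task is to trade the data-dependent expected $\ell_1$ covering number for a quantity expressed through the sequential Rademacher complexity $\sR_T(\cF)$, and then to choose the free parameter $\alpha$ so that the resulting constants collapse into the stated form.

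The first step is to pass from the expected $\ell_1$ covering number to the maximal $\ell_2$ covering number through the chain
\begin{align*}
\E_{\bv \sim T(\P)}[\cN_1(\alpha,\cF,\bv)] \leq \E_{\bv \sim T(\P)}[\cN_2(\alpha,\cF,\bv)] \leq \cN_2(\alpha,\cF),
\end{align*}
where the first inequality is the standard monotonicity of sequential covering numbers in the order index (which the $\bq$-weighted, dual-norm normalization built into the definition is precisely designed to preserve, reducing to the power-mean inequality in the uniform-weight case), and the second inequality holds because $\cN_2(\alpha,\cF)=\sup_{\bz}\cN_2(\alpha,\cF,\bz)$ dominates each tree's covering number and hence the expectation. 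I would then fix $\alpha = \|\bq\|_2/2$, so that the slack $2\alpha$ becomes exactly the $\|\bq\|_2$ summand in the statement, and apply $\sqrt{x+y}\leq\sqrt{x}+\sqrt{y}$ to split
\begin{align*}
\sqrt{2\log\tfrac{\cN_2(\alpha,\cF)}{\delta}} \leq \sqrt{2\log\cN_2(\alpha,\cF)} + \sqrt{2\log\tfrac{1}{\delta}},
\end{align*}
which peels off the final $M\|\bq\|_2\sqrt{2\log(1/\delta)}$ term verbatim.

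It then remains to control $M\|\bq\|_2\sqrt{2\log\cN_2(\alpha,\cF)}$, and here I would invoke the sequential Sudakov minoration of Theorem~\ref{th:sudakov}: rearranging $\frac{\beta}{2}\sqrt{\log\cN_2(2\beta,\cF)}\leq 3\sqrt{\frac{\pi}{2}\log T}\,\sR_T(\cF)$ at the appropriate scale gives $\sqrt{\log\cN_2(\alpha,\cF)} = O\big(\|\bq\|_2^{-1}\sqrt{\log T}\,\sR_T(\cF)\big)$, so that the prefactor $\|\bq\|_2$ cancels and, after using $\sqrt{2}\cdot\sqrt{\pi/2}=\sqrt{\pi}$ to consolidate the constant, one is left with a term of the form $cM\sqrt{\pi\log T}\,\sR_T(\cF)$. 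The only point that requires genuine care is the bookkeeping of the cover radius, namely reconciling the radius $\alpha$ appearing in Corollary~\ref{cor:bound} with the radius $2\beta$ appearing in Theorem~\ref{th:sudakov}, together with the tracking of the multiplicative constants through the two inequalities; there is no substantive obstacle, since the corollary is a direct consequence of results already in hand.
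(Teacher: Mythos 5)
Your proposal is correct and follows essentially the same route as the paper: the authors likewise chain $\E_{\bv \sim T(\P)}[\cN_1(\alpha,\cF,\bv)] \leq \E_{\bv \sim T(\P)}[\cN_2(\alpha,\cF,\bv)] \leq \cN_2(\alpha,\cF)$, set $\alpha = \|\bq\|_2/2$, apply Corollary~\ref{cor:bound} together with Theorem~\ref{th:sudakov}, and split the square root via $\sqrt{x+y}\leq\sqrt{x}+\sqrt{y}$. The one delicate point you flag---reconciling the cover radius in Corollary~\ref{cor:bound} with the radius $2\alpha$ in Theorem~\ref{th:sudakov} and tracking the resulting multiplicative constant---is indeed where the absolute constant in front of $M\sqrt{\pi\log T}\,\sR_T(\cF)$ is determined, and a careful accounting there is all that separates your sketch from the paper's (tersely stated) derivation.
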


As we have already mentioned in Section~\ref{sec:prelim},
sequential Rademacher complexity can be further upper bounded
in terms of sequential metric entropy, sequential Littlestone dimension,
maximal sequential covering numbers and other combinatorial
notions of sequential complexity of $\cF$. These notions have
been extensively studied in the past \cite{RakhlinSridharanTewari2015b}.

\ignore{We will use Corollary~\ref{cor:rad-bound} in Section~\ref{sec:kernel-classes}
to study generalization properties of kernel-based hypothesis classes.}

We conclude this section by observing that our results also
hold in the case when $q_t = q_t(f, X_{T+1}, Z_t)$, which is
a common heuristic used in some algorithms for
forecasting non-stationary time series \citep{Lorenz1969,ZhaoGiannakis2016}.
We formalize this result in the following theorem.
\begin{theorem}
\label{cor:bound-2}
Let $q \colon \cF \times \cX \times \cZ \mapsto [-B,B]$ and suppose
$X_{T+1}$ is $\bZ_1^T$-measurable. Then,
for any $\delta > 0$, with probability at least $1 - \delta$, for all
$f \in \cF$ and all $\alpha>0$,
\begin{align*}
\E[f(Z_{T+1})|\bZ_1^T] \leq \frac{1}{T}\sum_{t = 1}^T q(f, X_{T+1}, Z_t)
f(Z_{t}) + \disc(q) + 2 \alpha + 2 MB
\sqrt{2\frac{\log\frac{\E_{\bv \sim T(\P)}[ \cN_1(\alpha, \cG, \bv) ]}{\delta}}{T}},
\end{align*}
where $\disc(q)$ is defined by
\begin{align}
\disc(q) = \sup_{f \in \cF} \bigg( \E [f(Z_{T + 1})|\bZ_1^T] -
\sum_{t = 1}^T \E_{Z_t} [q(f, X_{T+1}, Z_{t}) f(Z_{t}) | \bZ_1^{t-1}] \bigg),
\end{align}
where $\cG = \set{(x,z) \mapsto q(f, x, z) f(z) \colon f \in \cF}$.
\end{theorem}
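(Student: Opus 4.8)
The plan is to reduce the statement to Theorem~\ref{th:bound} (equivalently Corollary~\ref{cor:bound}) by folding the data-dependent weights into the function class and running the argument with uniform weights $q_t = 1/T$. Since $X_{T+1}$ is $\bZ_1^T$-measurable, I would fix its value and regard $g_f(z) = q(f, X_{T+1}, z)\, f(z)$ as an element of $\cG$; because $|q| \le B$ and $|f| \le M$, every $g_f$ satisfies $|g_f| \le MB$. The empirical term $\frac{1}{T}\sum_{t=1}^T q(f, X_{T+1}, Z_t) f(Z_t)$ is then just $\sum_{t=1}^T q_t\, g_f(Z_t)$ with $q_t = 1/T$, so that $\|\bq\|_2 = 1/\sqrt{T}$, while $\disc(q)$ plays the role of the population counterpart of this weighted average, obtained by replacing each $g_f(Z_t)$ by $\E[g_f(Z_t)\mid \bZ_1^{t-1}]$ (with the normalization $1/T$ understood).

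The first substantive step is the reduction already used in Theorem~\ref{th:bound}: the difference of two suprema is bounded by the supremum of the difference, and the future-target term $\E[f(Z_{T+1})\mid \bZ_1^T]$ occurs identically in the quantity of interest and in $\disc(q)$, so it cancels, leaving
\begin{equation*}
\sup_{f\in\cF}\Big(\E[f(Z_{T+1})\mid\bZ_1^T] - \frac{1}{T}\sum_{t=1}^T g_f(Z_t)\Big) - \disc(q)
\le \sup_{f \in \cF} \frac{1}{T}\sum_{t=1}^T \big(\E[g_f(Z_t)\mid\bZ_1^{t-1}] - g_f(Z_t)\big).
\end{equation*}
From here I would replay the proof of Theorem~\ref{th:bound} verbatim with $g_f$ in place of $f$: bound the tail by Markov's inequality applied to the exponential moment, use the decoupled tangent-sequence identity $\E[g_f(Z_t)\mid\bZ_1^{t-1}] = \E[g_f(Z'_t)\mid\bZ_1^T]$ together with Jensen's inequality, symmetrize via Lemma~\ref{lm:sym}, split the resulting supremum with Young's inequality, pass to a minimal $\alpha$-cover of $\cG$ in the $\bq$-weighted $\ell_1$ norm, and apply Hoeffding's bound coordinate by coordinate. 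Since each cover element is bounded by $MB$ and $\|\bq\|_2 = 1/\sqrt{T}$, the iterated Hoeffding step contributes $\exp(2\lambda^2 (MB)^2/T)$; optimizing over $\lambda$ yields the tail $\E_{\bv\sim T(\P)}[\cN_1(\alpha,\cG,\bv)]\exp\big(-(\e-2\alpha)^2 T/(8(MB)^2)\big)$, which, after inverting in $\delta$, produces exactly the $2\alpha$ and $2MB\sqrt{2\log(\cdot)/T}$ terms.

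The one genuinely new difficulty is that $g_f$ depends on $X_{T+1}$, which is measurable only with respect to the full history $\bZ_1^T$; consequently $g_f(Z_t)$ is not adapted to the filtration $(\bZ_1^t)_t$, and the tangent-sequence identities that drive the argument are not immediate. I expect this to be the main obstacle. The way I would handle it is to condition on $X_{T+1}$: once its value $x$ is frozen, $g_f^x(z) = q(f, x, z)\, f(z)$ is a function of $z$ alone, adaptedness is restored, and every step above goes through for the conditional law. The sequential covering number of the frozen class is dominated, uniformly in $x$, by the covering number $\cN_1(\alpha, \cG, \bv)$ of the two-argument class $\cG$ appearing in the statement, and the bound $|g_f^x| \le MB$ holds regardless of $x$; hence the conditional tail estimate is uniform in $x$ and transfers to the measurable $X_{T+1}$. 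Verifying that freezing $X_{T+1}$ does not corrupt the tangent construction, and that the two-argument covering number is the correct uniform majorant, is the delicate part of the proof.
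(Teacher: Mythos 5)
The paper states Theorem~\ref{cor:bound-2} without proof, so there is nothing to compare against line by line; your reduction---fold the weights into the composite class $\cG$, take uniform weights $1/T$ so that $\|\bq\|_2 = 1/\sqrt{T}$, use the bound $|q(f,x,z)f(z)| \leq MB$, and replay the proof of Theorem~\ref{th:bound}---is clearly the intended route, and you correctly isolate the one genuine difficulty: $g_f(z) = q(f, X_{T+1}, z) f(z)$ is not adapted because $X_{T+1}$ depends on the whole of $\bZ_1^T$.

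Your resolution of that difficulty, however, does not work as stated. First, ``conditioning on $X_{T+1}$'' conflates two different operations: substituting a frozen value $x$ into the function argument (harmless) versus conditioning the probability law on the event $\set{X_{T+1} = x}$ (which changes the joint law of $\bZ_1^T$ and destroys, rather than restores, the martingale and tangent-sequence structure --- under $\P(\cdot \mid X_{T+1} = x)$ the quantity $\E[g_f^x(Z_t) \mid \bZ_1^{t-1}]$ is no longer the term $\E_{Z_t}[q(f,X_{T+1},Z_t)f(Z_t)\mid \bZ_1^{t-1}]$ appearing in $\disc(q)$, which is taken under the original law with $x$ merely plugged in as a parameter). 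Second, even under the benign reading, what you obtain is a tail bound $\P(E_x) \leq \delta$ for each \emph{fixed} $x$, whereas the event you need to control is $E_{X_{T+1}}$ with a random, history-dependent index; $\P(E_{X_{T+1}})$ is not bounded by $\sup_x \P(E_x)$, since $X_{T+1}(\omega)$ can select, for each realization $\omega$, an $x$ for which $\omega \in E_x$. The sentence ``the conditional tail estimate is uniform in $x$ and transfers to the measurable $X_{T+1}$'' is exactly the missing step. The repair is to push the supremum over $x \in \cX$ \emph{inside} the empirical process --- bound the quantity by $\sup_{f \in \cF}\sup_{x \in \cX} \frac{1}{T}\sum_t \big(\E[q(f,x,Z_t)f(Z_t)\mid\bZ_1^{t-1}] - q(f,x,Z_t)f(Z_t)\big)$, whose summands are genuine martingale differences for each fixed pair $(f,x)$ --- and then run the tangent-sequence, symmetrization, and covering argument for this doubly-indexed family, with the cover taken on trees carrying both coordinates. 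That is precisely why the bound is stated in terms of $\cN_1(\alpha, \cG, \bv)$ for the two-argument class $\cG$ rather than in terms of a supremum over $x$ of covering numbers of the frozen classes $\cG_x$; your uniform majorization of the latter by the former does not substitute for carrying the $x$-dependence through the symmetrization itself.
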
 

We illustrate this result with some examples. Consider for instance
a Gaussian Markov process with $\bP_t(\cdot|\bZ_1^T)$ being
a normal distribution with mean $Z_{t-1}$ and unit variance.
Suppose $f(h,z) = \ell(\|h(x) - y\|_2)$ for some function $\ell$.
We let
$q(h, x', (x,y)) = \exp(-\frac{1}{2}\|y-h(x)-x'+h(x')\|_2^2)/
\exp\Big(-\frac{1}{2}\|x-y\|_2^2\Big)$ and
observe that for any $f$:
\begin{align*}
\E_{Z_t} [q(X_{T+1}, Z_{t}) f(Z_{t}) | \bZ_1^{t-1}]
&= \int \ell(\|y-h(X_t)\|_2) q(h, X_{T+1}, (X_{t},y))\exp
\Big(-\frac{1}{2}\|y - X_{t}\|_2^2\Big)dy \\
&= \int \ell(\|y-h(X_t)\|_2) \exp\Big(-\frac{1}{2}\|y-h(X_t) - X_{T+1}
+ h(X_{T+1})\|_2^2\Big) dy \\
&= \int \ell(\|x\|_2)
\exp\Big(-\frac{1}{2}\|x - X_{T+1} + h(X_{T+1})\|_2^2\Big) dx \\
&= \int \ell(\|y-h(X_{T+1})\|_2)
\exp\Big(-\frac{1}{2}\|y - X_{T+1}\|_2^2\Big) dy \\
&= \E[f(Z_{T+1})|\bZ_1^T],
\end{align*}
which show that $\disc(q) = 0$ in this case. More generally,
if $\bZ$ is time-homogeneous Markov process then one can use
Radon-Nikodym derivative $\frac{d \bP(\cdot|x') }{ d \bP(\cdot|x)}(y - h(x)+h(x'))$ for $q$, which will again lead
to zero discrepancy. The major obstacle for this approach
is that Radon-Nikodym derivatives are typically unknown and
one needs to learn them from data via density estimation, which itself
can be a difficult task. In Section~\ref{sec:kernel-classes},
we investigate an alternative approach
to choosing weights $\bq$ based on extending results in
Theorem~\ref{th:bound} to hold uniformly over weight vectors $\bq$.

\section{Kernel-Based Hypotheses with Regression Losses}
\label{sec:kernel-classes}

\ignore{One of the main technical tools used in our analysis is the notion
of \emph{sequential Rademacher complexity}.
Let $\cG$ be a set of functions from $\cZ$ to $\Rset$.

 This a combinatorial measure of complexity which
makes bounds based on this notion coarser than those of
Theorem~\ref{th:bound}, which are stated in terms of expected covering
numbers. However, it turns out that this coarser analysis is
sufficient for the derivation of our algorithms in
Section~\ref{sec:algo}. We also remark that most of the results in
this section can be tightened using the notion of
\emph{distribution-dependent} Rademacher complexity, but we defer
these results to the full version of the paper.}

In this section, we present generalization bounds for kernel-based
hypothesis with regression losses. Our results in this section,
are based on the learning guarantee presented in Corollary~\ref{cor:rad-bound}
in terms of sequential Rademacher complexity.
Our first result is a bound on the sequential Rademacher complexity
of the kernel-based hypothesis with regression losses.

\begin{lemma}
\label{lm:rademacher-bound}
Let $p \geq 1$ and
$\cF = \set{(\bx, y) \to (\bw \cdot \Psi(\bx) - y)^p \colon
  \|\bw\|_\cH \leq \Lambda}$
where $\cH$ is a Hilbert space and $\Psi\colon \cX \to \cH$ a feature
map.  Assume that the condition $|\bw \cdot \bx - y| \leq M$ holds for
all $(\bx, y)\in \cZ$ and all $\bw$ such that
$\|\bw\|_\cH \leq \Lambda$.  Then, the following inequalities hold:
\begin{align}
\sR_T(\cF) 
\leq p M^{p-1} C_T \sR_T(H) 
\leq C_T \Big(pM^{p-1} \Lambda r \|\bq\|_2 \Big),
\end{align}
where 
$K$ is a PDS kernel associated to $\cH$,
$H = \set{\bx \to \bw \cdot \Psi(\bx) : \|\bw\|_\cH \leq \Lambda}$,
$r = \sup_{x} K(x, x)$, and
$C_T = 8 (1 + 4\sqrt{2} \log^{3/2} (eT^2) )$.
\end{lemma}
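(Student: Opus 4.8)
The plan is to prove the displayed chain in two movements: first a ``contraction'' step that strips off the $p$-th power loss (and the $-y$ offset) so as to pass from $\sR_T(\cF)$ to $\sR_T(H)$ at the cost of the factor $pM^{p-1}C_T$, and then a direct computation bounding $\sR_T(H)$ for the linear/kernel class by $\Lambda r\|\bq\|_2$. For the first movement I would exploit the fact that on the effective domain the map $\ell(u)=u^p$ is $pM^{p-1}$-Lipschitz: indeed $|\ell'(u)|=p|u|^{p-1}\le pM^{p-1}$ whenever $|u|\le M$, which holds here since $|\bw\cdot\Psi(\bx)-y|\le M$ by assumption.

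The crucial point is that, unlike in the i.i.d.\ setting, sequential Rademacher complexity admits no ``free'' Ledoux--Talagrand contraction, so I would route the contraction through covering numbers and close the loop with the Sudakov minoration of Theorem~\ref{th:sudakov}. Concretely: (i) bound $\sR_T(\cF)$ by a sequential Dudley/chaining sum $\sum_j (\alpha_{j-1}-\alpha_j)\sqrt{\log\cN_2(\alpha_j,\cF)}$ over a dyadic sequence of scales; (ii) use the Lipschitz composition bound on weighted $\ell_2$ covers, $\cN_2(\beta,\cF)\le\cN_2(\beta/(pM^{p-1}),\tl H)$ where $\tl H=\set{(\bx,y)\mapsto\bw\cdot\Psi(\bx)-y}$, together with the observation that the hypothesis-independent offset $-y$ cancels at the level of the cover so that $\cN_2(\cdot,\tl H)=\cN_2(\cdot,H)$; (iii) invoke Theorem~\ref{th:sudakov} on $H$ to convert $\sqrt{\log\cN_2(\beta,H)}$ into $\tfrac{1}{\beta}\sqrt{\log T}\,\sR_T(H)$ up to constants. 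Since each dyadic term then contributes $\approx pM^{p-1}\sqrt{\log T}\,\sR_T(H)$ and there are $\approx\log(eT^2)$ relevant scales, summing produces exactly the $\log^{3/2}(eT^2)$ dependence packaged in $C_T$.

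For the second movement I would bound $\sR_T(H)$ directly. Fixing a tree $\bz$ with $\cX$-valued nodes $x_t(\bs)$, Cauchy--Schwarz pulls $\bw$ out: $\sup_{\|\bw\|_\cH\le\Lambda}\sum_t\s_t q_t\,\bw\cdot\Psi(x_t(\bs))\le\Lambda\big\|\sum_t\s_t q_t\Psi(x_t(\bs))\big\|_\cH$. Jensen's inequality then passes to the second moment, and the key simplification is that the cross terms vanish: for $t<s$, conditioning on $\s_1,\dots,\s_{s-1}$ fixes $\Psi(x_t(\bs))$, $\Psi(x_s(\bs))$ and $\s_t$ (all measurable with respect to the earlier signs), while $\E[\s_s\mid\s_1,\dots,\s_{s-1}]=0$. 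Hence $\E_\bs\big\|\sum_t\s_t q_t\Psi(x_t(\bs))\big\|_\cH^2=\sum_t q_t^2\,\E_\bs\,K(x_t(\bs),x_t(\bs))\le r^2\|\bq\|_2^2$, giving $\sR_T(H)\le\Lambda r\|\bq\|_2$ (reading $r$ as the uniform bound on $\|\Psi(x)\|_\cH=\sqrt{K(x,x)}$). Combining the two movements yields both stated inequalities.

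The main obstacle I anticipate is the contraction step. Since the naive sequential analog of Talagrand contraction can fail, the reduction from $\cF$ to $H$ must be executed through the covering-number round trip, and the delicate bookkeeping is twofold: (a) keeping the Lipschitz composition valid on the correct $\bq$-weighted $\ell_2$ scale and verifying that the offset truly cancels at the level of covers, and (b) controlling the chaining sum over all relevant scales so that precisely the $\log^{3/2}(eT^2)$ factor---and not a worse power---emerges. The second movement, by contrast, is a routine martingale/orthogonality computation once the cross-term cancellation is observed.
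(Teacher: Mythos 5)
Your proof is correct and its overall architecture matches the paper's: a Lipschitz contraction reducing $\sR_T(\cF)$ to $pM^{p-1}C_T\,\sR_T(H)$, followed by the Cauchy--Schwarz/orthogonality computation giving $\sR_T(H)\le \Lambda r\|\bq\|_2$. Your second movement is essentially verbatim the paper's, including the cancellation of cross terms via the independence of $\s_s$ from the $\s_1,\dots,\s_{s-1}$-measurable quantities, and your reading of $r$ as a bound on $\sqrt{K(x,x)}$ is what the computation actually uses (the statement's $r=\sup_x K(x,x)$ is an inconsistency in the paper). Where you diverge is the contraction step: the paper invokes Lemma~13 of \cite{RakhlinSridharanTewari2015} as a black box (noting the side condition $\sR_T(H')>1/T$, discharged by Khintchine's inequality) and then strips the $-y$ offset at the level of Rademacher averages, using that $y(\bs)$ depends only on $\s_1,\dots,\s_{t-1}$ so the offset term is mean zero; you instead re-derive the contraction through a sequential Dudley chaining bound, Lipschitz composition of weighted $\ell_2$-covers, and the paper's Theorem~\ref{th:sudakov}, cancelling the offset at the level of covers by translating each cover element by the fixed tree $\by$. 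Both offset arguments are valid, and your route has the virtue of exposing where the $\log^{3/2}(eT^2)$ in $C_T$ comes from (one $\sqrt{\log T}$ from Sudakov, one $\log$ from the number of dyadic scales) --- this is in fact how the cited Lemma~13 is proved --- but it imports the sequential chaining bound, which is equally external to this paper, so the net reliance on the Rakhlin--Sridharan--Tewari machinery is comparable. To make your version airtight you would need to (a) handle the truncation of the chaining sum at its smallest scale, which is where the paper's $1/T$ side condition reappears, and (b) truncate cover elements to $[-M,M]$ before composing with $u\mapsto u^p$ so the Lipschitz bound applies; both are routine.
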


\begin{proof}
We begin the proof by setting
$q_t f(\bz_t(\bs)) = q_t (\bw \cdot \Psi(\bx_t(\sigma)) - \by_t(\sigma))^2
= \tfrac{1}{T}(\bw \cdot \bx'(\bs) - \by'_t)^2$, where
$\bx'_t(\bs) = \sqrt{T q_t} \Psi(\bx_t(\sigma))$
and $\by'_t(\bs) = \sqrt{T q_t} \by_t(\bs)$.
We let $\bz'_t = (\bx'_t, \by'_t)$. Then we observe that
\begin{align*}
\sR_T(\cF) &= \sup_{\bz' = (\bx', \by')} \E_{\bs}
\Bigg[\sup_{\bw} \frac{1}{T} \sum_{t = 1}^T \s_t
(\bw \cdot \bx'_t(\bs) - \by'_t(\bs))^p \Bigg]
\\ &=  \sup_{\bz = (\bx,\by)} \E_{\bs}
\Bigg[\sup_{\bw} \sum_{t = 1}^T q_t \s_t
(\bw \cdot \bx_t(\bs) - \by_t(\bs))^p \Bigg].
\end{align*}
Since $x \to |x|^p$ is $pM^{p-1}$-Lipschitz over $[-M,M]$, by Lemma~13
in \citep{RakhlinSridharanTewari2015}, the following bound holds:
\begin{align*}
\sR_T(\cF) \leq p M^{p-1} C_T \sR_T(H'),
\end{align*}
where $H' = \set{(\bx, y) \to \bw \cdot \Psi(\bx) - y :
 \|\bw\|_\cH \leq \Lambda}$. Note that Lemma~13 requires that
$\sR_T(H') > 1/T$ which is guaranteed by Khintchine's inequality.
By definition of the sequential Rademacher
complexity
\begin{align*}
\sR_T(H') 
& = \sup_{(\bx, y)}
\E_{\bs} \Bigg[ \sup_{\bw}  \sum_{t = 1}^T \s_t q_t
  (\bw \cdot \Psi(\bx_t(\bs)) - y(\bs))   \Bigg] \\
& = \sup_{\bx}
\E_{\bs} \Bigg[ \sup_{\bw}  \sum_{t = 1}^T \s_t q_t
  \bw \cdot \Psi(\bx_t(\bs)) \Bigg]
+ \sup_y \E_{\bs}  \Bigg[ \sum_{t = 1}^T\s_t q_t y(\bs)  \Bigg] =  \sR_T(H),
\end{align*}
where for the last equality we used the fact that $\s_t$s are mean zero
random variables and $\s_t$ is independent
of $y(\bs) = y(\s_1, \s_2, \ldots, \s_{t-1})$.
This proves the first result. To prove the second bound
we observe that the right-hand side can be bounded as follows:
\begin{align*}
 \sup_{\bx}
\E_{\bs} \Bigg[ \sup_{\bw}  \sum_{t = 1}^T \s_t q_t
  \bw \cdot \Psi(\bx_t(\bs)) \Bigg] &\leq
\Lambda
 \sup_{\bx} \E_{\bs} \Bigg\| \sum_{t = 1}^T \s_t q_t
  \Psi(\bx_t(\bs)) \Bigg\|_\cH \\
&\leq \Lambda
 \sup_{\bx} \sqrt{\E_{\bs} \Bigg\| \sum_{t = 1}^T \s_t q_t
  \Psi(\bx_t(\bs)) \Bigg\|^2_\cH} \\
 &= \Lambda
 \sup_{\bx} \sqrt{\E_{\bs} \Bigg[\sum_{t,s=1}^T \s_t \s_s q_t q_s
    \Psi(\bx_t(\bs)) \cdot \Psi(\bx_s(\bs)) \Bigg]} \\
 &\leq  \Lambda
 \sup_{\bx} \sqrt{ \sum_{t = 1}^T q^2_t \E_{\bs}[K(x_t(\bs), x_t(\bs))] } \\
 &\leq \Lambda r \|\bq\|_2,
\end{align*}
where again we are using the fact that if $s < t$ then
\begin{align*}
\E_{\bs}[\s_t \s_s q_t q_s K(x_t(\s), x_s(\s))] =
\E_{\bs}[\s_t] \E_{\bs} [\s_s q_t q_s K(x_t(\s), x_s(\s))] = 0
\end{align*}
by the independence of $\s_t$ from $\s_s$, $x_t(\s) = x_t(\s_1, \ldots, \s_{t-1})$ and $x_s(\s) = x_s(\s_1, \ldots, \s_s)$.
\end{proof}

Our next result establishes a high-probability learning guarantee
for kernel-based hypothesis. Combining Corollary~\ref{cor:rad-bound}
with Lemma~\ref{lm:rademacher-bound}, yields the following result.

\begin{theorem}
\label{th:lin-hypothesis-bound}
Let $p \geq 1$ and
$\cF = \set{(\bx, y) \to (\bw \cdot \Psi(\bx) - y)^p \colon
  \|\bw\|_\cH \leq \Lambda}$
where $\cH$ is a Hilbert space and $\Psi\colon \cX \to \cH$ a feature
map.  Assume that the condition $|\bw \cdot \bx - y| \leq M$ holds for
all $(\bx, y)\in \cZ$ and all $\bw$ such that
$\|\bw\|_\cH \leq \Lambda$.  If $\bZ_1^T = (\bX_1^T,\bY_1^T)$ is a
sequence of random variables then, for any $\d >0$, with probability
at least $1 - \d$ the following holds for all
$h \in \set{\bx \to \bw \cdot \Psi(\bx) \colon \|\bw\|_{\cH} \leq
  \Lambda}$:
\begin{align*}
\E[(h(X_{T+1}) - Y_{T+1})^p|\bZ_1^{T}]
\leq \sum_{t = 1}^T q_t (h(X_{t}) - Y_{t})^p &+ \disc(\bq)
+ C_T  \Lambda r \|\bq\|_2 + 2 M \|q\|_2 \sqrt{2 \log \frac{1}{\d}},
\end{align*}
where
$C_T = 48 pM^p \sqrt{\pi \log T} (1 + 4\sqrt{2} \log^{3/2} (eT^2))$.
Thus, for $p = 2$,
\begin{align*}
\E[(h(X_{T+1}) - Y_{T+1})^2|\bZ_1^{T}] 
\leq &\sum_{t = 1}^T q_t (h(X_{t}) - Y_{t})^2 + \disc(\bq) +
O\Bigg( (\log^2 T) \Lambda r \|\bq\|_2 \Bigg).
\end{align*}
\end{theorem}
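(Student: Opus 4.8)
The plan is to obtain this bound by chaining the two results it is explicitly built on, Corollary~\ref{cor:rad-bound} and Lemma~\ref{lm:rademacher-bound}, and then collecting constants; there is no genuinely new argument required. First I would apply Corollary~\ref{cor:rad-bound} directly to the loss class $\cF = \set{(\bx, y) \to (\bw \cdot \Psi(\bx) - y)^p \colon \|\bw\|_\cH \leq \Lambda}$. The hypothesis $|\bw \cdot \Psi(\bx) - y| \leq M$ ensures that every $f \in \cF$ is bounded, so the corollary applies. Identifying $f(Z_{T+1}) = (h(X_{T+1}) - Y_{T+1})^p$ and $f(Z_t) = (h(X_t) - Y_t)^p$ with $h = \bw \cdot \Psi(\cdot)$, this gives, with probability at least $1 - \delta$,
\[
\E[(h(X_{T+1}) - Y_{T+1})^p | \bZ_1^T] \leq \sum_{t=1}^T q_t (h(X_t) - Y_t)^p + \disc(\bq) + \|\bq\|_2 + 6M \sqrt{\pi \log T}\, \sR_T(\cF) + M \|\bq\|_2 \sqrt{2 \log \tfrac{1}{\delta}}.
\]

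The only term on the right that still carries the combinatorial complexity of the class is $\sR_T(\cF)$, and Lemma~\ref{lm:rademacher-bound} bounds precisely this quantity by $8(1 + 4\sqrt{2}\log^{3/2}(eT^2))\, p M^{p-1} \Lambda r \|\bq\|_2$. Substituting this into $6M \sqrt{\pi \log T}\, \sR_T(\cF)$ and multiplying out the constants yields $48 p M^p \sqrt{\pi \log T}\,(1 + 4\sqrt{2}\log^{3/2}(eT^2)) \Lambda r \|\bq\|_2$, which is exactly $C_T \Lambda r \|\bq\|_2$ for the $C_T$ named in the statement. It then remains to dispose of the stray $\|\bq\|_2$ term, which arose from the choice $\alpha = \|\bq\|_2 / 2$ used to derive Corollary~\ref{cor:rad-bound}: I would absorb it into the confidence term via $\|\bq\|_2 + M \|\bq\|_2 \sqrt{2\log\frac{1}{\delta}} \leq 2 M \|\bq\|_2 \sqrt{2\log\frac{1}{\delta}}$, valid once $M \sqrt{2\log(1/\delta)} \geq 1$ (e.g.\ for $M \geq 1$ and $\delta$ small), producing the stated additive term $2M\|\bq\|_2\sqrt{2\log\frac{1}{\delta}}$.

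For the $p = 2$ specialization I would simply set $p = 2$ in $C_T$ and observe that $\sqrt{\log T}\,\log^{3/2}(eT^2) = O(\log^2 T)$, while $M$, $\Lambda$, and $r$ are held as constants, collapsing the complexity term to $O((\log^2 T)\Lambda r \|\bq\|_2)$. Since both ingredients are already established, I do not expect a real obstacle here; the only points requiring care are bookkeeping ones, namely tracking the power of the loss bound $M$ consistently across the two lemmas (the residual bound $M$ versus the loss bound $M^p$) and confirming that the absorption of the $\|\bq\|_2$ term into the confidence term is legitimate under the operating assumptions.
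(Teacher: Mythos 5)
Your proposal is exactly the paper's own argument: the paper proves this theorem in a single line by stating that combining Corollary~\ref{cor:rad-bound} with Lemma~\ref{lm:rademacher-bound} yields the result, which is precisely the chaining and constant bookkeeping you carry out (and $6M \cdot 8pM^{p-1} = 48pM^p$ reproduces the stated $C_T$). The two caveats you flag --- absorbing the stray $\|\bq\|_2$ into the confidence term, which needs $M\sqrt{2\log(1/\delta)} \geq 1$, and the overloading of $M$ as the residual bound here versus the uniform bound on $f \in \cF$ (which is $M^p$) in Corollary~\ref{cor:rad-bound} --- are genuine loose ends, but they are inherited from the paper's own statement rather than being gaps in your reasoning.
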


\ignore{
Note that for this result to be non-trivial we need
$\sum_{j=1}^\infty \cN_\infty(2^{-j}, \cF) < \infty$. This condition
is easy to verify in our case. First, observe that for any set of
linear functions the inequality
$\cN_\infty(\alpha, H) > \Lambda r / \alpha$ holds and it follows
that $\sum_{j=1}^\infty \cN_\infty(2^{-j}, \cF) < 2 \Lambda r$.  The case
of composition of $H$ with $\ell_p$ loss can be handled by realizing
that this composition leads to a linear function in a higher
dimensional space corresponding to a polynomial kernel of degree $p$.

\begin{proof}
The beginning of the proof closely follows that of
Theorem~\ref{th:bound}. 
The first step is to observe that since the difference of the suprema
is bounded by the supremum of the difference, it suffices to bound
the probability of the following event
\begin{align*}
\Bigg\{\sup_{f \in \cF} \Bigg( \sum_{t = 1}^T
q_t (\E[f(Z_{t}) | \bZ_1^{t-1}]  -  f(Z_{t})) \Bigg) \geq \e \Bigg\}.
\end{align*}
Next, we note that $q_t f(Z_t) = q_t (\bw \cdot \Psi(X_t) - Y_t)^2
= \tfrac{1}{T}(\bw \cdot X'_t - Y'_t)^2$, where
$X'_t = \sqrt{T q_t} \Psi(X_t)$ and $Y'_t = \sqrt{T q_t} Y_t$.
We let $Z'_t = (X'_t, Y'_t)$.
Applying Lemma~15 from \cite{RakhlinSridharanTewari2015},
we obtain that for any $\d > 0$ with probability at least $1-\d$,
the following holds for all $f \in \cF$:
\begin{align*}
\Phi(\bZ_1^T)-\Delta \leq
M \sR_T(\cF) (\log^{3/2}{T})
\sqrt{c\log\frac{8L}{\delta}},
\end{align*}
where
\begin{align*}
\sR_T(\cF) &= \sup_{\bz' = (\bx', \by')} \E_{\bs}
\Bigg[\sup_{\bw} \frac{1}{T} \sum_{t = 1}^T \s_t
(\bw \cdot \bx'_t(\bs) - \by'_t(\bs))^p \Bigg]
\\ &=  \sup_{\bz = (\bx,\by)} \E_{\bs}
\Bigg[\sup_{\bw} \sum_{t = 1}^T q_t \s_t
(\bw \cdot \bx_t(\bs) - \by_t(\bs))^p \Bigg]
\end{align*}
is the sequential Rademacher complexity of $\cF$. Note
that $\sR_T(\cF) > 1/T$ as in the proof of Lemma~\ref{lm:rademacher-bound}.
The desired result
follows from Lemma~\ref{lm:rademacher-bound}.
\end{proof}}

The results in Theorem~\ref{th:lin-hypothesis-bound} (as well as
Theorem~\ref{th:bound}) can be extended
to hold uniformly over $\bq$ and we provide exact statement
in Theorem~\ref{th:uniform} in Appendix~\ref{sec:proofs}.
This result suggests that we should try to minimize
$\sum_{t = 1}^T q_t f(Z_t) + \disc(\bq)$ over $\bq$
and $\bw$. This insight is used to develop our algorithmic solutions
for forecasting non-stationary time series in Section~\ref{sec:algo}.

\section{Stability Analysis}
\label{sec:stability}

In this section, we study the problem of time series forecasting
through the lens of algorithmic stability.
As in the classical learning theory, algorithmic stability
provides an alternative tool for deriving generalizations
bounds for a class of stable algorithms.
Let $h(\bz)(\cdot)$ denotes a model obtained from training $\cA$
on the sample $\bz$.
We say that an algorithm $\cA$ is uniformly $\beta$-stable if
for any $z = (x,y) \in \cZ$ and
for any two samples $\bz$ and $\bz'$ that differ by exactly one point,
the following inequality holds:
\begin{align*}
|L(h(\bz), z) - L(h(\bz'), z)| \leq \beta.
\end{align*}
The following result can then be shown.

\begin{theorem}
\label{th:stability-bound}
Let $\cA$ be a $\beta$-stable learning algorithm and
let $\bZ_1^T$ be any sequence of random variables.
Let $\bq = (q_1, \ldots, q_T)$ be any weight vector. For
any $\d > 0$, each of the following bounds holds with probability at
least $1 - \d$:
\begin{align*}
&\cL_{T+1}(h(\bZ_1^T), \bZ_1^T)
 \leq
\sum_{t = 1}^T q_t L(h(\bZ_1^T), Z_{t+1}) + \|\bq\|_1  \beta + \disc(\bq)
 + 2 M \|\bq\|_2 \sqrt{2 \log \frac{2}{\d}}, \\
&\sum_{t = 1}^T q_t L(h(\bZ_1^T), Z_{t+1})
 \leq
 \cL_{T+1}(h(\bZ_1^T), \bZ_1^T) +  \|\bq\|_1 \beta  + \disc(\bq)
 + 2 M \|\bq\|_2 \sqrt{2 \log \frac{2}{\d}}. 
\end{align*}
\end{theorem}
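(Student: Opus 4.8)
The plan is to follow the concentration argument in the proof of Theorem~\ref{th:bound}, replacing the supremum over the fixed class $\cF$ (and the covering-number factor it produced) by the single \emph{data-dependent} function $f = L(h(\bZ_1^T), \cdot)$, with $\beta$-stability playing the role previously played by the complexity term. I prove the first inequality; the second is obtained by the symmetric argument applied to $\sum_{t} q_t L(h(\bZ_1^T), Z_t) - \cL_{T+1}(h(\bZ_1^T), \bZ_1^T)$, and the $\log\frac{2}{\delta}$ reflects a union bound over the two directions.

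First I would peel off the discrepancy. Because $h(\bZ_1^T) \in H$, the realized function $f = L(h(\bZ_1^T), \cdot)$ belongs to $\cF$, so the definition of $\disc(\bq)$ applied to this single $f$ gives, pathwise, $\cL_{T+1}(h(\bZ_1^T), \bZ_1^T) - \sum_{t} q_t \E[f(Z_t) | \bZ_1^{t-1}] \leq \disc(\bq)$, where each conditional expectation integrates the realized $f$ against the conditional law of $Z_t$. Subtracting the $\bq$-weighted empirical error, it then remains to show that the centered sum
\begin{align*}
S \;=\; \sum_{t=1}^T q_t \Big( \E\big[ f(Z_t) | \bZ_1^{t-1} \big] - f(Z_t) \Big)
\end{align*}
obeys $S \leq \|\bq\|_1 \beta + 2M \|\bq\|_2 \sqrt{2 \log(2/\delta)}$ with probability at least $1 - \delta/2$.

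The obstacle is that $f$ depends on the \emph{entire} sample, including the future coordinates $Z_{t+1}, \dots, Z_T$, so $S$ is not a sum of martingale differences and the tangent identity $\E[f(Z_t)|\bZ_1^{t-1}] = \E[f(Z'_t)|\bZ_1^{t-1}]$ used in Theorem~\ref{th:bound} fails for this $f$. Stability repairs exactly this. Introducing the decoupled tangent sequence ${\bZ'}_1^T$ of Section~\ref{sec:bounds}, I would replace, coordinate by coordinate, the full-sample hypothesis by the model trained on $\bZ_1^T$ with $Z_t$ swapped for its tangent counterpart $Z'_t$; since these two samples differ in a single point, $\beta$-stability bounds the error introduced at step $t$ by $q_t \beta$, for a total of $\|\bq\|_1 \beta$. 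For the decoupled function the tangent identity holds, so the remaining sum admits the same treatment as in Theorem~\ref{th:bound}: bound the moment generating function via Jensen's inequality and the symmetrization of Lemma~\ref{lm:sym}, then apply Hoeffding's lemma one coordinate at a time. Here $|f| \leq M$ makes each increment lie in a range of width $2M$, so step $t$ contributes a factor $\exp(2 \lambda^2 M^2 q_t^2)$; the product over $t$ gives variance proxy $4 M^2 \|\bq\|_2^2$, and optimizing the Chernoff parameter $\lambda$ yields the tail $\exp\big(-\e^2 / (8 M^2 \|\bq\|_2^2)\big)$, which inverted at level $\delta/2$ is precisely the stated $2M\|\bq\|_2\sqrt{2\log(2/\delta)}$. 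No covering-number factor appears because $f$ is a single function rather than a supremum over $\cF$.

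The step I expect to be delicate is the interleaving of the $T$ stability swaps with the conditional tree sampling $T(\bp)$: since the trained hypothesis depends on all coordinates at once, one cannot decouple a single $Z_t$ inside a naive forward filtration, and care is required to charge each swap only once (so as to obtain $\|\bq\|_1\beta$ rather than a larger multiple) and to verify that after decoupling the resulting sequence genuinely has the conditional-mean-zero structure demanded by the Hoeffding step. Checking that the replace-one-point notion of $\beta$-stability composes correctly with the tangent construction is the technical heart; once this is in place, the rest is a repetition of the estimates already carried out for Theorem~\ref{th:bound}.
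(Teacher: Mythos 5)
Your high-level frame is the right one and matches the paper's: peel off $\disc(\bq)$ pathwise by applying the supremum in its definition to the realized function $L(h(\bZ_1^T),\cdot)\in\cF$, then establish concentration of the remaining centered sum with a $\|\bq\|_1\beta$ stability correction and an Azuma-type tail, with $\log\frac{2}{\d}$ from the union bound over the two directions. You also correctly identify the obstacle (the trained hypothesis depends on the whole sample, so the centered sum is not a martingale difference sequence). But the mechanism you propose to overcome it does not work, and it is precisely the step you flag as the ``technical heart.'' Swapping only the single coordinate $Z_t \to Z'_t$ in the training sample replaces $h(\bZ_1^T)$ by $h(Z_1,\ldots,Z_{t-1},Z'_t,Z_{t+1},\ldots,Z_T)$, which still depends on the future block $\bZ_{t+1}^T$; since $\bZ_{t+1}^T$ is correlated with the test point $Z_t$, the tangent identity still fails for the swapped function and the $t$-th term still does not have conditional mean zero. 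The one-point swap removes the wrong dependence: the problematic coupling is between the test point $Z_t$ and the \emph{future} training coordinates, not between $Z_t$ and its own slot in the training sample. (Symmetrization via Lemma~\ref{lm:sym} is also not applicable here, since there is no supremum over a fixed class and the function itself changes under the swap; but that is moot once the martingale structure is absent.)

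The paper's proof resolves this differently. It defines $\hh\bZ(t) = (Z_1,\ldots,Z_t,\tl Z_{t+1},\ldots,\tl Z_T)$, replacing the \emph{entire} future by a conditionally independent copy drawn given $\bZ_1^t$. This is a $(T-t)$-point change, which would be ruinous if charged to stability; instead it is free in expectation, because $\hh\bZ(t)$ and $\bZ_1^T$ have the same conditional law given $\bZ_1^t$, so the substitution error $A_t = \cL_{t+1}(h(\bZ_1^T),\bZ_1^t)-\cL_{t+1}(h(\hh\bZ(t)),\bZ_1^t)$ is a genuine martingale difference handled by Azuma's inequality, as is $B_t = \E_{Z_{t+1}}[L(h(\hh\bZ(t+1)),Z_{t+1})\,|\,\bZ_1^t] - L(h(\bZ_1^T),Z_{t+1})$. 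Stability enters exactly once per term, to compare $h(\hh\bZ(t))$ with $h(\hh\bZ(t+1))$, two samples that differ in the single coordinate $t+1$ ($\tl Z_{t+1}$ versus $Z_{t+1}$); this is what yields the $\|\bq\|_1\beta$ term. Your accounting of the stability cost happens to land on the same $\|\bq\|_1\beta$, but via a swap that neither produces the required conditional-mean-zero structure nor, if corrected by swapping the whole future, stays within a single-point perturbation. To repair your argument you would need to import this two-martingale decomposition, at which point it becomes the paper's proof.
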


\begin{proof}
For each, we let $\bZ_t^T$ and $\tl \bZ_t^T$ be independent
sequences of random variables drawn from $\bP_{t}^T(\cdot | \bZ_1^{t-1})$.
Define $\hh \bZ(t)$ as the sequence
$(Z_1, \ldots, Z_t, \tl Z_{t+1}, \ldots, \tl Z_T)$ and
observe for any $g$
and any $s \leq t$ the following holds:
\begin{align}
\label{eq:expectation-equivalence}
\E[g(\bZ_1^T) | \bZ_1^s] = \E[g(\hh \bZ(t)) | \bZ_1^s].
\end{align}

Consider
$A_t = \cL_{t+1}(h_t(\bZ_1^T), \bZ_1^t)-\cL_{t+1}(h_t(\hh \bZ(t)), \bZ_1^t)$
and observe that this process forms a martingale difference sequence.
Indeed,
\begin{align*}
&\E_{\bZ_t^T, \tl \bZ_t^T}\Big[
\cL_{t+1}(h_t(\bZ_1^T), \bZ_1^t)-\cL_{t+1}(h_t(\hh \bZ(t)), \bZ_1^t)\Big] \\
&=
\E_{\bZ_t^T, \tl \bZ_t^T} \Big[
\E_{Z}[L(h(\bZ_1^T), Z)| \bZ_1^t] -
\E_{Z}[L(h(\hh \bZ(t)), Z)| \bZ_1^t] \Big]\\
&= \E_{\bZ_t^T, Z} [L(h(\bZ_1^T), Z)| \bZ_1^t] -
\E_{\hh \bZ(t), Z}[L(h(\hh \bZ(t)), Z)| \bZ_1^t] = 0,
\end{align*}
where $Z \sim \bP(\cdot|\bZ_1^t)$ is independent of
$\bZ_t^T$ and $\tl \bZ_t^T$ and where the last equality follows from
\eqref{eq:expectation-equivalence}.
Therefore, by Azuma's inequality, for any $\d > 0$ with probability
at least $1-\d/2$,
$\sum_{t=1}^T q_t A_t \leq \|\bq\|_2 \sqrt{2 \log \frac{2}{\d}}$.

Similarly, observe that
$B_t =
\E_{Z_{t+1}} [L(h_t(\hh \bZ(t+1)), Z_{t+1}) | \bZ_1^t] -
L(h(\bZ_1^T), Z_{t+1})$ is also
a martingale difference sequence by the same argument as
above. An important technical detail
is that
$\E_{Z_{t+1}} [L(h_t(\hh \bZ(t+1)), Z_{t+1}) | \bZ_1^t] \neq
\E_{\tl Z_{t+1}} [L(h_t(\hh \bZ(t+1)), \tl Z_{t+1}) | \bZ_1^t] = 
\cL_{t+1}(h_t(\hh \bZ(t+1)), \bZ_1^t)$ and
$\cL_{t+1}(h_t(\hh \bZ(t+1)), \bZ_1^t) - L(h_t(\bZ_1^T), Z_{t+1})$
is not a martingale difference sequence.

By Azuma's inequality, 
for any $\d > 0$ with probability
at least $1-\d/2$,
$\sum_{t=1}^T q_t B_t \leq \|\bq\|_2 \sqrt{2 \log \frac{2}{\d}}$.
Since, by stability
\begin{align*}
&|\E_{Z_{t+1}} [L(h_t(\hh \bZ(t+1)), Z_{t+1}) | \bZ_1^t] - \cL_{t+1}(h(\hh \bZ(t)), \bZ_1^t)| \leq \beta
\end{align*}
it follows that, for any $\d>0$, with probability at least $1 - \d$,
\begin{align*}
\sum_{t=1}^T q_t\cL_{t+1}(h_t(\bZ_1^T), \bZ_1^t)
\leq \sum_{t=1}^T q_t L(h(\bZ_1^T), Z_{t+1})
+ \|\bq\|_1 \beta + 2 \|q\|_2 \sqrt{2 \log \frac{2}{\d}}.
\end{align*}
The first statement of the theorem follows from the fact that,
by definition of discrepancy,
\begin{align*}
\cL_{T+1}(h(\bZ_1^T), \bZ_1^T) \leq \sum_{t=1}^T q_t\cL_{t+1}(h_t(\bZ_1^T), \bZ_1^t) + \disc (\bq)
\end{align*}
and the second statement follows by symmetry.
\ignore{

Therefore,  forms a martingale sequence. Hence, with probability at least
$1-\delta/2$, $\sum_{t=1}^T q_t A_t \leq  M \|\bq\|_2 \sqrt{2 \log \frac{2}{\d}}$ by Azuma's inequality.

Similarly, $B_t = \E_{Z_{t+1}} [L(h_t(\bZ_1^t, Z_{t+1}, \tl \bZ_{t+2}^T), Z_{t+1}) | \bZ_1^t] -
L(h_t(\bZ_1^T), Z_{t+1})$ is also a martingale difference and
with probability at least
$1-\delta/2$, $\sum_{t=1}^T q_t B_t \leq  M \|\bq\|_2 \sqrt{2 \log \frac{2}{\d}}$.

Since $\cL_{t+1}(h_t(\hh \bZ(t)), \bZ_1^t) - \E_{Z_{t+1}} [L(h_t(\hh \bZ(t+1)), Z_{t+1}) | \bZ_1^t] \leq \beta_t$ by stability, it follows that for any $\d$
with probability at least $1-\d$ the following bound holds:
\begin{align*}
\sum_{t = 1}^T q_t \cL_{t+1}(h_t, \bZ_1^t)
& \leq
\sum_{t = 1}^T q_t L(h_t, Z_{t+1}) + \sum_{t = 1}^T q_t \beta_t +
 2 M \|\bq\|_2 \sqrt{2 \log \frac{2}{\d}},
\end{align*}
and the first statement of the lemma follows from the definition of the
discrepancy. The second statement follows by symmetry.}
\end{proof}

A large array of existing learning algorithms can be shown to be stable
\citep{BousquetElisseeff2002,MohriRostamizadeh2010}.
In particular, kernel-based regularization algorithms defined
by minimizing the following objective:
\begin{align}
\label{eq:kernel-based-reg-obj}
F_{\bZ_1^T}(h) = \frac{1}{T}\sum_{t=1}^T L(h, Z_t) + \lambda \|h\|^2_\cH
\end{align}
are known to be stable. In Equation~\eqref{eq:kernel-based-reg-obj},
$h$ belongs to the reproducing kernel Hilbert space $\cH$ associated to some
PDS kernel $K$, $\|\cdot\|_\cH$ is the norm defined by $K$ and $\lambda \geq 0$
is a hyperparameter. If the loss function is assumed to be convex
and $\s$-admissible, then the algorithm can be shown to admit a stability
coefficient bounded as follows $\beta \leq \frac{\s^2 r^2}{T \lambda}$,
where $r^2 = \sup_x K(x,x)$. Recall the loss function is
$\s$-admissible if for any $h,h' \in H$ and for all $z = (x,y) \in \cZ$,
$|L(h, z) - L(h', z)| \leq \s |h(x) - h'(x)|$. In particular,
this assumption holds if $L$ is $\s$-Lipchitz with respect to its first
argument. 

In our case, we are interested in a $\bq$-weighted version of the objective
in \eqref{eq:kernel-based-reg-obj}:
\begin{align}
\label{eq:q-kernel-based-reg-obj}
F_{\bZ_1^T}(h, \bq) = \sum_{t=1}^T q_t L(h, Z_t) + \lambda \|h\|^2_\cH.
\end{align}

We have the following result.

\begin{theorem}
\label{th:stability-coef}

Let $K$ be a positive definite symmetric kernel such that
$r^2 = \sup_x K(x,x)$ and let $L$ be a convex and $\s$-admissible loss
function. Let $\bq = (q_1, \ldots, q_T)$ be any
non-negative weight vector.
Then, the kernel-based regularization algorithm defined by the
minimization of $F_{\bZ_1^T}(h, \bq)$ in \eqref{eq:q-kernel-based-reg-obj}
is $\beta$-stable with $\beta \leq \frac{\s^2 r^2 \|\bq\|_\infty}{\lambda}$.
\end{theorem}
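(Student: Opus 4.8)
The plan is to carry over the classical Bousquet--Elisseeff stability argument for regularized kernel algorithms to the $\bq$-weighted objective \eqref{eq:q-kernel-based-reg-obj}; the only genuinely new element is that the single perturbed summand now carries a weight $q_k$, which I will bound by $\|\bq\|_\infty$. As a sanity check, with uniform weights $q_t = 1/T$ (so $\|\bq\|_\infty = 1/T$) this recovers the bound $\beta \le \s^2 r^2/(T\lambda)$ quoted above. Fix two samples $\bz$ and $\bz'$ that differ in exactly one coordinate, say index $k$, where $z_k$ is replaced by $z'_k$, and abbreviate $h = h(\bz)$ and $h' = h(\bz')$ for the two minimizers. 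Write $R_{\bz}(h) = \sum_{t=1}^T q_t L(h, z_t)$ for the weighted empirical term, so that the objective reads $F_{\bz}(h) = R_{\bz}(h) + \lambda\|h\|_\cH^2$.

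First I would use convexity together with the optimality of $h$ and $h'$. Since each $L(\cdot, z)$ is convex and the $q_t$ are non-negative, $R_{\bz}$ is convex, and adding the $2\lambda$-strongly convex regularizer makes $F_{\bz}$ itself $2\lambda$-strongly convex. Because $h$ minimizes $F_{\bz}$, the standard strong-convexity lower bound on the optimality gap gives $F_{\bz}(h') - F_{\bz}(h) \ge \lambda\|h-h'\|_\cH^2$, and symmetrically $F_{\bz'}(h) - F_{\bz'}(h') \ge \lambda\|h-h'\|_\cH^2$. (Equivalently, one splits each gap into a non-negative loss Bregman divergence plus a term $\lambda\|h-h'\|_\cH^2$ coming from the regularizer, and discards the former.)

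Adding these two inequalities makes the two copies of the regularizer cancel, so the left-hand side collapses to a difference involving only the single perturbed point:
\begin{align*}
2\lambda\|h-h'\|_\cH^2
\le q_k\big[\big(L(h', z_k) - L(h, z_k)\big) + \big(L(h, z'_k) - L(h', z'_k)\big)\big].
\end{align*}
Here I would invoke $\s$-admissibility to bound each of the two loss differences by $\s|h'(x)-h(x)|$ at the relevant point, then the reproducing property $|h'(x)-h(x)| = |\langle h'-h, K(x,\cdot)\rangle_\cH| \le \sqrt{K(x,x)}\,\|h'-h\|_\cH \le r\|h'-h\|_\cH$, together with $q_k \le \|\bq\|_\infty$. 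This yields $2\lambda\|h-h'\|_\cH^2 \le 2 r \s \|\bq\|_\infty \|h'-h\|_\cH$, hence (dividing by $2\|h'-h\|_\cH$, the case $h=h'$ being trivial) $\|h-h'\|_\cH \le \s r\|\bq\|_\infty/\lambda$. One last application of $\s$-admissibility and the reproducing bound gives, for every $z = (x,y)$, $|L(h,z) - L(h',z)| \le \s|h(x)-h'(x)| \le \s r\|h-h'\|_\cH \le \s^2 r^2\|\bq\|_\infty/\lambda$, which is the asserted stability coefficient.

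The step that requires the most care is establishing the quadratic lower bound $F_{\bz}(h') - F_{\bz}(h) \ge \lambda\|h-h'\|_\cH^2$: it relies on the regularizer supplying $2\lambda$-strong convexity to the whole objective and on the convexity of $L$ guaranteeing that the discarded loss Bregman divergence is non-negative (admissibility alone would not suffice here). Once the two gaps are added and the regularizers cancel, the remainder is a routine application of $\s$-admissibility and the kernel bound $\sup_x K(x,x) = r^2$.
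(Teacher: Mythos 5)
Your proposal is correct and follows essentially the same route as the paper: the paper phrases the key inequality $2\lambda\|h-h'\|_\cH^2 \leq \hh R_S(h') - \hh R_S(h) + \hh R_{S'}(h) - \hh R_{S'}(h')$ via non-negativity of generalized Bregman divergences with subgradients vanishing at the minimizers, which is just the strong-convexity optimality-gap bound you use in different notation. The remaining steps --- cancellation of all but the $q_k$-weighted perturbed summand, $\s$-admissibility, and the reproducing-kernel bound $|\Delta h(x)| \leq r\|\Delta h\|_\cH$ --- match the paper's proof exactly.
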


\begin{proof}
The proof of this result follows the same argument as in the case of
uniform weights \citep{MohriRostamizadehTalwalkar2012}.

Let $S = \bZ_1^T$ and $S'$ be a sample that difers from $S$ by exactly one
point, say $Z'_t$. Assume that $h$ and $h'$ are minimizers of
$F_S = F_S(\cdot, \bq)$ and $F_{S'} = F_{S'}(\cdot, \bq)$ respectively.
We let $B_{F_S}$ denote the generalized Bregman divergence defined
by $F_S$, that is,
\begin{align*}
B_{F_S}(h_1 \parallel h_2) = F_S(h_1) - F_S(h_2) -
\langle h_1 - h_2, \d F_S(h_2) \rangle_\cH,
\end{align*}
where $\d F_S(h)$ is denotes any element of subgradient of $F_S$ at $h$
such that the following holds:
$\d (\sum_{t=1}^T q_t L(h, Z_t)) = \d F_S(h) - \lambda \nabla \|h\|_\cH^2$
and $\d F_s(h) = 0$ whenever $h$ is a minimizer of $F_S$. Note
that this implies that $B_{F_S} = B_{\hh R_S} + \lambda B_N$,
where $N(h) = \|h\|_\cH^2$ and $\hh R_S = \sum_{t=1}^T q_t L(h, Z_t)$.
Then, since the generalized Bregman divergence is non-negative,  we can write,
\begin{align*}
B_{F_S}(h' \parallel h) + B_{F_{S'}}(h \parallel h')
\geq \lambda
B_{N}(h' \parallel h) + B_{F_{N}}(h \parallel h').
\end{align*}
Observe that $B_N(h'\parallel h) + B_N(h\parallel h')
= - \langle h'-h, 2h \rangle - \langle h - h', 2h' \rangle = 2\|h' - h\|_\cH^2$. Let $\Delta h = h' - h$. Then it follows that
\begin{align*}
2 \lambda \|\Delta h \|_\cH^2 &\leq
B_{F_S}(h' \parallel h) + B_{F_{S'}}(h \parallel h') \\
=& F_S(h') - F_S(h) -
\langle h' - h, \d F_{S}(h) \rangle_\cH
\\ &+ F_{S'}(h) - F_{S'}(h') -
\langle h - h, \d F_{S'}(h') \rangle_\cH \\
=& F_S(h') - F_S(h) + F_{S'}(h) - F_{S'}(h') \\
=& \hh R_S(h') - \hh R_S(h) + \hh R_{S'}(h) - \hh R_{S'}(h'),
\end{align*}
where the first equality uses the definition of the generalized Bregman
divergence, second equality follows from the fact that $h$ and $h'$
are minimizers and last equality follows from the definition of
$F_S$ and $F_{S'}$. By $\s$-admissibility of $L$ and the fact that
$S$ and $S'$ differ by exactly one point it follows that
\begin{align*}
2 \lambda \|\Delta h \|_\cH^2 &\leq
q_t ( L(h', Z_t) - L(h, Z_t) + L(h, Z'_t) - L(h', Z'_t) )
\\ &\leq \s q_t (|\Delta h(X_t)| + |\Delta h(X'_t)|).  
\end{align*}
By the reproducing kernel property and the Cauchy-Schwartz inequality,
for all $x \in \cX$,
\begin{align*}
\Delta(x) = \langle \Delta h, K(x, \cdot) \rangle_\cH
\leq \|\Delta h\|_\cH \|K(x,\cdot)\|_\cH \leq r \|\Delta h\|_\cH.
\end{align*}
It follows that $\|\Delta h\|_\cH \leq \frac{\s r \| \bq \|_\infty}{\lambda}$.
Therefore, by $\s$-admissibility and the reproducing kernel property
\begin{align*}
|L(h', z) - L(h, z)| \leq \s |\Delta h(x)| \leq r \s \|\Delta\|_\cH
\leq \frac{\s^2 r^2 \| \bq \|_\infty}{\lambda}
\end{align*}
for all $z = (x,y)$, which concludes the proof.
\end{proof}

This result combined with Theorem~\ref{th:stability-bound} immediately
provides learning guarantees for $\bq$-weighted analogues of
support vector regression (SVR) and kernel ridge regression (KRR) algorithms
use $L_\e(y,y') = (|y-y'| -\e)_+$ and $L_2 (y,y') = (y-y')^2$
as loss functions.
These loss functions are $\sigma$-admissible with $\sigma=1$ and
$\sigma=2\sqrt{M}$ respectively, where $M$ is a bound on a loss function. 

\begin{corollary}
\label{cor:stability-krr-svr}
Assume that $r^2 = \sup_{x \in \cX} K(x,x)$. Suppose $L_\e \leq M_1$
and $L_2 \leq M_2$. Let $h_{\bZ_1^T}^\text{SVR}$ and $h_{\bZ_1^T}^\text{KRR}$
denote the hypothesis returned by SVR and KRR respectively when
trained on sample $\bZ_1^T$. Let $\bq$ be any vector in the probability
simplex. Then, for any $\d > 0$, each of the following bounds holds,
with probability at least $1-\d$:
\begin{align*}
&\cL_{T+1}(h_{\bZ_1^T}^\text{SVR}, \bZ_1^T)
 \leq
\sum_{t = 1}^T q_t L(h_{\bZ_1^T}^\text{SVR}, Z_{t+1}) +
 \frac{ r^2 \|\bq\|_\infty }{\lambda} + \disc(\bq)
 + 2 M \|\bq\|_2 \sqrt{2 \log \frac{2}{\d}}, \\
&\cL_{T+1}(h_{\bZ_1^T}^\text{KRR}, \bZ_1^T)
 \!\leq\!
\sum_{t = 1}^T q_t L(h_{\bZ_1^T}^\text{KRR}, Z_{t+1})\! +\! \frac{ 4 M r^2 \|\bq\|_\infty }{\lambda} + \disc(\bq)
 \!+\! 2 M \|\bq\|_2 \sqrt{2 \log \frac{2}{\d}}.
\end{align*}

\end{corollary}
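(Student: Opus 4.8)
The plan is to obtain both bounds as a direct instantiation of Theorem~\ref{th:stability-bound}, with the stability coefficient $\beta$ supplied by Theorem~\ref{th:stability-coef}. Since Theorem~\ref{th:stability-coef} requires the loss to be convex and $\s$-admissible and the weight vector $\bq$ to be non-negative, the two things to check are convexity and the correct admissibility constant for each of $L_\e$ and $L_2$; the non-negativity is immediate because $\bq$ lies in the probability simplex. Both losses are convex in $h$: $L_2(h,z) = (y-h(x))^2$ is a square of an affine function, and $L_\e(h,z) = (|y-h(x)|-\e)_+$ is the positive part of a convex function, each composed with the affine map $h \mapsto h(x)$.

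The one computation that requires care is the admissibility constant. For $L_\e$, the map $t \mapsto (|y-t|-\e)_+$ is a composition of $1$-Lipschitz maps, so $|L_\e(h,z) - L_\e(h',z)| \leq |h(x)-h'(x)|$ and $\s = 1$. For $L_2$, I would factor
\begin{align*}
\big|(y-h(x))^2 - (y-h'(x))^2\big|
= |h(x)-h'(x)| \, \big|2y - h(x) - h'(x)\big|,
\end{align*}
and then bound the second factor by $|y-h(x)| + |y-h'(x)| \leq 2\sqrt{M}$, using that the loss is bounded by $M$ so that $|y-h(x)| \leq \sqrt{M}$ on the relevant domain. This yields $\s = 2\sqrt{M}$, hence $\s^2 = 4M$.

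Feeding these constants into Theorem~\ref{th:stability-coef} gives stability coefficients $\beta_{\text{SVR}} \leq \frac{r^2 \|\bq\|_\infty}{\lambda}$ and $\beta_{\text{KRR}} \leq \frac{4 M r^2 \|\bq\|_\infty}{\lambda}$. Because $\bq$ is in the probability simplex we have $\|\bq\|_1 = 1$, so the term $\|\bq\|_1 \beta$ in the first inequality of Theorem~\ref{th:stability-bound} collapses to exactly $\beta$. Substituting each coefficient into that inequality, with $M$ denoting the corresponding loss bound in the concentration term, produces the two stated guarantees. The only genuine obstacle is the $L_2$ admissibility step: unlike the SVR case its Lipschitz constant is not absolute but depends on the range of the residuals, so one must invoke the loss bound to control $|2y - h(x) - h'(x)|$; everything else is a routine substitution requiring no further concentration or covering-number argument beyond what Theorem~\ref{th:stability-bound} already provides.
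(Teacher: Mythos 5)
Your proposal is correct and follows exactly the paper's (implicit) proof: the authors likewise obtain the corollary by plugging the stability coefficients from Theorem~\ref{th:stability-coef}, with $\sigma = 1$ for $L_\e$ and $\sigma = 2\sqrt{M}$ for $L_2$, into the first bound of Theorem~\ref{th:stability-bound} and using $\|\bq\|_1 = 1$. Your explicit verification of the $\sigma$-admissibility constants (in particular the factorization of the squared-loss difference and the bound $|y - h(x)| \leq \sqrt{M}$) is exactly the standard argument the paper relies on without spelling it out.
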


\ignore{
In this section, we study the problem of time series forecasting
through the lens of algorithmic stability.
As in the classical learning theory, algorithmic stability
provides an alternative tool for deriving generalizations
bounds for a class of stable algorithms.
We say that an algorithm $\cA$ is uniformly $\beta$-stable if
for any $z = (x,y) \in \cZ$ and
for any two samples $\bz$ and $\bz'$ that differ by exactly one point
we have that
\begin{align*}
|L(h(\bz)(x), y) - L(h(\bz')(x), y)| \leq \beta,  
\end{align*}
where $h(\bz)(\cdot)$ denotes a model obtained from training $\cA$
on the sample $\bz$.
We have the following result.

\begin{theorem}
\label{th:stability-bound}
Let $\bZ_1^T$ be any sequence of random variables.
Let $\bq = (q_1, \ldots, q_T)$ be any weight vector. For
any $\d > 0$, each of the following bounds holds with probability at
least $1 - \d$:
\begin{align*}
&\cL_{T+1}(h(\bZ_1^T), \bZ_1^T)
 \leq
\sum_{t = 1}^T q_t L(h(\bZ_1^T), Z_{t+1}) + \|\bq\|_1  \beta + \disc(\bq)
 + 2 M \|\bq\|_2 \sqrt{2 \log \frac{2}{\d}}, \\
&\sum_{t = 1}^T q_t L(h(\bZ_1^T), Z_{t+1})
 \leq
 \cL_{T+1}(h(\bZ_1^T), \bZ_1^T) +  \|\bq\|_1 \beta  + \disc(\bq)
 + 2 M \|\bq\|_2 \sqrt{2 \log \frac{2}{\d}}. 
\end{align*}
\end{theorem}

\begin{proof}
For each we let $\bZ_t^T$ and $\tl \bZ_t^T$ be independent
sequences of random variables drawn from $\bP_{t}^T(\cdot | \bZ_1^{t-1})$.
Define $\hh \bZ(t) = (\bZ_1^t, \tl \bZ_{t+1}^T)$ and
observe for any $g$
and any $s \leq t$, we have that
\begin{align}
\label{eq:expectation-equivalence}
\E[g(\bZ_1^T) | \bZ_1^s] = \E[g(\hh \bZ(t)) | \bZ_1^s].
\end{align}

Consider
$A_t = \cL_{t+1}(h_t(\bZ_1^T), \bZ_1^t)-\cL_{t+1}(h_t(\hh \bZ(t)), \bZ_1^t)$
and observe that this process forms a martingale difference sequence.
Indeed,
\begin{align*}
&\E_{\bZ_t^T, \tl \bZ_t^T}\Big[
\cL_{t+1}(h_t(\bZ_1^T), \bZ_1^t)-\cL_{t+1}(h_t(\hh \bZ(t)), \bZ_1^t)\Big] \\
&=
\E_{\bZ_t^T, \tl \bZ_t^T} \Big[
\E_{Z}[L(h(\bZ_1^T), Z)| \bZ_1^t] -
\E_{Z}[L(h(\hh \bZ(t)), Z)| \bZ_1^t] \Big]\\
&= \E_{\bZ_t^T, Z} [L(h(\bZ_1^T), Z)| \bZ_1^t] -
\E_{\hh \bZ(t), Z}[L(h(\hh \bZ(t)), Z)| \bZ_1^t] = 0,
\end{align*}
where $Z \sim \bP(\cdot|\bZ_1^t)$ is independent of
$\bZ_t^T$ and $\tl \bZ_t^T$ and last equality follows from
\eqref{eq:expectation-equivalence}.

Therefore, by Azuma's inequality, for any $\d > 0$ with probability
at least $1-\d/2$,
$\sum_{t=1}^T q_t A_t \leq \|\bq\|_2 \sqrt{2 \log \frac{2}{\d}}$.

Similarly, observe that
$B_t =
\E_{Z_{t+1}} [L(h_t(\hh \bZ(t+1)), Z_{t+1}) | \bZ_1^t] -
L(h(\bZ_1^T), Z_{t+1})$ is also
a martingale difference sequence by the same argument as
above. An important technical detail
is that
$\E_{Z_{t+1}} [L(h_t(\hh \bZ(t+1)), Z_{t+1}) | \bZ_1^t] \neq
\cL_{t+1}(h_t(\hh \bZ(t+1)), \bZ_1^t)$ and
$\cL_{t+1}(h_t(\hh \bZ(t+1)), \bZ_1^t) - L(h_t(\bZ_1^T), Z_{t+1})$
is not a martingale difference sequence.

By Azuma's inequality, 
for any $\d > 0$ with probability
at least $1-\d/2$,
$\sum_{t=1}^T q_t B_t \leq \|\bq\|_2 \sqrt{2 \log \frac{2}{\d}}$.
Since, by stability
\begin{align*}
&|\E_{Z_{t+1}} [L(h_t(\hh \bZ(t+1)), Z_{t+1}) | \bZ_1^t] - \cL_{t+1}(h(\hh \bZ(t)), \bZ_1^t)| \leq \beta
\end{align*}
it follows that, for any $\d>0$, with probability at least $1 - \d$,
\begin{align*}
\sum_{t=1}^T q_t\cL_{t+1}(h_t(\bZ_1^T), \bZ_1^t)
\leq \sum_{t=1}^T q_t L(h(\bZ_1^T), Z_{t+1})
+ \|\bq\|_1 \beta + 2 \|q\|_2 \sqrt{2 \log \frac{2}{\d}}.
\end{align*}
The first statement of the theorem follows from the fact that
\begin{align*}
\cL_{T+1}(h(\bZ_1^T), \bZ_1^T) \leq \sum_{t=1}^T q_t\cL_{t+1}(h_t(\bZ_1^T), \bZ_1^t) + \disc (\bq)
\end{align*}
and the second statement follows by symmetry.
\ignore{

Therefore,  forms a martingale sequence. Hence, with probability at least
$1-\delta/2$, $\sum_{t=1}^T q_t A_t \leq  M \|\bq\|_2 \sqrt{2 \log \frac{2}{\d}}$ by Azuma's inequality.

Similarly, $B_t = \E_{Z_{t+1}} [L(h_t(\bZ_1^t, Z_{t+1}, \tl \bZ_{t+2}^T), Z_{t+1}) | \bZ_1^t] -
L(h_t(\bZ_1^T), Z_{t+1})$ is also a martingale difference and
with probability at least
$1-\delta/2$, $\sum_{t=1}^T q_t B_t \leq  M \|\bq\|_2 \sqrt{2 \log \frac{2}{\d}}$.

Since $\cL_{t+1}(h_t(\hh \bZ(t)), \bZ_1^t) - \E_{Z_{t+1}} [L(h_t(\hh \bZ(t+1)), Z_{t+1}) | \bZ_1^t] \leq \beta_t$ by stability, it follows that for any $\d$
with probability at least $1-\d$ the following bound holds:
\begin{align*}
\sum_{t = 1}^T q_t \cL_{t+1}(h_t, \bZ_1^t)
& \leq
\sum_{t = 1}^T q_t L(h_t, Z_{t+1}) + \sum_{t = 1}^T q_t \beta_t +
 2 M \|\bq\|_2 \sqrt{2 \log \frac{2}{\d}},
\end{align*}
and the first statement of the lemma follows from the definition of the
discrepancy. The second statement follows by symmetry.}
\end{proof}

A large array of the existing algorithms can be shown to be stable
\citep{BousquetElisseeff2002,MohriRostamizadeh2010}.
In particular, kernel-based regularization algorithms that defined
by minimizing the following objective:
\begin{align}
\label{eq:kernel-based-reg-obj}
F_{\bZ_1^T}(h) = \frac{1}{T}\sum_{t=1}^T L(h, Z_t) + \lambda \|h\|^2_\cH
\end{align}
are known to be stable. In Equation~\eqref{eq:kernel-based-reg-obj},
$h$ belongs to the reproducing kernel Hilbert space $\cH$ associated to some
PDS kernel $K$, $\|\cdot\|_\cH$ is the norm defined by $K$ and $\lambda$
is a hyperparameter. If the loss function is assumed to be convex
and $\s$-admissible then $\beta \leq \frac{\s^2 r^2}{T \lambda}$,
where $r^2 = \sup_x K(x,x)$. Recall the loss function is
$\s$-admissible if for any $h,h' \in H$ and for all $z = (x,y) \in \cZ$,
$|L(h, z) - L(h', z)| \leq \s |h(x) - h'(x)|$. 

In our case, we are interested in $\bq$-weighted version of the objective
in \eqref{eq:kernel-based-reg-obj}:
\begin{align}
\label{eq:q-kernel-based-reg-obj}
F_{\bZ_1^T}(h, \bq) = \sum_{t=1}^T q_t L(h, Z_t) + \lambda \|h\|^2_\cH.
\end{align}

We have the following result.

\begin{theorem}
\label{th:stability-coef}
Let $K$ be a positive definite symmetric kernel such that
$r^2 = \sup_x K(x,x)$ and let $L$ be a convex and $\s$-admissible loss
function. Let $\bq = (q_1, \ldots, q_T)$ be any
non-negative weight vector.
Then, the kernel-based regularization algorithm defined by the
minimization of $F_{\bZ_1^T}(h, \bq)$ in \eqref{eq:q-kernel-based-reg-obj}
is $\beta$-stable with $\beta \leq \frac{\s^2 r^2 \|\bq\|_\infty}{\lambda}$.
\end{theorem}

\ignore{
\begin{proof}
Let $S = \bZ_1^T$ and $S'$ be a sample that difers from $S$ by exactly one
point, say $Z'_t$. Assume $h$ and $h'$ are minimizers of
$F_S = F_S(\cdot, \bq)$ and $F_{S'} = F_{S'}(\cdot, \bq)$ respectively.
We let $B_{F_S}$ denote the generalized Bregman divergence defined
by $F_S$, that is,
\begin{align*}
B_{F_S}(h_1 \parallel h_2) = F_S(h_1) - F_S(h_2) -
\langle h_1 - h_2, \d F_S(h_2) \rangle_\cH,
\end{align*}
where $\d F_S(h)$ is denotes any element of subgradient of $F_S$ at $h$
such that
$\d (\sum_{t=1}^T q_t L(h, Z_t)) = \d F_S(h) - \lambda \nabla \|h\|_\cH^2$
and $\d F_s(h) = 0$ whenever $h$ is a minimizer of $F_S$. Note
that this implies that $B_{F_S} = B_{\hh R_S} + \lambda B_N$,
where $N(h) = \|h\|_\cH^2$ and $\hh R_S = \sum_{t=1}^T q_t L(h, Z_t)$.
Then, since generalized Bregman divergence is non-negative,  we can write,
\begin{align*}
B_{F_S}(h' \parallel h) + B_{F_{S'}}(h \parallel h')
\geq \lambda
B_{N}(h' \parallel h) + B_{F_{N}}(h \parallel h').
\end{align*}
Observe that $B_N(h'\parallel h) + B_N(h\parallel h')
= - \langle h'-h, 2h \rangle - \langle h - h', 2h' \rangle = 2\|h' - h\|_\cH^2$. Let $\Delta h = h' - h$. Then it follows that
\begin{align*}
2 \lambda \|\Delta h \|_\cH^2 &\leq
B_{F_S}(h' \parallel h) + B_{F_{S'}}(h \parallel h') \\
&= F_S(h') - F_S(h) -
\langle h' - h, \d F_{S}(h) \rangle_\cH
+ F_{S'}(h) - F_{S'}(h') -
\langle h - h, \d F_{S'}(h') \rangle_\cH \\
&= F_S(h') - F_S(h) + F_{S'}(h) - F_{S'}(h') \\
&= \hh R_S(h') - \hh R_S(h) + \hh R_{S'}(h) - \hh R_{S'}(h'),
\end{align*}
where the first equality uses the definition of the generalized Bregman
divergence, second equality follows from the fact that $h$ and $h'$
are minimizers and last equality follows from the definition of
$F_S$ and $F_{S'}$. By $\s$-admissibility of $L$ and the fact that
$S$ and $S'$ differ by exactly one point it follows that
\begin{align*}
2 \lambda \|\Delta h \|_\cH^2 \leq
q_t ( L(h', Z_t) - L(h, Z_t) + L(h, Z'_t) - L(h', Z'_t) )
\leq \s q_t (|\Delta h(X_t)| + |\Delta h(X'_t)|).  
\end{align*}
Applying the reproducing kernel property and Cauchy-Schwartz inequality,
for all $x \in \cX$,
\begin{align*}
\Delta(x) = \langle \Delta h, K(x, \cdot) \rangle_\cH
\leq \|\Delta h\|_\cH \|K(x,\cdot)\|_\cH \leq r \|\Delta h\|_\cH.
\end{align*}
It follows that $\|\Delta h\|_\cH \leq \frac{\s r \| \bq \|_\infty}{\lambda}$.
Therefore, by $\s$-admissibility and reproducing kernel property
\begin{align*}
|L(h', z) - L(h, z)| \leq \s |\Delta h(x)| \leq r \s \|\Delta\|_\cH
\leq \frac{\s^2 r^2 \| \bq \|_\infty}{\lambda}
\end{align*}
for all $z = (x,y)$ and this concludes the proof.
\end{proof}}

The full proof of Theorem~\ref{th:stability-coef}
is given in Appendix~\ref{sec:proofs}.
This result combined with Theorem~\ref{th:stability-bound} immediately
provides learning guarantees for $\bq$-weighted analogues of
support vector regression (SVR) and kernel ridge regression (KRR) algorithms
that use $L_\e(y,y') = (|y-y'| -\e)_+$ and $L_2 (y,y') = (y-y')^2$
as loss functions.
These loss functions are $\sigma$-admissible with $\sigma=1$ and
$\sigma=2\sqrt{M}$ respectively, where $M$ is a bound on a loss function. 

\begin{corollary}
\label{cor:stability-krr-svr}
Assume that $r^2 = \sup_{x \in \cX} K(x,x)$. Suppose $L_\e \leq M_1$
and $L_2 \leq M_2$. Let $h_{\bZ_1^T}^\text{SVR}$ and $h_{\bZ_1^T}^\text{KRR}$
denote the hypothesis returned by SVR and KRR respectively when
trained on sample $\bZ_1^T$. Let $\bq$ be any vector in the probability
simplex. Then, for any $\d > 0$, with probability
at least $1-\d$ the following bounds hold:
\begin{align*}
&\cL_{T+1}(h_{\bZ_1^T}^\text{SVR}, \bZ_1^T)
 \leq
\sum_{t = 1}^T q_t L(h_{\bZ_1^T}^\text{SVR}, Z_{t+1}) +
 \frac{ r^2 \|\bq\|_\infty }{\lambda} + \disc(\bq)
 + 2 M \|\bq\|_2 \sqrt{2 \log \frac{2}{\d}}, \\
&\cL_{T+1}(h_{\bZ_1^T}^\text{KRR}, \bZ_1^T)
 \leq
\sum_{t = 1}^T q_t L(h_{\bZ_1^T}^\text{KRR}, Z_{t+1}) + \frac{ 4 M r^2 \|\bq\|_\infty }{\lambda} + \disc(\bq)
 + 2 M \|\bq\|_2 \sqrt{2 \log \frac{2}{\d}}.
\end{align*}

\end{corollary}}

\section{Estimating Discrepancy}
\label{sec:discrepancy}

In Section~\ref{sec:bounds}, we showed that the discrepancy $\disc(\bq)$
is crucial for forecasting non-stationary time series. In particular,
if we could select a distribution $\bq$ over the sample $\bZ_1^T$ that
would minimize the discrepancy $\disc(\bq)$ and use it to weight training
points, then we would have a better learning guarantee for an
algorithm trained on this weighted sample. In some special cases, the
discrepancy $\disc(\bq)$ can be computed analytically. However, in
general, we do not have access to the distribution of $\bZ_1^T$ and
hence we need to estimate the discrepancy from data. Furthermore,
in practice, we never observe $Z_{T + 1}$ and it is not possible to
estimate $\disc(\bq)$ without some further assumptions. One natural
assumption is that the distribution $\bP_t$ of $Z_t$ does not change
drastically with $t$ on average. Under this assumption the last $s$
observations $\bZ_{T-s+1}^T$ are effectively drawn from the
distribution close to $\bP_{T + 1}$. More precisely, we can write
\begin{align*}
\disc(\bq) \leq &
\sup_{f \in \cF} \bigg(
\frac{1}{s}\sum_{t = T-s+1}^T \E[f(Z_t) | \bZ_1^{t-1}] -
                        \sum_{t = 1}^T q_t \E[f(Z_t) | \bZ_1^{t-1}] \bigg) \\
& + \sup_{f \in \cF} \bigg(\E[f(Z_{T + 1}) | \bZ_1^{T}] -
              \frac{1}{s}\sum_{t = T-s+1}^T \E[f(Z_t) | \bZ_1^{t-1}] \bigg).
\end{align*}
We will assume that the second term, denoted by $\disc_s$, is
sufficiently small and will show that the first term can be estimated
from data. But, we first note that our assumption is necessary for
learning in this setting. Observe that
\begin{align*}
\sup_{f \in \cF} \Big(\E[Z_{T + 1}|\bZ_1^T] - \E[f(Z_r)|\bZ_1^{r-1}]\Big)
&\leq
\sum_{t=r}^T \sup_{f \in \cF} \Big( \E[f(Z_{t + 1}) | \bZ_1^{t}] -
\E[f(Z_t)|\bZ_1^{t-1}] \Big)\\ &\leq
M \sum_{t=r}^T \| \bP_{t + 1}(\cdot|\bZ_1^{t}) -
                  \bP_{t}(\cdot|\bZ_1^{t-1}) \|_\text{TV},
\end{align*}
for all $r = T - s + 1, \ldots, T$.  Therefore, we must have
\begin{align*}
\disc_s \leq \frac{1}{s} \sum_{t=T-s+1}
\sup_{f \in \cF} \Big(\E[Z_{T + 1}|\bZ_1^{T}] - \E[f(Z_t)|\bZ_1^t]\Big) \leq
\frac{s+1}{2} M \gamma,
\end{align*}
where
$\gamma \!=\! \sup_t \! \| \bP_{t + 1}(\cdot|\bZ_1^{t}) -
\bP_t(\cdot|\bZ_1^{t-1}) \|_\text{TV}$.
\cite{BarveLong1996} showed that $[\VCdim(H) \gamma]^{\frac{1}{3}}$ is a lower
bound on the generalization error in the setting of binary
classification where $\bZ_1^T$ is a sequence of independent but not
identically distributed random variables (drifting). This setting is a
special case of the more general scenario that we are considering.

The following result shows that we can estimate the first term
in the upper bound on $\disc(\bq)$.

\begin{theorem}
\label{th:discrepancy}
Let $\bZ_1^T$ be a sequence of random variables.  Then, for any
$\d > 0$, with probability at
least $1 - \delta$, the following holds for all $\alpha > 0$:
\begin{align*}
\sup_{f \in \cF} \Bigg( \sum_{t = 1}^T(p_t - q_t)
 \E[f(Z_t)|\bZ_1^{t-1}]\Bigg) \leq
 & \sup_{f \in \cF} \Bigg( \sum_{t = 1}^T (p_t - q_t) f(Z_t) \Bigg) +  B,
\end{align*}
where
$B = 2\alpha + M \|\bq - \bp\|_2 \sqrt{2\log{\frac{\E_{\bz \sim T(\bp)}[
      \cN_1(\alpha, \cF, \bz) ]}{\d}}} $
and where $\bp$ is the uniform distribution over the last $s$ points.
\end{theorem}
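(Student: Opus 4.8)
The plan is to observe that this statement is, up to a relabeling of the weights, exactly Theorem~\ref{th:bound} applied to the signed weight vector $\bp - \bq$ in place of $\bq$. First I would use that the difference of two suprema is bounded by the supremum of the difference, so that
\[
\sup_{f \in \cF} \sum_{t=1}^T (p_t - q_t)\E[f(Z_t)|\bZ_1^{t-1}]
- \sup_{f \in \cF} \sum_{t=1}^T (p_t - q_t) f(Z_t)
\leq \sup_{f \in \cF} \sum_{t=1}^T (p_t - q_t)\big(\E[f(Z_t)|\bZ_1^{t-1}] - f(Z_t)\big).
\]
Thus it suffices to bound the probability that the right-hand side exceeds $\e$. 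Note that, unlike in Theorem~\ref{th:bound}, there is no $\E[f(Z_{T+1})|\bZ_1^T]$ term to cancel, so this reduction is immediate and requires no appeal to the definition of discrepancy.

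The resulting event is precisely the core event controlled in the proof of Theorem~\ref{th:bound}, with every occurrence of $q_t$ replaced by $p_t - q_t$. I would therefore rerun that argument verbatim: apply Markov's inequality to the exponential moment; introduce a decoupled tangent sequence ${\bZ'}_1^T$ and use the tangent identities together with Jensen's inequality to pass to $\sum_t (p_t - q_t)(f(Z'_t) - f(Z_t))$; apply the symmetrization result of Lemma~\ref{lm:sym} to introduce Rademacher variables on a $\cZ \times \cZ$-valued tree; split the two suprema by Young's inequality; replace $\cF$ by its minimal $\bq$-weighted $\ell_1$ $\alpha$-cover on the tree, paying an additive $\alpha$; and finally apply Hoeffding's bound coordinate by coordinate (using that $\bc_t(\bs)$ depends only on $\s_1,\dots,\s_{t-1}$) followed by a union bound over the cover. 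Here the expected covering number is taken with respect to the tree distribution $T(\bp)$ of the underlying process exactly as before, while the weight vector entering the concentration is $\bp - \bq$.

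The only point to verify is that this concentration step produces $\|\bp - \bq\|_2$ where Theorem~\ref{th:bound} produced $\|\bq\|_2$: each Hoeffding factor now contributes $2\lambda^2 (p_t - q_t)^2 M^2$, which sums to $2\lambda^2 M^2 \|\bp - \bq\|_2^2$; crucially, Theorem~\ref{th:bound} never used nonnegativity of the weights, so the signed vector $\bp - \bq$ is admissible throughout. After optimizing over $\lambda$ this yields
\[
\P\bigg( \sup_{f \in \cF} \sum_{t=1}^T (p_t - q_t)\big(\E[f(Z_t)|\bZ_1^{t-1}] - f(Z_t)\big) \geq \e \bigg)
\leq \E_{\bz \sim T(\bp)}\big[\cN_1(\alpha, \cF, \bz)\big]\, \exp\!\Big(-\tfrac{(\e - 2\alpha)^2}{2M^2 \|\bp - \bq\|_2^2}\Big).
\]
Setting the right-hand side equal to $\d$ and solving for $\e$ gives exactly $\e = 2\alpha + M\|\bp - \bq\|_2\sqrt{2\log(\E_{\bz \sim T(\bp)}[\cN_1(\alpha,\cF,\bz)]/\d)} = B$, and the free parameter $\alpha$ is handled as in Corollary~\ref{cor:bound}. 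The main (and essentially only) obstacle is bookkeeping: confirming that the tangent-sequence and symmetrization steps are genuinely insensitive to the sign of the weights, so that the entire derivation of Theorem~\ref{th:bound} transfers unchanged to $\bp - \bq$.
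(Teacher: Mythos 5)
Your proposal is correct and matches the paper's own proof, which likewise reduces the difference of suprema to $\sup_{f \in \cF} \sum_{t=1}^T (p_t - q_t)\big(\E[f(Z_t)|\bZ_1^{t-1}] - f(Z_t)\big)$ and then reruns the argument of Theorem~\ref{th:bound} with the signed weight vector $\bp - \bq$ in place of $\bq$. Your explicit check that no step of that argument uses nonnegativity of the weights is exactly the point that makes the transfer valid, and is consistent with the paper's remark that its bounds do not require $\bq \geq 0$.
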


\begin{proof}
The first step is to observe that
\begin{align*}
\sup_{f \in \cF} \Bigg( \sum_{t = 1}^T(p_t - q_t)
 \E[f(Z_t)|\bZ_1^{t-1}]\Bigg) -
 & \sup_{f \in \cF} \Bigg( \sum_{t = 1}^T (p_t - q_t) f(Z_t) \Bigg)\\ &\leq
\sup_{f \in \cF} \Bigg( \sum_{t = 1}^T(p_t - q_t)(
 \E[f(Z_t)|\bZ_1^{t-1}] -  f(Z_t)) \Bigg)
\end{align*}
and then apply arguments similar to those in proof of Theorem~\ref{th:bound}.
\end{proof}

\ignore{The proof of this result is given in Appendix~\ref{sec:proofs}.}
Theorem~\ref{th:bound} and Theorem~\ref{th:discrepancy} combined with
the union bound yield the following result.
\begin{corollary}
\label{cor:data-dependent}
Let $\bZ_1^T$ be a sequence of random variables.  Then, for any
$\d > 0$, with probability at
least $1 - \delta$, the following holds for all $f \in \cF$ and all $\alpha>0$:
\begin{align*}
\E[f(Z_{T + 1})|\bZ_1^T] \leq
\sum_{t = 1}^T q_t f(Z_t) + \widetilde{\disc}(\bq) &+ \disc_s +
4\alpha \\ &+ M \big[ \| \bq \|_2 
+  \| \bq - \bp \|_2 \big] \sqrt{2\log{\tfrac{2\E_{\bv \sim T(\bp)}[ \cN_1(\alpha, \cG, \bz) ]}{\d}}},
\end{align*}
where 
$\widetilde{\disc}(\bq) =
\sup_{f \in \cF} \Big( \sum_{t = 1}^T (p_t - q_t) f(Z_t)\Big)$.
\end{corollary}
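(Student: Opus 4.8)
The plan is to follow the recipe announced just before the statement: fuse the concentration bound of Corollary~\ref{cor:bound} (itself a consequence of Theorem~\ref{th:bound}) with the discrepancy-estimation bound of Theorem~\ref{th:discrepancy} via a union bound. The starting point is the deterministic decomposition of $\disc(\bq)$ recorded at the beginning of this section, namely
\[
\disc(\bq) \leq \sup_{f \in \cF}\Big(\textstyle\sum_{t=1}^T (p_t - q_t)\,\E[f(Z_t)\mid\bZ_1^{t-1}]\Big) + \disc_s,
\]
where $\bp$ is the uniform distribution on the last $s$ sample points, so that $\sum_{t=1}^T p_t\,\E[f(Z_t)\mid\bZ_1^{t-1}] = \frac{1}{s}\sum_{t=T-s+1}^T \E[f(Z_t)\mid\bZ_1^{t-1}]$ and the residual term is exactly $\disc_s$. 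Note that the first summand is precisely the left-hand side of Theorem~\ref{th:discrepancy}.

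First I would bound that first summand. Applying Theorem~\ref{th:discrepancy} with confidence parameter $\delta/2$ gives that, with probability at least $1-\delta/2$ and simultaneously for all $\alpha>0$,
\[
\sup_{f\in\cF}\Big(\textstyle\sum_{t=1}^T (p_t-q_t)\,\E[f(Z_t)\mid\bZ_1^{t-1}]\Big) \leq \widetilde{\disc}(\bq) + 2\alpha + M\|\bq-\bp\|_2\sqrt{2\log\tfrac{2\,\E_{\bz\sim T(\bp)}[\cN_1(\alpha,\cF,\bz)]}{\delta}}.
\]
Combined with the decomposition, this yields a high-probability upper bound on $\disc(\bq)$ in terms of the purely data-dependent quantity $\widetilde{\disc}(\bq)$, the residual $\disc_s$, and an explicit error term.

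Next I would invoke Corollary~\ref{cor:bound}, again with confidence $\delta/2$, to obtain that with probability at least $1-\delta/2$, for all $f\in\cF$ and all $\alpha>0$,
\[
\E[f(Z_{T+1})\mid\bZ_1^T] \leq \textstyle\sum_{t=1}^T q_t f(Z_t) + \disc(\bq) + 2\alpha + M\|\bq\|_2\sqrt{2\log\tfrac{2\,\E_{\bv\sim T(\bp)}[\cN_1(\alpha,\cF,\bv)]}{\delta}}.
\]
A union bound makes both events hold jointly with probability at least $1-\delta$, and since each input statement is uniform over $\alpha$ I may use a single common $\alpha$. Substituting the discrepancy bound from the previous step and collecting terms gives the claim: the two $2\alpha$ contributions add to $4\alpha$, while the two concentration terms carry the identical logarithmic argument $2\,\E[\cN_1(\alpha,\cF,\cdot)]/\delta$ after the $\delta/2$ substitution and therefore factor into $M\big[\|\bq\|_2 + \|\bq-\bp\|_2\big]\sqrt{2\log(\cdots)}$.

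The argument is essentially bookkeeping, so I do not expect a deep obstacle; the only points requiring care are (i) splitting the confidence budget as $\delta/2 + \delta/2$ so that the two guarantees can be asserted simultaneously, which is exactly what produces the factor $2$ inside the logarithm, and (ii) checking that the two square-root terms share the same covering-number quantity and the same $\alpha$ so that they can be factored—legitimate because both input bounds hold uniformly in $\alpha$. A minor check is that $\bp$ being uniform on the last $s$ points makes the first term of the decomposition coincide verbatim with the left-hand side of Theorem~\ref{th:discrepancy}, which is what lets the two results compose cleanly.
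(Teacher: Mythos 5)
Your proposal is correct and is precisely the argument the paper intends: the paper's own justification is the one-line remark that Theorem~\ref{th:bound} and Theorem~\ref{th:discrepancy} combined with the union bound yield the result, and your write-up simply fills in that bookkeeping (the decomposition of $\disc(\bq)$ into the estimable term plus $\disc_s$, the $\delta/2$ split producing the factor $2$ in the logarithm, the common $\alpha$ giving $4\alpha$, and the factored sum $\|\bq\|_2 + \|\bq-\bp\|_2$). No gaps.
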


In Section~\ref{sec:algo}, we combine these results with
Theorem~\ref{th:uniform} that extends learning guarantees to hold
uniformly over $\bq$s to derive novel algorithms for non-stationary time series
prediction.

\section{Algorithms}
\label{sec:algo}

In this section, we use our learning guarantees to devise algorithms
for forecasting non-stationary time series.  We consider a broad
family of kernel-based hypothesis classes with regression losses
which we analyzed in
Section~\ref{sec:kernel-classes} and Section~\ref{sec:stability}.

Suppose $L$ is the squared loss and
$H = \set{\bx \to \bw \cdot \Psi(\bx) \colon \|\bw\|_\cH \leq
  \Lambda}$,
where $\Psi\colon \cX \to \cH$ is a feature mapping from $\cX$ to a
Hilbert space $\cH$. Theorem~\ref{th:lin-hypothesis-bound},
Theorem~\ref{th:uniform} and Theorem~\ref{cor:data-dependent}
suggest that we should solve
the following optimization problem:
\begin{align}
\label{eq:start-opt}
\min_{\bq \in \cQ, \bw} \Bigg\{\sum_{t=1}^T q_t (\bw \cdot \Psi(x_t) - y_t)^2
+ \widetilde{\disc}(\bq) + \lambda_1 \|\bw\|^2_\cH \Bigg\}
\end{align}
where $\lambda_1$ is a regularization hyperparameter and
$\cQ$ is some convex bounded domain. This optimization problem
is quadratic (and hence convex) in $\bw$ and convex in $\bq$
since $\widetilde{\disc}(\bq)$ is a convex function of $\bq$
as a supremum of linear functions. However, this objective
is not jointly convex in $(\bq, \bw)$. Of course, one could
apply a block coordinate descent to solve this objective
which alternates between optimizing over $\bq$ and solving a QP
in $\bw$. In general, no convergence guarantees can be provided
for this algorithm. In addition, each $\bq$-step involves
finding $\widetilde{\disc}(\bq)$ which in itself may be a costly
procedure. In the sequel we address both of these concerns.

First, let us observe that if
$a(\bw) = \sum_{t=1}^T p_t (\bw \cdot \Psi(x_t) - y_t)^2$ with $p_t$ being
a uniform distribution on the last $s$ points, then by definition
of empirical discrepancy
\begin{align*}
\widetilde{\disc}(\bw) &=
\sup_{\bw \leq \Lambda} \Bigg(a(\bw) - \sum_{t=1}^T q_t (\bw \cdot \Psi(x_t) - y_t)^2\Bigg)
\\ &=
\sup_{\bw \leq \Lambda} \Bigg(\sum_{t=1}^T(u_t - q_t)a(\bw) - \sum_{t=1}^T q_t ((\bw \cdot \Psi(x_t) - y_t)^2 - a(\bw))\Bigg)
\\ &\leq
\sum_{t=1}^T q_t d_t + \lambda_2 \|\bq - \bv\|_p 
\end{align*}
where $\lambda_2$ is some constant (a hyperparameter), $\bv$ is a prior
typically chosen to uniform weights $\bu$, $p\geq 1$ and $d_t$s are
instantaneous discrepancies defined by
\begin{align*}
d_t = \sup_{\bw \leq \Lambda} \Bigg| a(\bw) - (\bw \cdot \Psi(x_t) - y_t)^2\Bigg|.
\end{align*}
Note that $d_t$ can also be defined in terms of windows averages, where
$(\bw \cdot \Psi(x_t) - y_t)^2$ is replaced with $\frac{1}{2l}\sum_{s=t-l}^{t+l}(\bw \cdot \Psi(x_s) - y_s)^2$ for some $l$.

This leads to the following optimization problem:
\begin{align}
\label{eq:joint-opt}
\min_{\bq \in \cQ, \bw} & \ \bigg\{
   \sum_{t = 1}^T q_t (\bw \cdot \Psi(x_t) - y_t)^2
 +  \sum_{t = 1}^T d_t q_t  + \lambda_1 \|\bw\|_\cH^2
 + \lambda_2 \|\bq - \bv\|_p \bigg\}.
\end{align}
This optimization problem is still not convex but now $d_t$s
can be precomputed before solving \eqref{eq:joint-opt} which may
be considerably more efficient. For instance, the $\bq$-step
in the block coordinate descent reduces to a simple LP.
Below we show how \eqref{eq:joint-opt} can be
turned into a convex optimization problem when $\cQ = [0,1]^T$.

\subsection{Convex Optimization over $[0,1]^T$ and Dual Problems}

In this section, we consider the case when $\cQ = [0,1]^T$ and we show
how \eqref{eq:joint-opt} can be turned into a convex optimization
problem which then can be solved efficiently.
We apply change of 
variables $r_t = 1/q_t$, which leads to the following optimization problem:
\begin{align}
\label{eq:var-change-opt}
\min_{\br \in \cD, \bw} & \ \bigg\{
   \sum_{t = 1}^T \frac{(\bw \cdot \Psi(x_t) - y_t)^2 + d_t}{r_t}
 + \lambda_1 \|\bw\|_\cH^2
 + \lambda_2 \Bigg(\sum_{t=1}^T |r^{-1}_t - v_t|^p\Bigg)^{1/p} \bigg\},
\end{align}
where $\cD = \set{\br \colon r_t \geq 1}$. We can remove
the $(\cdot)^{1/p}$ on the last term by first turning it into
a constraint, raising it to the $p$th power and then moving it back
to the the objective by introducing a Lagrange multiplier:
\begin{align}
\label{eq:var-change-opt-2}
\min_{\br \in \cD, \bw} & \ \bigg\{
   \sum_{t = 1}^T \frac{(\bw \cdot \Psi(x_t) - y_t)^2 + d_t}{r_t}
 + \lambda_1 \|\bw\|_\cH^2
 + \lambda_2 \sum_{t=1}^T |r^{-1}_t - v_t|^p \bigg\}.
\end{align}
Note that the first
two terms in \eqref{eq:var-change-opt} are jointly convex in $(\br, \bw)$:
the first term is a sum of quadratic-over-linear functions which
is convex and the second term is a squared norm which is again convex.

The last step is to observe that
$|r_t^{-1} - v_t| = |v_t r_t^{-1} |^p |r_t - v_t^{-1}|^p \leq v_t^p |r_t - v_t|^p $ since $r_t^{-1} \leq 1$. Therefore, we have the following
optimization problem:
\begin{align}
\label{eq:convex-opt}
\min_{\br \in \cD, \bw} & \ \bigg\{
   \sum_{t = 1}^T \frac{(\bw \cdot \Psi(x_t) - y_t)^2 + d_t}{r_t}
 + \lambda_1 \|\bw\|_\cH^2
 + \lambda_2 \sum_{t=1}^T v_t^p |r_t - v_t^{-1}|^p \bigg\},
\end{align}
which is jointly convex over $(\br, \bw)$.

For many real-world problems, $\Psi$ is specified implicitly
via a PDS kernel $K$ and it is computationally advantageous to consider
the dual formulation of \eqref{eq:convex-opt}.
Using the dual problem associated to $\bw$
\eqref{eq:convex-opt} can be
rewritten as follows:
\begin{equation}
\label{eq:convex-dual}
\min_{\br \in \cD}\bigg\{ \max_{\ba} \Big\{ -\lambda_1 \sum_{t=1}^T r_t \alpha_t^2
 - \ba^T  \bK \ba + 2 \lambda_1 \ba^T \bY  \Big\} 
  + \sum_{t=1}^T \frac{d_t}{r_t} 
+ \lambda_2 \sum_{t=1}^T v_t^p |r_t - v_t^{-1}|^p \bigg\},
\end{equation}
where $\bK$ is the kernel matrix and $\bY = (y_1, \ldots, y_T)^T$.
We provide a full derivation of this result in Appendix~\ref{sec:opt}.

Both \eqref{eq:convex-opt} and \eqref{eq:convex-dual}
can be solved using standard descent methods,
where, at each iteration, we solve a standard QP in $\ba$ or $\bw$,
which admits a closed-form solution.

\subsection{Discrepancy Computation}

The final ingredient that is needed to solve optimization
problems \eqref{eq:convex-opt} or \eqref{eq:convex-dual} is
an algorithm to find instantaneous discrepancies $d_t$s.
Recall that in general these are defined as
\begin{align}
\label{eq:inst-disc}
\sup_{\bw' \leq \Lambda} \Bigg|\sum_{s=1}^T p_s \ell(\bw' \cdot
\Psi(x_s) - y_s) - \ell(\bw' \cdot \Psi(x_t) - y_t) \Bigg|
\end{align}
where $\ell$ is some specified loss function.
For an arbitrary $\ell$ this may be a difficult optimization problem.
However, observe that if $\ell$ is a convex function then
the objective in \eqref{eq:inst-disc} is a difference
of convex functions so we may use DC-programming to solve this problem.
For general loss functions,
the DC-programming approach only guarantees convergence to a stationary point.
However, for the squared loss, our problem can be cast as an instance of the
trust region problem, which can be solved globally using the DCA
algorithm of
\cite{TaoAn1998}.

\subsection{Two-Stage Algorithms}

An alternative simpler algorithm based on the data-dependent bounds of
Corollary~\ref{cor:data-dependent} consists of first finding a
distribution $\bq$ minimizing the discrepancy and
then using that to find a hypothesis minimizing the
(regularized) weighted empirical risk.  This leads to the following
two-stage procedure. First, we find a solution $\bq^*$ of the
following convex optimization problem:
\begin{equation}
\label{eq:stage-one}
\min_{\bq \geq 0} \bigg\{
 \sup_{\bw' \leq \Lambda} \Big(
    \sum_{t = 1}^T(p_t - q_t) (\bw' \cdot \Psi(x_t) - y_t)^2 \Big)
 \bigg\},
\end{equation}
where $\Lambda$ is parameter that can be selected via
cross-validation
(for example using techniques in \citep{KuznetsovMohri2016b}).
Our generalization bounds hold for arbitrary weights
$\bq$ but we restrict them to being positive sequences.  Note that other
regularization terms such as $\|\bq\|^2_2$ and $\|\bq - \bp\|^2_2$
from the bound of Corollary~\ref{cor:data-dependent} can be
incorporated in the optimization problem, but we discard them to
minimize the number of parameters. This problem can be solved using
standard descent optimization methods, where, at each step, we use
DC-programming to evaluate the supremum over $\bw'$.

The solution $\bq^*$ of \eqref{eq:stage-one} is then used
to solve the following (weighted) kernel ridge regression problem:
\begin{equation}
\label{eq:krr}
\min_\bw \bigg\{
 \sum_{t = 1}^T q^*_t (\bw \cdot \Psi(x_t) - y_t)^2 + \lambda_2 \| \bw \|^2_\cH
\bigg\}.
\end{equation}
Note that, in order to guarantee the convexity of this problem, we
require $\bq^* \geq 0$. This algorithm directly benefits from
the stability-based generalization bounds presented in
Corollary~\ref{cor:stability-krr-svr}. We note that one can
also use $\e$-sensitive loss instead of squared loss for this algorithm.

\ignore{
\subsection{Further Extensions}

We conclude this section with an observation that learning guarantees
that we presented in Section~\ref{sec:bounds}, Section~\ref{sec:kernel-classes}
and Section~\ref{sec:stability} can be used to derive algorithms
for many other problems that involve time series data with
loss functions and hypothesis set distinct from regression losses
and linear hypotheses considered in this section.}

\begin{figure}[!ht]
  \begin{center}
    \includegraphics[scale=0.2]{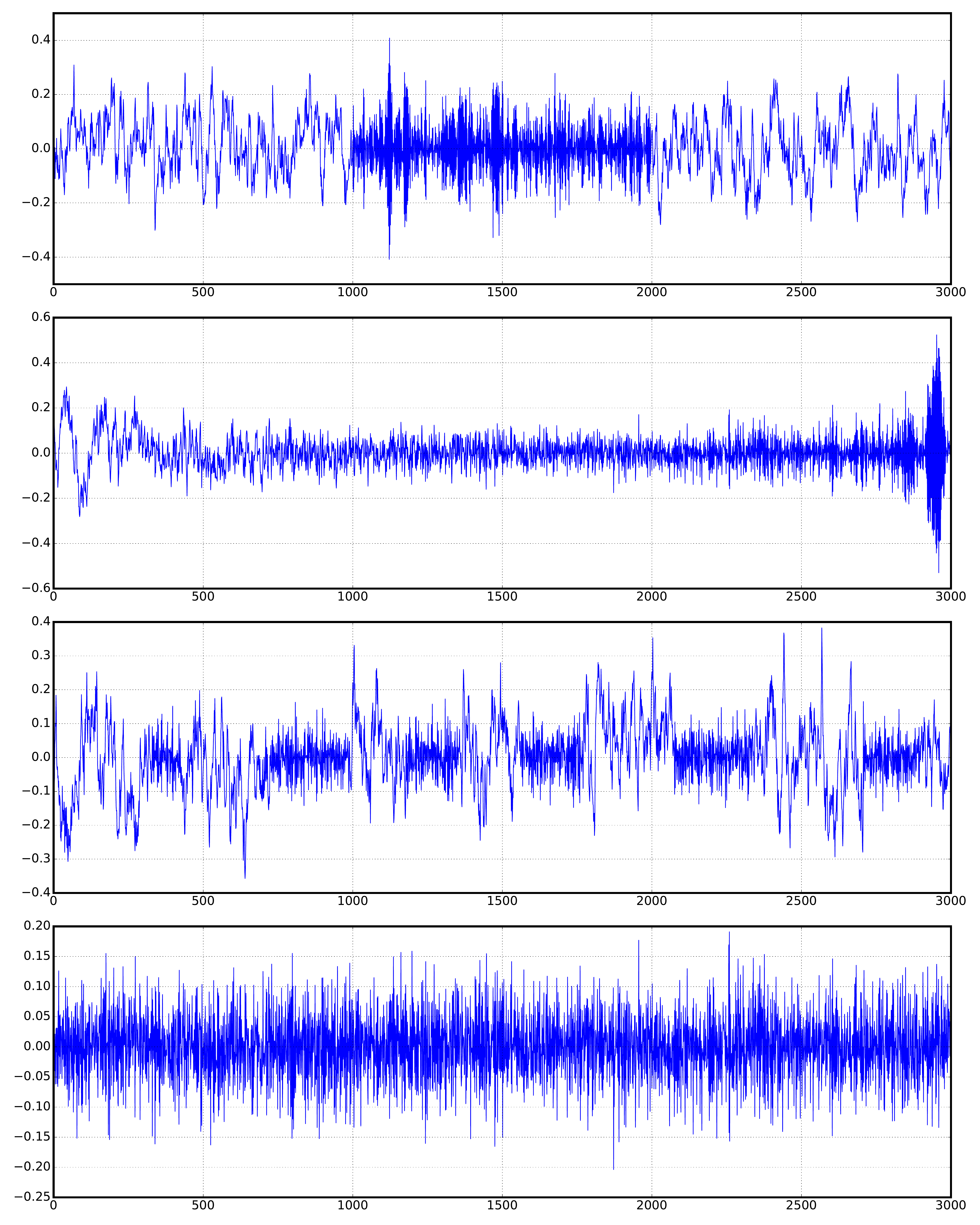}
  \end{center}%
\caption[Synthetic datasets.]{Synthetic datasets (top to bottom):
{\tt ads1}, {\tt ads2}, {\tt ads3}, {\tt ads4}. }
\label{fig:syn_all_ts}
\end{figure}

\begin{figure}[t]
  \begin{center}
    \includegraphics[scale=0.2]{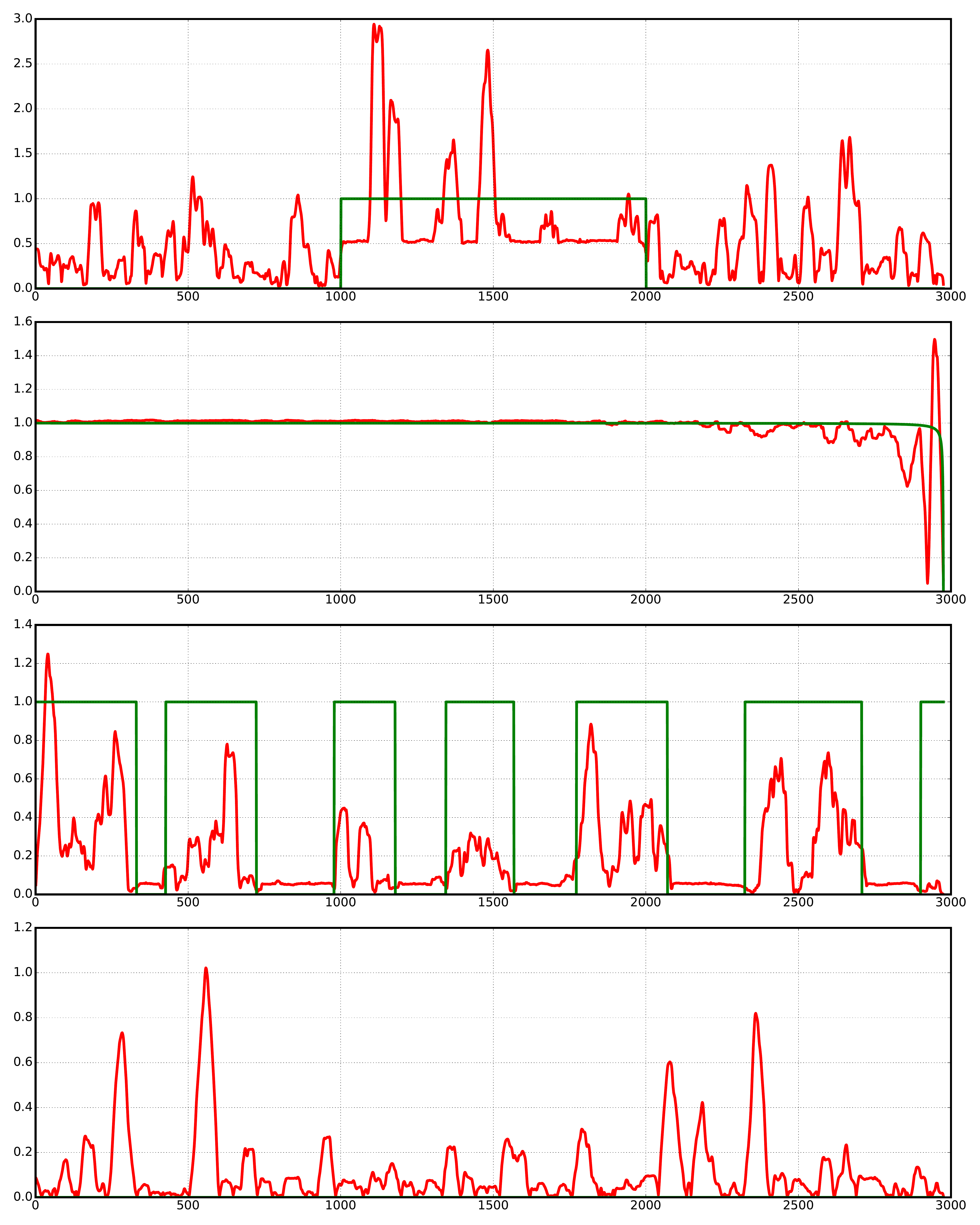}
  \end{center}%
\caption[True and estimated instantaneous discrepancies for synthetic
data.]{True (green) and estimated (red) instantaneous discrepancies
for synthetic data (top to bottom): {\tt ads1}, {\tt ads2},
{\tt ads3}, {\tt ads4}. }
\label{fig:all_ds}
\end{figure}

\begin{figure}[!ht]
  \begin{center}
    \includegraphics[scale=0.2]{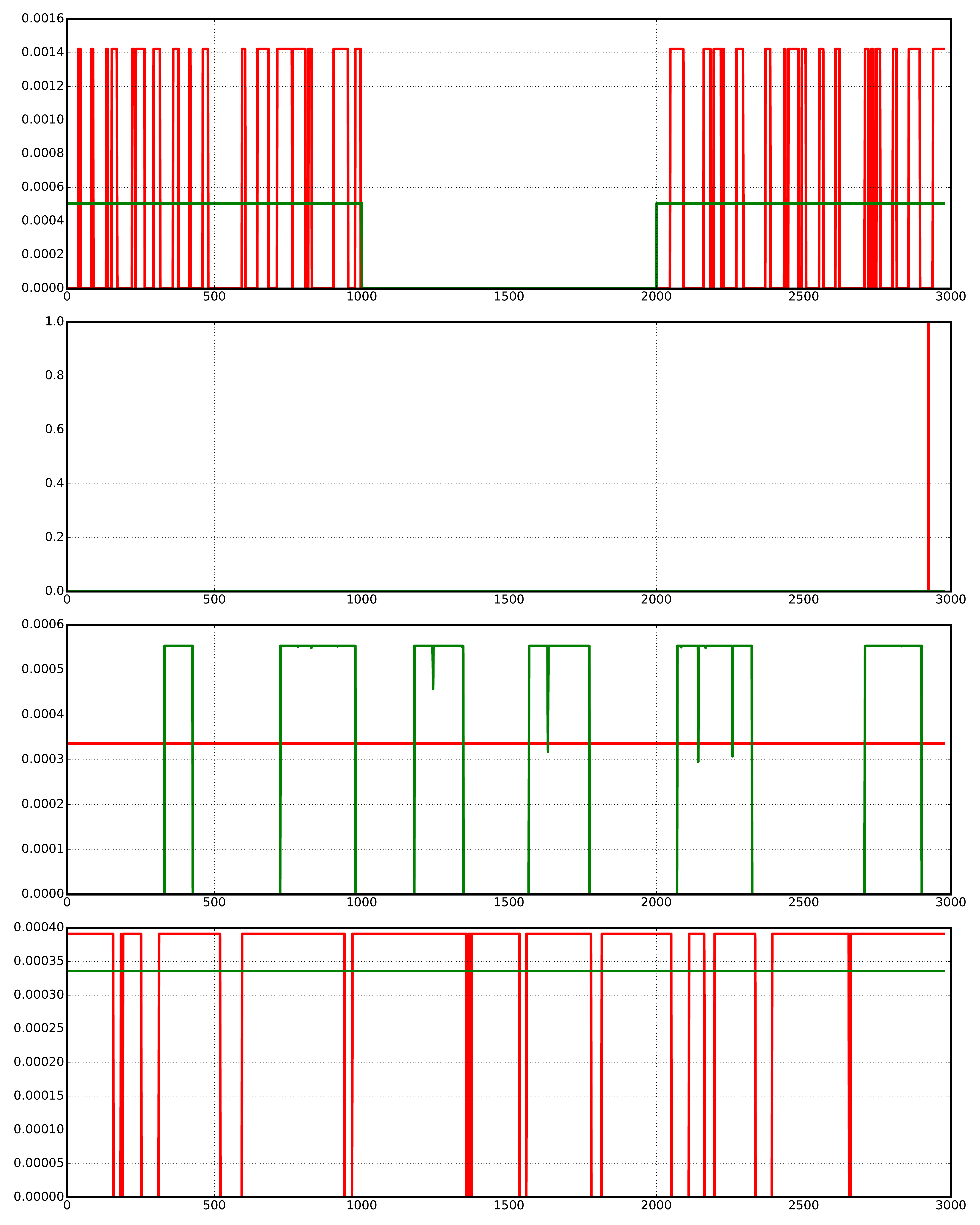}
  \end{center}%
\caption[Weights $\bq$ on synthetic datasets.]{
Weights $\bq$ chosen by DBF when used with 
true (green) and estimated (red) instantaneous discrepancies
for synthetic data (top to bottom):
{\tt ads1}, {\tt ads2},
{\tt ads3}, {\tt ads4}.}
\label{fig:all_qs}
\end{figure}

\begin{figure}[t]
  \begin{center}
    \includegraphics[scale=0.2]{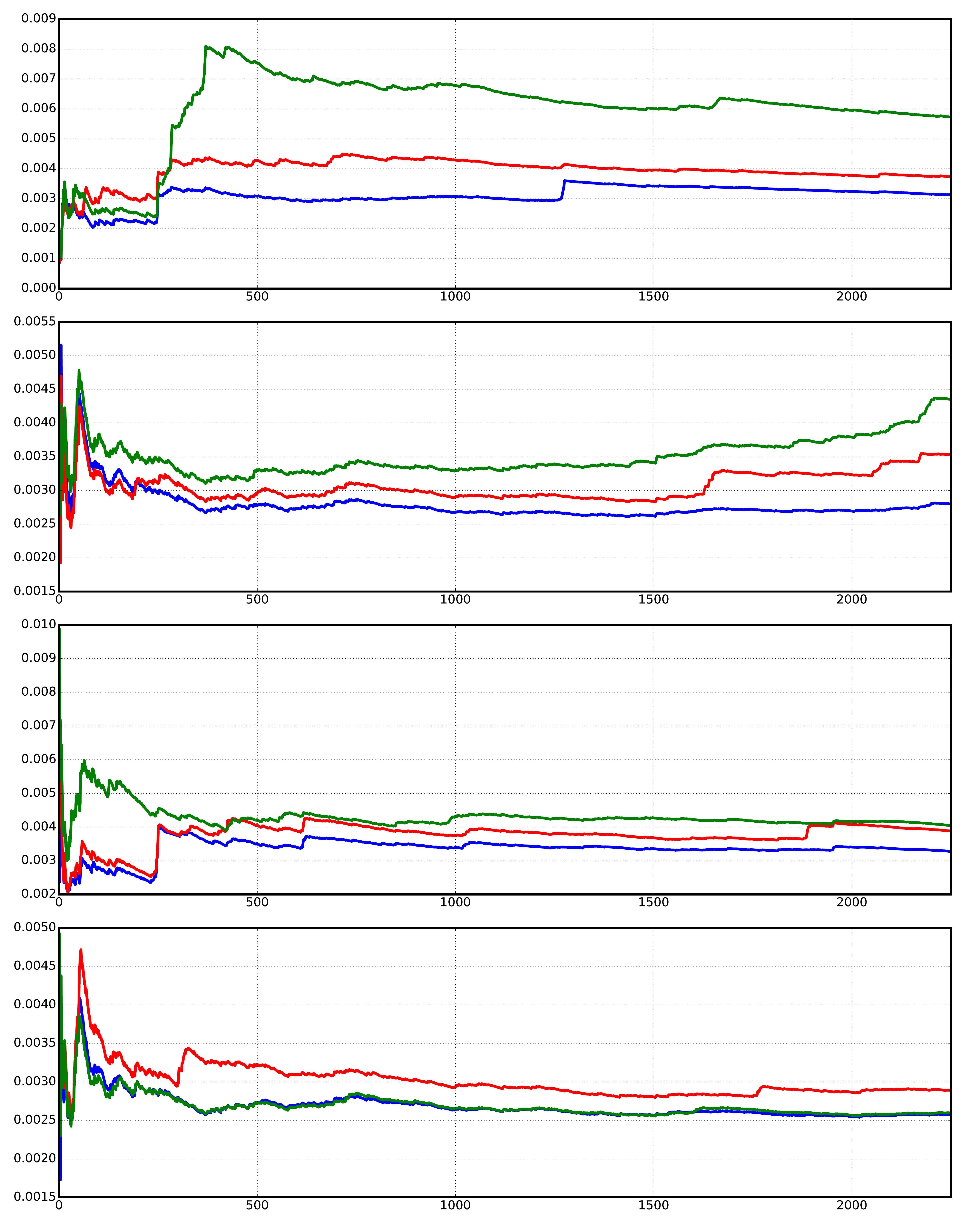}
  \end{center}%
\caption[Running MSE for synthetic data experiments.]{
Running MSE for synthetic data experiments
(top to bottom):
{\tt ads1}, {\tt ads2},
{\tt ads3}, {\tt ads4}.
For each time $t$ on the horizontal axis we plot
MSE up to time $t$ of {\tt tDBF} (blue),
{\tt eDBF} (red), {\tt ARIMA} (green).
}
\label{fig:all_ce}
\end{figure}

\begin{figure}[t]
  \begin{center}
    \includegraphics[scale=0.2]{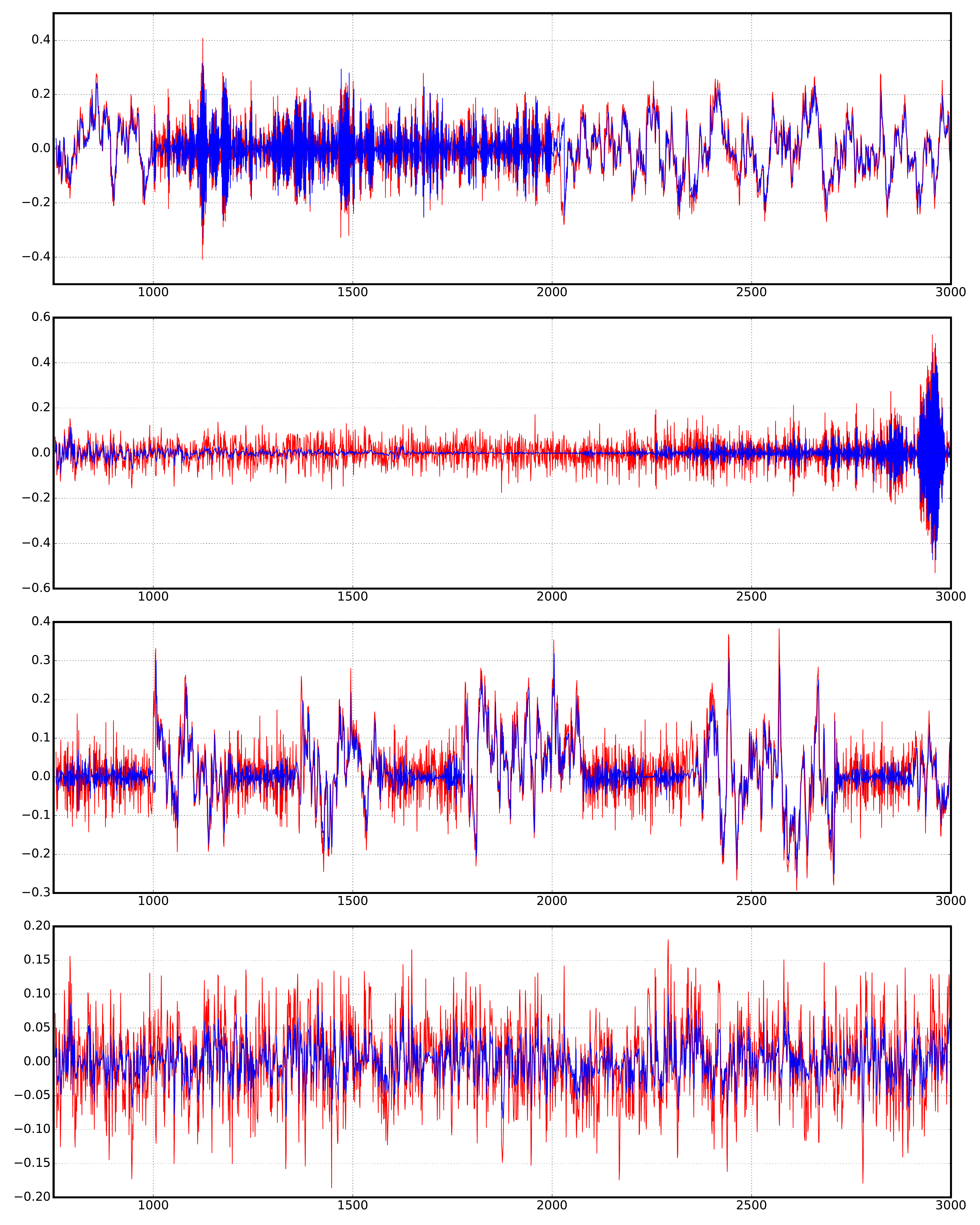}
  \end{center}%
\caption[True and {\tt tDBF}-forecasted time series on synthetic data.]{
True (red) and {\tt tDBF}-forecasted (blue) time series on synthetic data
(top to bottom):
{\tt ads1}, {\tt ads2},
{\tt ads3}, {\tt ads4}.
}
\label{fig:tdbf_ts}
\end{figure}

\begin{figure}[t]
  \begin{center}
    \includegraphics[scale=0.2]{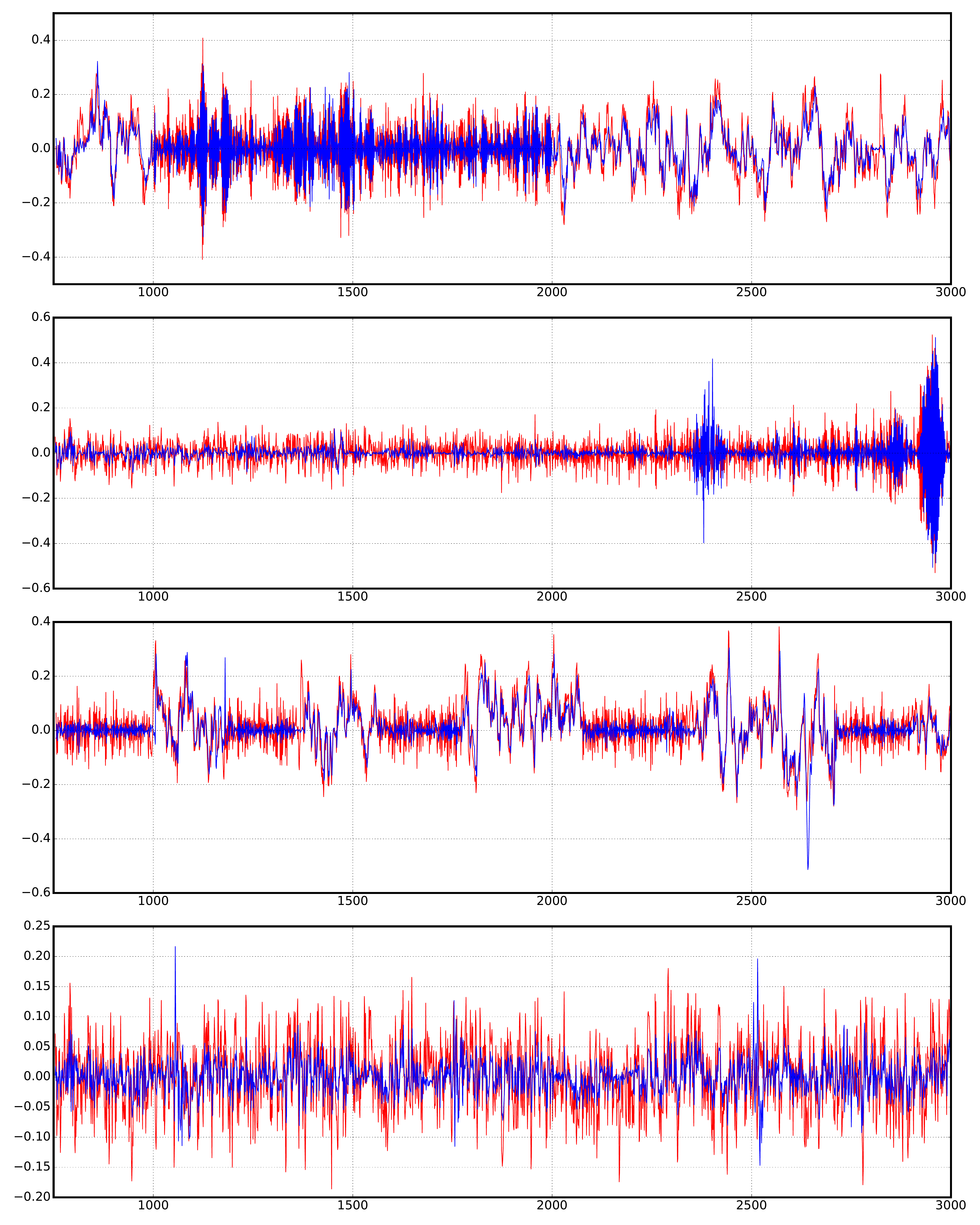}
  \end{center}%
\caption[True and {\tt eDBF}-forecasted time series on synthetic data.]{
True (red) and {\tt eDBF}-forecasted (blue) time series on synthetic data
(top to bottom):
{\tt ads1}, {\tt ads2},
{\tt ads3}, {\tt ads4}.
}
\label{fig:edbf_ts}
\end{figure}

\begin{figure}[t]
  \begin{center}
    \includegraphics[scale=0.2]{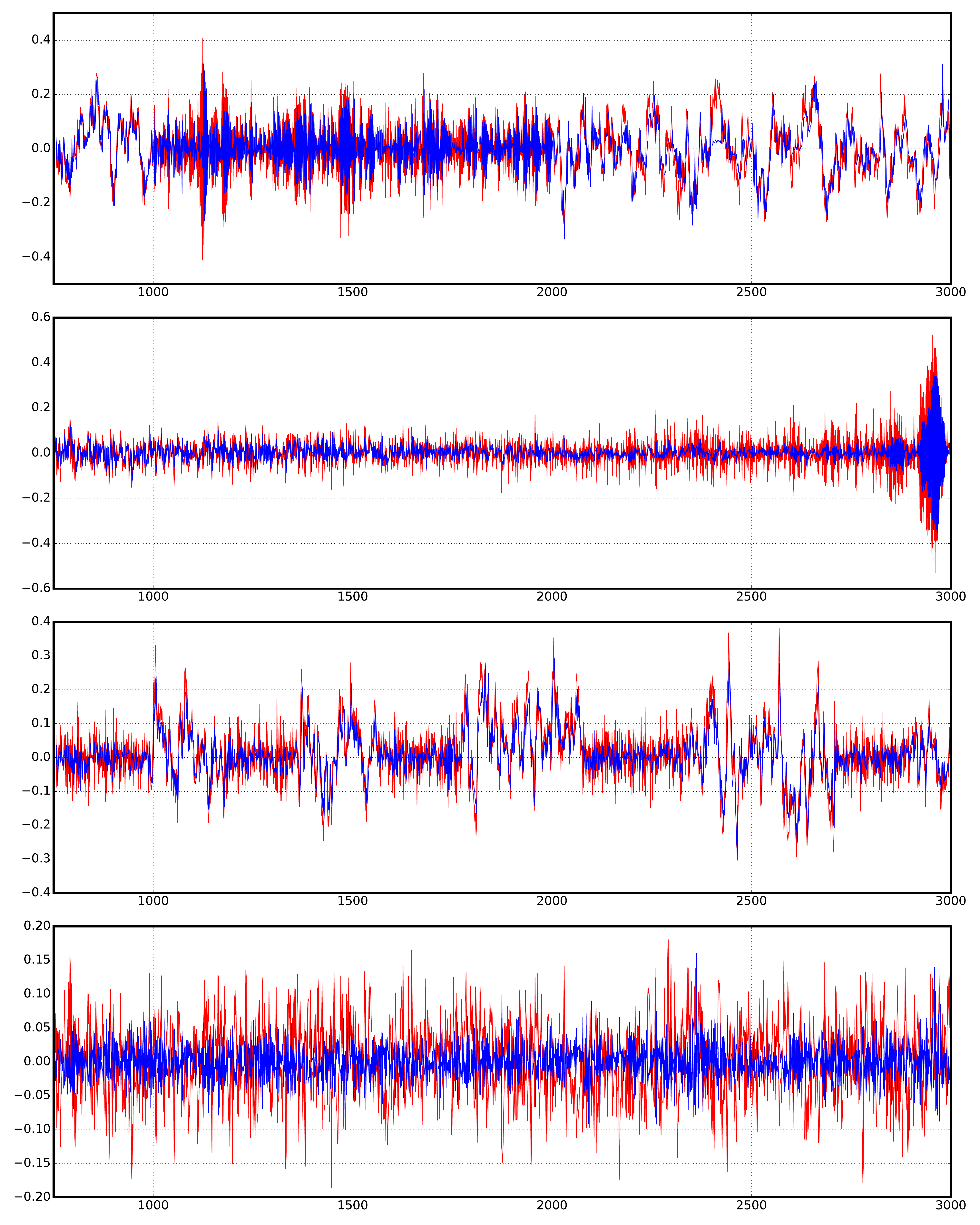}
  \end{center}%
\caption[True and {\tt ARIMA}-forecasted time series on synthetic data.]{
True (red) and {\tt ARIMA}-forecasted (blue) time series on synthetic data
(top to bottom):
{\tt ads1}, {\tt ads2},
{\tt ads3}, {\tt ads4}.
}
\label{fig:arima_ts}
\end{figure}

\section{Experiments}
\label{sec:experiments}

In this section, we present results of experiments
evaluating our algorithmic solutions on a number of synthetic
and real-world datasets.
In particular, we consider the one-stage algorithm presented
in Section~\ref{sec:algo} which is based on solving
the optimization problem in \eqref{eq:joint-opt}.
While solving this problem as opposed to
\eqref{eq:convex-opt} may result in a sub-optimal results,
this simplification allows us to use an alternating optimization method
described in Section~\ref{sec:algo}:
for a fixed $\bq$, problem
\eqref{eq:joint-opt} is a simple QP over $\bw$ and, for a fixed $\bw$,
the problem reduces to an LP in $\bq$. This iterative
scheme admits straightforward implementation using existing
QP and LP solvers. In the rest of this section,
we will refer to this algorithm as \emph{discrepancy-based forecaster}
(DBF).

We have chosen ARIMA models as a baseline comparator in our experiments.
These models
are standard and are commonly used in practice for forecasting
non-stationary time series.

We present two sets of experiments:
synthetic data experiments (Section~\ref{sec:syn-data-experiments})
and real-world data experiments (Section~\ref{sec:real-data-experiments}).

\subsection{Experiments with Synthetic Data}
\label{sec:syn-data-experiments}

\begin{table}[t]
\caption[Mean squared error (standard deviation) for synthetic data.]{
Mean squared error (standard deviation) for synthetic data. {\tt tDBF}
is DBF with true instantaneous discrepancies $d_t$ as its input.
{\tt eDBF} is DBF with estimated instantaneous discrepancies $d_t$
as its input. The results in bold are statistically significant
using one-sided paired $t$-test at $5\%$ level.}\label{tab:synthetic-exp}
\vskip 0.15in
\centering
\small
\begin{tabular}{|c||c|c|c|}
\hline
Dataset     & {\tt tDBF} & {\tt eDBF} & {\tt ARIMA} \\ \hline
{\tt ads1} & $\mathbf{3.135 \times 10^{-3}}$
           & $3.743 \times 10^{-3}$
           & $5.723 \times 10^{-3} $ \\
           & $\mathbf{(7.504 \!\times\! 10^{-3})}$
           &  $(6.171 \times 10^{-3})$
           & $(10.143 \times 10^{-3})$ \\ \hline
{\tt ads2} & $\mathbf{2.800 \times 10^{-3}}$
           & $3.530 \times 10^{-3}$
           & $4.348 \times 10^{-3}$ \\ 
           & $\mathbf{(3.930 \!\times\! 10^{-3})}$
           & $(6.620 \times 10^{-3})$
           & $(6.770 \times 10^{-3})$ \\ \hline
{\tt ads3} & $\mathbf{3.282 \times 10^{-3}}$
           & $3.887 \times 10^{-3}$
           & $4.066 \times 10^{-3}$ \\
           & $\mathbf{(6.417 \!\times\! 10^{-3})}$
           & $(9.277 \times 10^{-3})$
           & $(6.122 \times 10^{-3})$ \\ \hline
{\tt ads4} & $2.573 \times 10^{-3}$
           & $2.889 \times 10^{-3}$
           & $2.593 \times 10^{-3}$ \\
           & $(3.516 \!\times\! 10^{-3})$
           & $(4.262 \times 10^{-3})$
           & $(3.578 \times 10^{-3})$ \\ \hline
\end{tabular}
\end{table}

In this section, we present the results of our experiments on
some synthetic datasets. These experimental results allow us
to gain some further understanding of the discrepancy-based
approach to forecasting. In particular, it enables us
to study the effects of using estimated instantaneous discrepancies
instead of the true ones. 

We have used four
artificial datasets: {\tt ads1}, {\tt ads2}, {\tt ads3} and {\tt ads4}.
These datasets are generated using the following autoregressive processes:
\begin{align*}
\text{{\tt ads1: }} & Y_t = \alpha_t Y_{t-1} + \e_t, \quad\text{  $\alpha_t = -0.9$ if $t \in [1000, 2000]$ and $0.9$ otherwise}, \\
\text{{\tt ads2: }} & Y_t = \alpha_t Y_{t-1} + \e_t, \quad\,\,\, \alpha_t = 1 -  (t / 1500), \\
\text{{\tt ads3: }} & Y_t = \alpha_{i(t)} Y_{t-1} + \e_t, \quad \alpha_1 = -0.5 \text{ and } \alpha_2 = 0.9 \\
\text{{\tt ads4: }} & Y_t = -0.5 Y_{t-1} + \e_t,
\end{align*}
where $\e_t$ are independent Gaussian random variables with zero mean
and $\sigma = 0.05$. Note that $i(t)$ in the definition of {\tt ads3}
is a stochastic process on $\set{1,2}$, such that
$\P(i(t + s) = i | i(t+s-1) = \ldots = i(s) = i, i(s-1) \neq i)
= (0.99995)^t$. In other words, if the process $i(t)$ spend exactly
$\tau$ last time steps in state $i$, then at the next time step
it will stay in $i$ with probability $(0.99995)^\tau$ and will
move to a different state with probability $1 - (0.99995)^\tau$.

The first stochastic process ({\tt ads1}) is supposed to model
sudden abrupt changes in the data generating mechanism.
The scenario in which parameters of the data generating process
smoothly drift is modeled by {\tt ads2}. The setting in which
the changes can occur at random times is captured by {\tt ads3}.
Finally, {\tt ads4} is generated by a stationary random process.
See Figure~\ref{fig:syn_all_ts}.

For each dataset,
we have generated time series with $3\mathord,000$ sample points.
In all our experiments, we used the following protocol.
For each $t \in [750, 775, \ldots, 2995]$, $(y_1, \ldots, y_t)$
is used as a development set and $(y_{t+1}, \ldots, y_{t+25})$
is used as a test set. On the development set,
we first train each algorithm with different hyperparameter
settings on $(y_1, \ldots, y_{t-25})$ and then select
the best performing hyperparameters on $(y_{t-24}, \ldots, y_{t})$.
This set of hyperparameters is then used for training on the full development
set $(y_1, \ldots, y_t)$ and mean squared error (MSE) on
$(y_{t+1}, \ldots, y_{t+25})$ averaged over all
$t \in [750, 775, \ldots, 2995]$ is reported.

Recall that DBF algorithm requires
two regularization hyperparameters $\lambda_1$
and $\lambda_2$. We optimized these parameters over following sets of values
$\set{10^{-3}, 10^{-4}, 10^{-5}, 10^{-6}}$ and
$\set{100,\, 10,\, 1,\, 0.1,\, 0.05,\, 0.01,\, 0.001,\, 0}$
for $\lambda_1$ and $\lambda_2$ respectively.

ARIMA models have three hyperparameters $p, d$ and $q$
that we also set via the validation procedure described above.
Recall that ARIMA($p,d,q$) is
a generative model defined by the following autoregression:
\begin{align*}
\Bigg(1- \sum_{i=0}^{p-1} \phi_i \mathfrak{L}^i\Bigg)
(1- \mathfrak{L})^d Y_{t+1} =
\Bigg(1 + \sum_{j=1}^q \theta_j \mathfrak{L}^j\Bigg) \e_t.
\end{align*}
where $\mathfrak{L}$ is a lag operator, that is,
$\mathfrak{L} Y_t = Y_{t-1}$. Therefore, validation
over $(p,d,q)$ is equivalent to validation over different sets
of features used to train the model. For instance,
$(p,d,q) = (3,0,0)$ means that we are using $(y_{t-3}, y_{t-2}, y_{t-1})$
as our features while $(p,d,q) = (2,1,0)$ corresponds
to $(y_{t-3}-y_{t-2}, y_{t-2} - y_{t-1})$.
For DBF, we fix feature vectors to be $(y_{t-3}, y_{t-2}, y_{t-1})$.
For ARIMA, we optimize over $p,d,q \in \set{0,1,2}^3$.
We use maximum likelihood approach to estimate unknown
parameters of ARIMA models.

Finally, observe that discrepancy estimation procedure
discussed in Section~\ref{sec:algo} also requires
a hyperparameter $s$ representing the length of the most recent history
of the stochastic process. We did not make an attempt to optimize
this parameter and in all of our experiments we set $s=20$. 

\begin{table}[t]
\caption[Real-world datasets statistics.]{Real-world datasets statistics}\label{tab:datasets}
\vskip 0.15in
\centering
\small
\begin{tabular}{|c||c|c|}
\hline
Dataset      & URL & Size \\ \hline
{\tt bitcoin} & {\tt {\scriptsize https://www.quandl.com/data/BCHARTS/BTCNCNY}} 
           & 1705 \\
{\tt coffee} &  {\tt {\scriptsize https://www.quandl.com/data/COM/COFFEE\_BRZL}}
           & 2205 \\
{\tt eur/jpy} &  {\tt {\scriptsize https://www.quandl.com/data/ECB/EURJPY}} 
           & 4425 \\
{\tt jpy/usd} & {\tt {\scriptsize http://data.is/269FpLF}} 
           & 4475 \\
{\tt mso} & {\tt {\scriptsize http://data.is/269F3EV}}
           & 1235 \\ 
{\tt silver} & {\tt {\scriptsize https://www.quandl.com/data/COM/AG\_EIB}} 
           & 2251 \\
{\tt soy} & {\tt {\scriptsize https://www.quandl.com/data/COM/SOYB\_1}} 
           & 2218 \\
{\tt temp} & {\tt {\scriptsize http://data.is/1qlX2AN}}
           & 3649 \\ \hline
\end{tabular}
\end{table}

The results of our experiments are presented in Table~\ref{tab:synthetic-exp}.
We have compared DBF with true discrepancies as its input {\tt tDBF},
DBF with estimated discrepancies as its input {\tt eDBF} and ARIMA.
In all experiments with non-stationary processes
({\tt ads1}, {\tt ads2}, {\tt ads3}), {\tt tDBF} performs
better than both {\tt eDBF} and ARIMA. Similarly, {\tt eDBF}
outperforms ARIMA on the same datasets. These results
are statistically significant at $5\%$-level using one-sided paired
$t$-test. Figure~\ref{fig:all_ce} illustrates the dynamics of MSE
as a function of time $t$ for all three algorithms on all of the synthetic
datasets. Figures~\ref{fig:tdbf_ts},~\ref{fig:edbf_ts} and~\ref{fig:arima_ts}
show forecasted and true time series for each dataset and algorithm.  

Our results suggest that accurate discrepancy estimation can lead
to a significant improvement in performance. We present the results
of discrepancy estimation for our experiments in
Figure~\ref{fig:all_ds} and Figure~\ref{fig:all_qs} shows the
corresponding weights $\bq$ chosen by DBF.

\subsection{Experiments with Real-World Data}
\label{sec:real-data-experiments}

\begin{table}[t]
\caption[Mean squared error (standard deviation) for real-world data.]{
Mean squared error (standard deviation) for real-world data.
The results in bold are statistically significant
using one-sided paired $t$-test at $5\%$ level.}\label{tab:real-world-exp}
\vskip 0.15in
\centering
\small
\begin{tabular}{|c||c|c|}
\hline
Dataset     & {\tt DBF} & {\tt ARIMA} \\ \hline
{\tt bitcoin} & $\mathbf{4.400 \times 10^{-3}}$
           & $4.900 \times 10^{-3}$ \\
           & $\mathbf{(26.500 \!\times\! 10^{-3})}$
           &  $(29.990 \times 10^{-3})$ \\ \hline
{\tt coffee} & $\mathbf{3.080 \times 10^{-3}}$
           & $3.260 \times 10^{-3}$ \\
           & $\mathbf{(6.570 \!\times\! 10^{-3})}$
           & $(6.390 \times 10^{-3})$ \\ \hline
{\tt eur/jpy} & $\mathbf{7.100 \times 10^{-5}}$
           & $7.800 \times 10^{-5}$ \\
           & $\mathbf{(16.900 \!\times\! 10^{-5})}$
           & $(24.200 \times 10^{-5})$ \\ \hline
{\tt jpy/usd} & $\mathbf{9.770 \times 10^{-1}}$
           & $ 10.004 \times 10^{-1} $ \\
           & $\mathbf{(25.893 \!\times\! 10^{-1})}$
           & $(27.531 \times 10^{-1})$ \\ \hline
{\tt mso} & $32.876 \times 10^{0}$
           & $32.193 \times 10^{0}$ \\ 
           & $(55.586 \!\times\! 10^{0})$
           & $(51.109 \times 10^{0})$ \\ \hline
{\tt silver} & $\mathbf{7.640 \times 10^{-4}}$
           & $34.180 \times 10^{-4}$ \\
           & $\mathbf{(46.65 \!\times\! 10^{-4})}$
           & $(158.090 \times 10^{-4})$ \\ \hline
{\tt soy} & $5.071 \times 10^{-2}$
           & $5.003 \times 10^{-2}$ \\
           & $(9.938 \!\times\! 10^{-2})$
           & $(10.097 \times 10^{-2})$ \\ \hline
{\tt temp} & $6.418 \times 10^{0}$
           & $6.461 \times 10^{0}$ \\
           & $(9.958 \!\times\! 10^{0})$
           & $(10.324 \times 10^{0})$ \\ \hline
\end{tabular}
\end{table}

In this section, we present the results of our experiments
with real-world datasets. For our experiments,
we have chosen eight time series from
different domains:
currency exchange rates ({\tt bitcoin}, {\tt eur/jpy}, {\tt jpy/usd}),
commodity prices ({\tt coffee}, {\tt soy}, {\tt silver}) and
meteorology ({\tt mso}, {\tt temp}).
Further details of these datasets are summarized in Table~\ref{tab:datasets}.

In all our experiments, we used the same protocol as in the
previous section.\ignore{
For each $t \in [750, 775, \ldots$, $(y_1, \ldots, y_t)$
is used as a development set and $(y_{t+1}, \ldots, y_{t+25})$
is used as a test set. On the development set,
we first train each algorithm with different hyperparameter
settings on $(y_1, \ldots, y_{t-25})$ and then select
the best performing hyperparameters on $(y_{t-24}, \ldots, y_{t})$.
This set of hyperparameters is then used for training on the full development
set $(y_1, \ldots, y_t)$ and mean squared error (MSE) on
$(y_{t+1}, \ldots, y_{t+25})$ averaged over all
$t \in [750, 775, \ldots, 2995]$ is reported.}
The range of the hyperparameters for both DBF and ARIMA
is also the same as in previous section. Note that
since true discrepancies are unknown, we only present the results
for DBF with discrepancies estimated from data.

The results of our experiments are summarized in
Table~\ref{tab:real-world-exp}. Observe that DBF outperforms ARIMA
on 5 out of 8 datasets.

Figure~\ref{fig:all_ce_real} illustrates the dynamics of MSE
as a function of time $t$ for all three algorithms on all of the synthetic
datasets. These results
are statistically significant at $5\%$-level using one-sided paired
$t$-test. There is not statistical difference in the performance of ARIMA
and DBF on the rest of the datasets.
Figures~\ref{fig:dbf_ts_real} and~\ref{fig:arima_ts_real}
show forecasted and true time series for each dataset for DBF and ARIMA
respectively. 

Our results suggest that discrepancy-based approach to prediction of
non-stationary time series may lead to improved performance
compared to other traditional approaches such as ARIMA.

\begin{figure}[t]
  \begin{center}
    \includegraphics[scale=0.2]{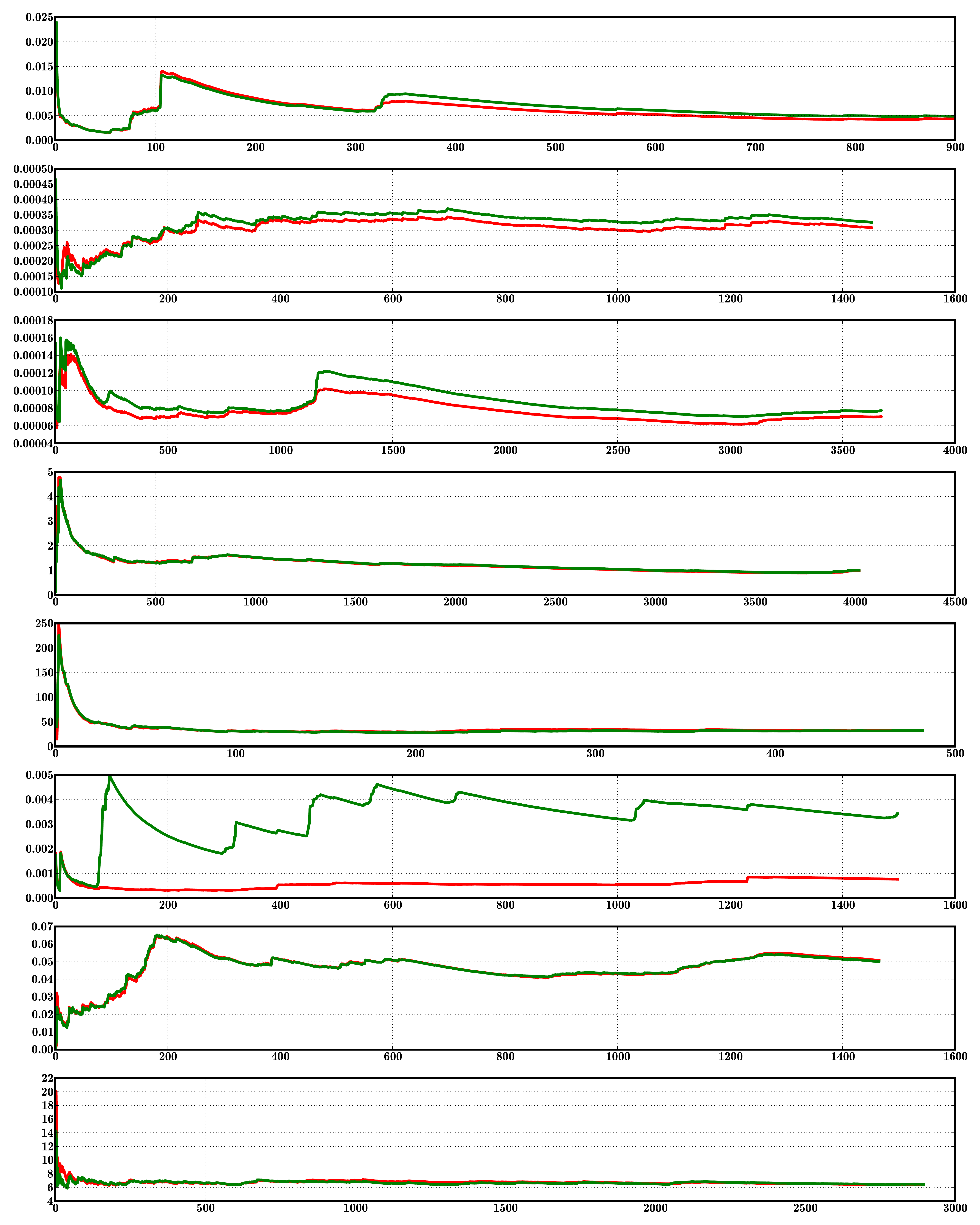}
  \end{center}%
\caption[Running MSE for real-world data experiments.]{
Running MSE for real-world data experiments (top to bottom): 
{\tt bitcoin}, {\tt coffee}, {\tt eur/jpy}, {\tt jpy/usd},
{\tt mso}, {\tt silver}, {\tt soy}, {\tt temp}.
For each time $t$ on the horizontal axis we plot
MSE up to time $t$ of {\tt DBF} (red) and {\tt ARIMA} (green).
}
\label{fig:all_ce_real}
\end{figure}

\begin{figure}[t]
  \begin{center}
    \includegraphics[scale=0.2]{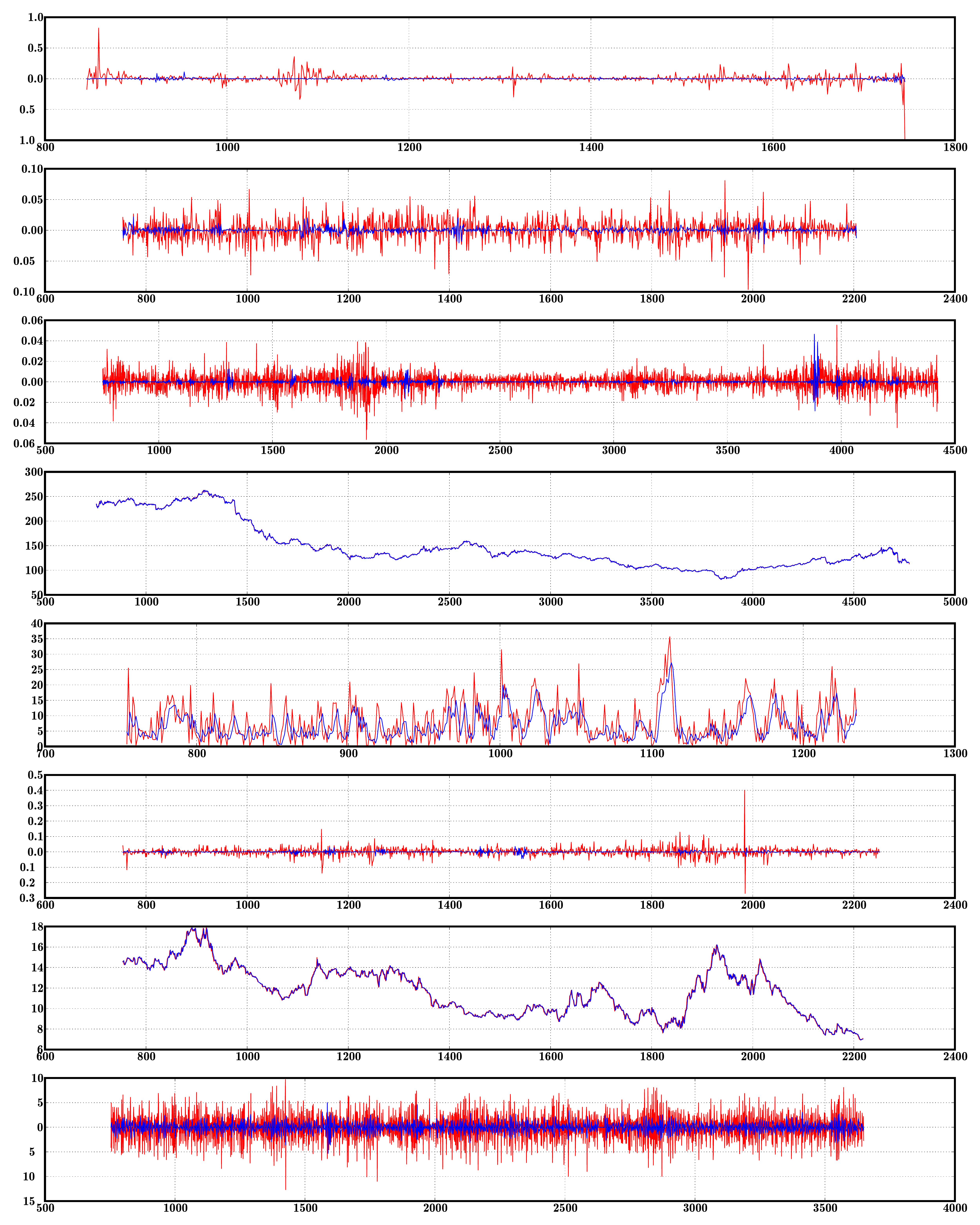}
  \end{center}%
\caption[True and {\tt DBF}-forecasted time series on real-world data.]{
True (red) and {\tt DBF}-forecasted (blue) time series on
real-world data experiments (top to bottom): 
{\tt bitcoin}, {\tt coffee}, {\tt eur/jpy}, {\tt jpy/usd},
{\tt mso}, {\tt silver}, {\tt soy}, {\tt temp}.} 
\label{fig:dbf_ts_real}
\end{figure}

\begin{figure}[t]
  \begin{center}
    \includegraphics[scale=0.2]{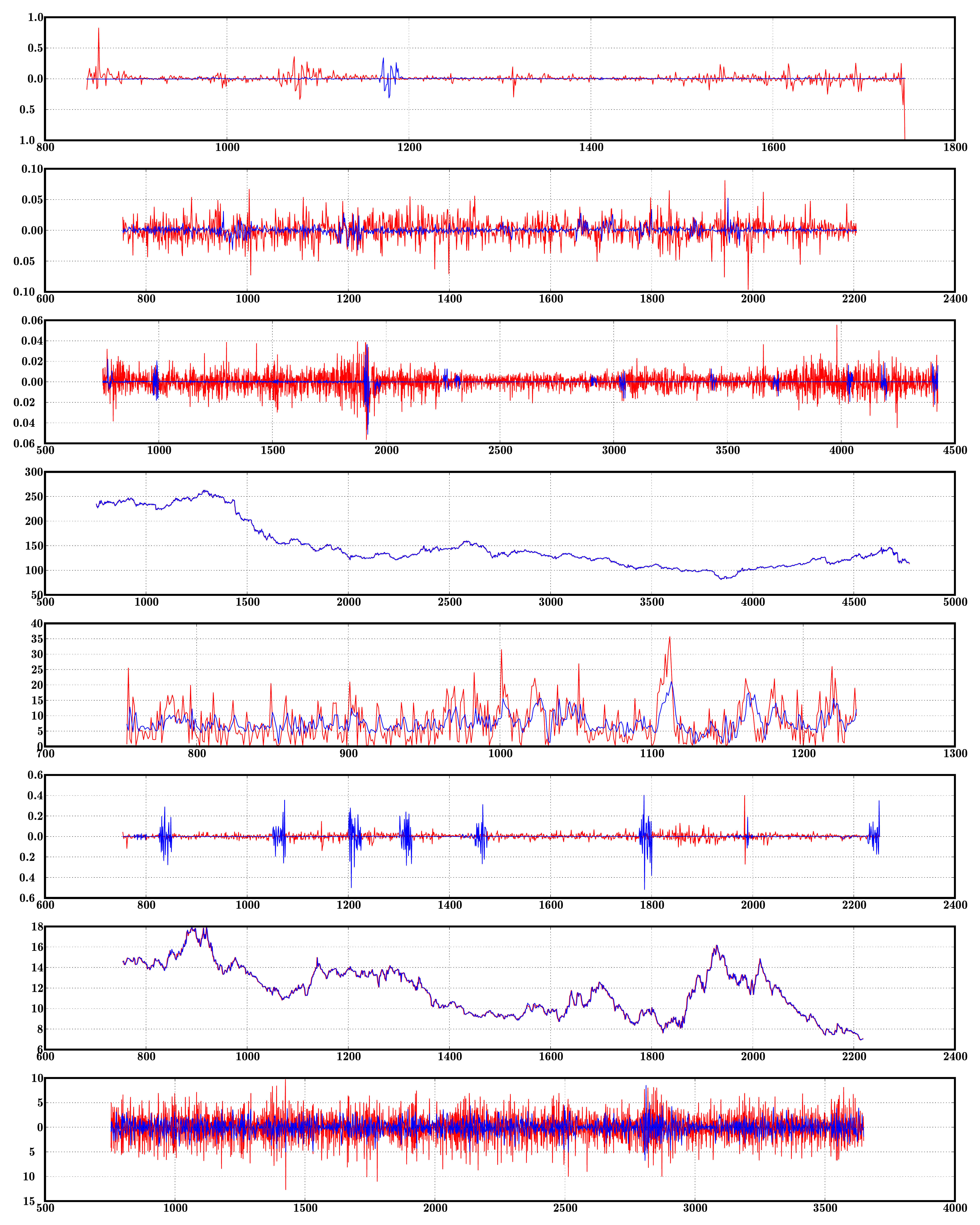}
  \end{center}%
\caption[True and {\tt ARIMA}-forecasted time series on real-world data.]{
True (red) and {\tt ARIMA}-forecasted (blue) time series on
real-world data experiments (top to bottom): 
{\tt bitcoin}, {\tt coffee}, {\tt eur/jpy}, {\tt jpy/usd},
{\tt mso}, {\tt silver}, {\tt soy}, {\tt temp}.} 
\label{fig:arima_ts_real}
\end{figure}

\begin{figure}[t]
  \begin{center}
    \includegraphics[scale=0.2]{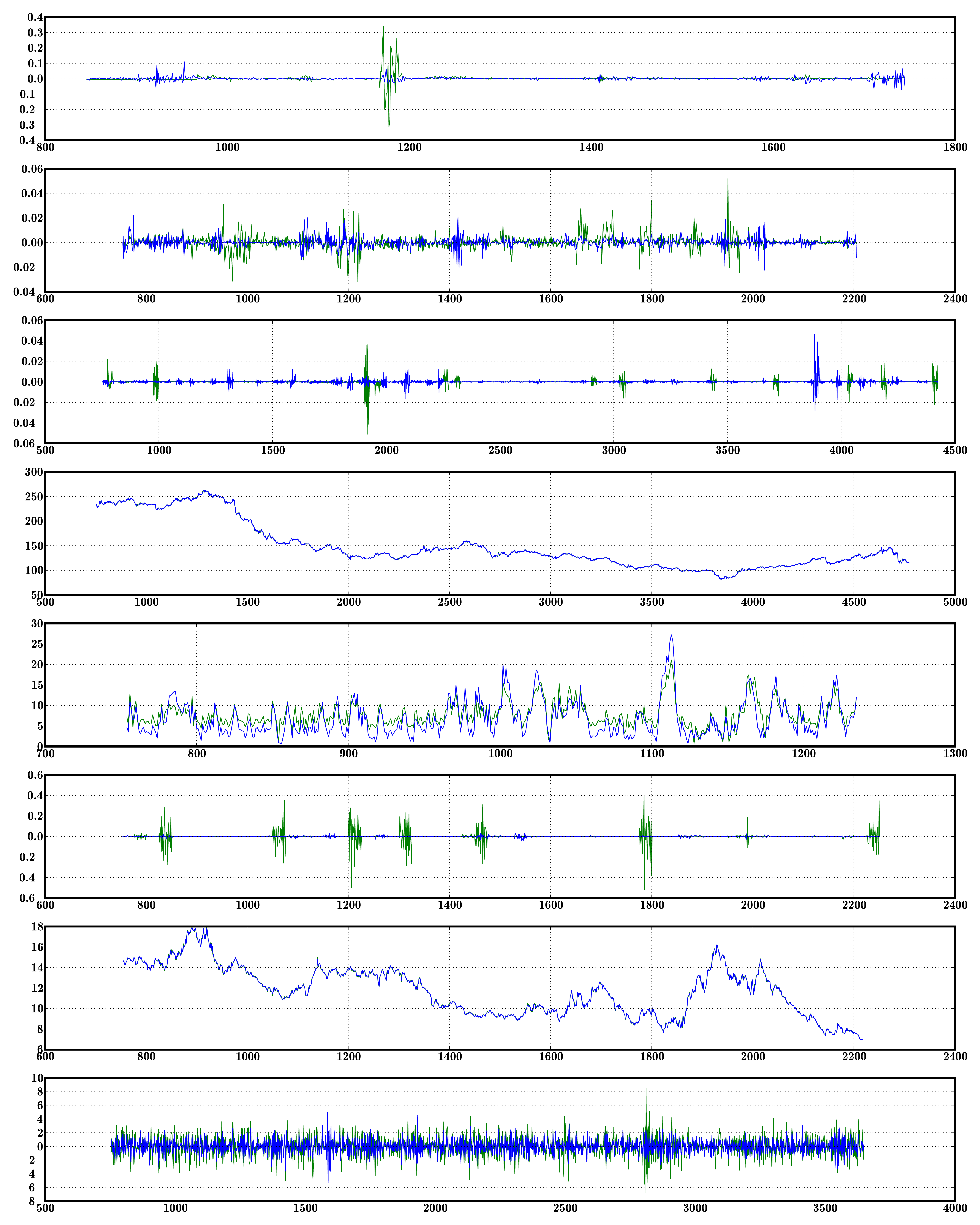}
  \end{center}%
\caption[{\tt DBF}- and {\tt ARIMA}-forecasted
time series on real-world data.]{
{\tt DBF} (blue) and {\tt ARIMA}-forecasted (green) time series on
real-world data experiments (top to bottom): 
{\tt bitcoin}, {\tt coffee}, {\tt eur/jpy}, {\tt jpy/usd},
{\tt mso}, {\tt silver}, {\tt soy}, {\tt temp}.} 
\label{fig:dbf_arima_ts_real}
\end{figure}

\ignore{
In Section~\ref{sec:algo}, we described an algorithm benefiting from
our learning guarantees based on solving the convex optimization
problem \eqref{eq:convex-opt}. Due to submission time constraints, our
experiments were carried out instead by solving directly problem
\eqref{eq:joint-opt} using an alternating optimization method. This is
based on the observation that for a fixed $\bq$, problem
\eqref{eq:joint-opt} is a simple QP over $\bw$ and, for a fixed $\bw$,
the problem reduces to an LP in $\bq$. This suggests an iterative
scheme where we alternate between each of these two problems.\ignore{
  However, there is no guarantee that this procedure yields a global
  solution or even converges at all.}

\ignore{
\begin{table*}[t]
\vskip -0.15in
\centering
\scriptsize
\caption{Stochastic processes for ADS1, ADS2, ADS3 ($Z_t$ i.i.d $N(0, 0.01)$).}
\label{table:processes}
\begin{tabular}{|c|c|c|}
\hline
ADS1 & ADS2 & ADS 3 \\ \hline   
$Y_t = a_t Y_{t-1} + Z_t$  &
$Y_t = a_t Y_{t-1} + Z_t$  &
$Y_t = a_t Y_{t-1} + (1-a_t) Y_{t-2} + Z_t$ \\
$a_t = 1$ if $t < 1800$ and $-1$ otherwise &
$a_t = 0.9 - 1.8 (t / 2000)$ &
$a_t = 0.9 t / 2000$ \\ \hline
\end{tabular}
\vskip -0.05in
\end{table*}
}

\begin{table*}[t]
\vskip -0.15in
\centering
\small
\caption{Average squared error (standard deviation)}
\label{table:results}
\begin{tabular}{|c|c|c|c|}
\hline
    & {\tt ads1} & {\tt ads2} & {\tt ads3} \\ \hline
DBF & {\bf 0.0001 (0.0001)} & {\bf 0.0002 (0.0001)} & {\bf 0.0047 (0.0001)}\\
WAR & 0.0099 (0.0155) &  0.0997 (0.1449) &  0.1026 (0.1509)\\ 
ARIMA & 0.1432 (0.2091) & 0.4797 (0.6942)  & 0.2598 (0.3696)  \\ \hline
\end{tabular}
\vskip -0.05in
\end{table*}

We have compared our algorithm against a standard ARIMA model
that is commonly used in practice for forecasting non-stationary
time series, as well as a weighted autoregression algorithm (WAR)
that solves optimization problem in \eqref{eq:krr} with $\bq$
tuned manually.

In our experiments, we have used three
artificial datasets: {\tt ads1}, {\tt ads2}, {\tt ads3}.
For each dataset,
we have generated time series with $2\mathord,000$ sample points,
trained on
the first $1\mathord,999$ points and tested on the last point.
To gain statistically
significance, we repeat this procedure $1\mathord,000$ times.
To generate these time series the following autoregressive processes
have been used:
\begin{align*}
\text{{\tt ads1: }} & Y_t = \alpha_t Y_{t-1} + \e_t, \quad\text{  $\alpha_t = 1$ if $t < 1800$ and $-1$ otherwise}, \\
\text{{\tt ads2: }} & Y_t = \alpha_t Y_{t-1} + \e_t, \quad \alpha_t = 0.9 - 1.8 (t / 2000), \\
\text{{\tt ads3: }} & Y_t = \alpha_t Y_{t-1} + (1-\alpha_t) Y_{t-2} + \e_t, \quad \alpha_t = 0.9 t / 2000,
\end{align*}
where $\e_t$ are independent standard Gaussian random variables.

\ignore{ 
The processes used to generate these time series are summarized in
  Table~\ref{table:processes}.

We have also used daily foreign exchange rates (12/31/1979 -
  12/31/1998) for CAD/USD and FRF/USD pairs (FX1 and FX2 respectively)
  found in \cite{datamarket} as examples of real life non-stationary
  time series.  Both FX1 and FX2 contain $4\mathord,774$ points and we
  train on the first $T-1$ points and test on the $T$-th observation,
  where $T = 250, \ldots, 4\mathord,774$.
}

The results of our experiments are summarized in
Table~\ref{table:results}. Observe that the results of each experiment
are statistically significant using paired $t$-test and in each case
our discrepancy-based forecaster (DBF) significantly outperforms other
algorithms.  Moreover, DBF has a better performance on at least 90\%
of individual runs in each experiment.  \ignore{and the average error
  of ARIMA and is at least two times larger than that of WRA.}
}

\newpage

\section{Conclusion}
\label{sec:conclusion}

We presented a general theoretical analysis of learning in the broad
scenario of non-stationary non-mixing processes, the realistic setting
for a variety of applications. We discussed in detail several
algorithms benefiting from the learning guarantees presented. Our
theory can also provide a finer analysis of several existing
algorithms and help devise alternative principled learning algorithms.

\acks{
This work was partly funded by NSF CCF-1535987 and IIS-1618662.}

\appendix

\section{Proofs}
\label{sec:proofs}

\begin{lemma}
\label{lm:sym}
Given a sequence of random variables $\bZ_1^T$ with joint distribution
$\bp$, let ${\bZ'}_1^T$ be a decoupled tangent sequence. Then, for any
measurable function $G$, the following equality holds
\begin{equation}
\label{eq:seq-rademacher-bound}
\E\Bigg[G\Big(\sup_{f \in \cF} \sum_{t = 1}^T q_t (f(Z'_t) - f(Z_t))\Big)\Bigg]
=
\E_{\bs} \E_{\bz \sim T(\bp)}\Big[G\Big(\sup_f \sum_{t = 1}^T
\sigma_t q_t (f(\bz'_t(\bs)) - f(\bz_t(\bs))) \Big)\Big].
\end{equation}
\end{lemma}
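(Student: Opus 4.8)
The plan is to prove the identity in two stages: first reduce the left-hand expectation, which is taken over the joint law of the original sequence $\bZ_1^T$ and its tangent copy ${\bZ'}_1^T$, to an expectation over the random tree $\bz \sim T(\bp)$ read along a single fixed path; and then show that the signed, path-dependent quantity on the right has the same distribution for \emph{every} path $\bs$, so that averaging over Rademacher $\bs$ leaves it unchanged. The engine behind both stages is one structural fact: at every node of the tree the pair $(Z_t, Z'_t)$ is drawn i.i.d.\ from the same conditional distribution given the values at the node's ancestors; equivalently, $(Z_t, Z'_t)$ is exchangeable conditionally on the past.

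For the first stage I would observe that following the constant path $\bs \equiv 1$ through the sampling procedure defining $T(\bp)$ exactly reproduces the canonical construction of the decoupled tangent pair: at depth $t$ one draws $(Z_t, Z'_t)$ i.i.d.\ from $\bp_t(\cdot \mid Z_1, \dots, Z_{t-1})$, so that $Z_t$ follows the law of $\bZ$, each $Z'_t$ has conditional law $\P(\cdot\mid \bZ_1^{t-1})$, and $Z'_t$ is independent of $Z_t, Z_{t+1}, \dots$ given $\bZ_1^{t-1}$. Hence the left-hand side equals $\E_{\bz \sim T(\bp)}\big[G\big(\sup_f \sum_t q_t (f(\bz'_t(\mathbf{1})) - f(\bz_t(\mathbf{1})))\big)\big]$, i.e.\ the right-hand integrand with all signs set to $+1$ and the path fixed to $\mathbf{1}$.

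For the second stage I would prove, by induction on $T$ and in the slightly strengthened form allowing an arbitrary prefix $\psi\colon \cF \to \Rset$ inside the supremum, that for every $\bs \in \{\pm 1\}^T$ the quantity $\E_{\bz\sim T(\bp)}\big[G\big(\sup_f [\psi(f) + \sum_t \sigma_t q_t(f(\bz'_t(\bs)) - f(\bz_t(\bs)))]\big)\big]$ does not depend on $\bs$. Peeling off the root, I would condition on $(Z_1, Z'_1)$, absorb the first summand $\sigma_1 q_1(f(Z'_1) - f(Z_1))$ into the prefix, and apply the inductive hypothesis to the depth-$(T-1)$ subtree selected by $\sigma_1$ (the left subtree, of law $T(\bp(\cdot\mid Z_1))$, when $\sigma_1 = 1$; the right one, of law $T(\bp(\cdot\mid Z'_1))$, when $\sigma_1 = -1$) to reset its remaining signs to $+1$. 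It then remains to equate the $\sigma_1 = +1$ and $\sigma_1 = -1$ outcomes: relabelling $Z_1 \leftrightarrow Z'_1$, which is distribution-preserving since they are i.i.d.\ from $\bp_1$, turns $q_1(f(Z_1) - f(Z'_1))$ back into $q_1(f(Z'_1) - f(Z_1))$ and simultaneously swaps the conditional law of the chosen subtree from $T(\bp(\cdot\mid Z'_1))$ to $T(\bp(\cdot\mid Z_1))$, matching the $\sigma_1 = +1$ expression exactly. Averaging this $\bs$-independence over uniform Rademacher $\bs$ and combining with the first stage yields the claim.

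I expect the main obstacle to be bookkeeping in the inductive step rather than any deep difficulty: the sign $\sigma_t$ plays two roles at once, selecting which subtree the path descends into and flipping the sign of the $t$-th summand, so the induction only closes once the statement is generalized to carry an arbitrary $f$-dependent prefix through the supremum, and once one checks that flipping $\sigma_1$ is compensated precisely by the simultaneous exchange of $(Z_1, Z'_1)$ and of the left/right subtrees. A secondary point is to confirm that the constant-path reading in the first stage genuinely reproduces the tangent-sequence joint law used on the left, which is exactly where the independence of $Z'_t$ from $\bZ_t^\infty$ enters.
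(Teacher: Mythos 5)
Your proposal is correct and rests on exactly the same mechanism as the paper's proof: the conditional exchangeability of each pair $(Z_t, Z'_t)$ given the past, so that flipping a sign $\sigma_t$ is compensated by swapping $Z_t \leftrightarrow Z'_t$ together with the corresponding subtrees, which makes the path-dependent quantity distribution-invariant in $\bs$ and identifies the all-ones path of $T(\bp)$ with the tangent-sequence joint law. The only difference is presentational --- you organize the symmetrization as an explicit induction carrying a prefix $\psi(f)$, whereas the paper performs the same swaps directly inside a chain of nested conditional expectations --- so this is essentially the paper's argument.
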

The result also holds with the absolute value around the sums
in \eqref{eq:seq-rademacher-bound}.

\begin{proof}
  The proof follows an argument in the proof of Theorem~3 of
  \citep{RakhlinSridharanTewari2011}.  We only need to check that
  every step holds for an arbitrary weight vector $\bq$, in lieu of
  the uniform distribution vector $\bu$, and for an arbitrary
  measurable function $G$, instead of the identity function.  Observe
  that we can write the left-hand side of
  \eqref{eq:seq-rademacher-bound} as
\begin{align*}
\E\Big[ G\Big(\sup_{f \in \cF}  \Sigma( \bs) \Big) \Big] =
\E_{Z_1, Z'_1 \sim \bp_1} \E_{Z_2, Z'_2 \sim \bp_2(\cdot|Z_1)}
\cdots \E_{Z_T, Z'_T \sim \bp_T(\cdot|\bZ_1^{T-1})}
\Big[G\Big(\sup_{f \in \cF}  \Sigma(\bs) \Big) \Big],
\end{align*}
where $\bs = (1, \ldots, 1) \in \set{\pm 1}^T$ and
$\Sigma(\bs) = \sum_{t = 1}^T \sigma_t q_t (f(Z'_t) - f(Z_t))$.
Now, by definition of decoupled tangent sequences,
the value of the last expression is unchanged if
we swap the sign of any $\sigma_{i - 1}$ to $-1$ since that
is equivalent to permuting $Z_i$ and $Z'_i$. Thus, the last expression 
is in fact equal to
\begin{align*}
\E_{Z_1, Z'_1 \sim \bp_1} \E_{Z_2, Z'_2 \sim \bp_2(\cdot|S_1(\s_1))}
\cdots \E_{Z_T, Z'_T \sim \bp_T(\cdot|S_1(\s_1), \ldots, S_{T-1}(\s_{T-1}))}
\Big[G\Big(\sup_{f \in \cF}  \Sigma(\bs) \Big) \Big]
\end{align*}
for any sequence $\bs \in \set{\pm 1}^T$, where $S_t(1) = Z_t$ and
$Z'_t$ otherwise. Since this equality holds for any $\bs$, it also
holds for the mean with respect to uniformly distributed
$\bs$. Therefore, the last expression is equal to
\begin{align*}
\E_{\bs} \E_{Z_1, Z'_1 \sim \bp_1} \E_{Z_2, Z'_2 \sim \bp_2(\cdot|S_1(\s_1))}
\cdots \E_{Z_T, Z'_T \sim \bp_T(\cdot|S_1(\s_1), \ldots, S_{T-1}(\s_{T-1}))}
\Big[G\Big(\sup_{f \in \cF}  \Sigma(\bs) \Big) \Big].
\end{align*}
This last expectation coincides with the expectation
with respect to drawing a random tree $\bz$ from $T(\bp)$
(and its tangent tree $\bz'$) and
a random path $\bs$ to follow in that tree. That is,
the last expectation is equal to
\begin{align*}
\E_{\bs} \E_{\bz \sim T(\bp)}\Big[G\Big(\sup_f \sum_{t = 1}^T
\sigma_t q_t (f(\bz'_t(\bs)) - f(\bz_t(\bs))) \Big)\Big],
\end{align*}
which concludes the proof.
\ignore{
  The proof follows the argument of Theorem~3 of
  \citep{RakhlinSridharanTewari2011}.  \ignore{We only need to check that every
  step holds for an arbitrary weight vector $\bq$ in lieu of the
  uniform distribution vector $\bu$.}  Let $(\tau) = \sum_{t = 1}^\tau
  q_t (f(Z'_t) - f(Z_t))$.  Observe that, since $Z_T$ and $Z'_T$ are
  independent and identically distributed conditioned on
  $\bZ_1^{T-1}$, we can write
\begin{align*}
 \E\Big[ G\Big(\sup_{f \in \cF}  S(T) \Big)  \Big| \bZ_1^{T-1} \Big] &=
\E_{\s_T}\Bigg[
\E\Bigg[G\Big(\sup_{f \in \cF}  S(T-1) +
 \s_T q_T (f(Z'_T) - f(Z_T)) \Big)  \Bigg| \bZ_1^{T-1} \Bigg] \Bigg] \\ &=
\E\Bigg[\E_{\s_T}
\Bigg[G\Big(\sup_{f \in \cF}  S(T-1) +
 \s_T q_T (f(Z'_T) - f(Z_T)) \Big) \Bigg| \bZ_1^{T-1} \Bigg] \Bigg]
\ignore{ \\ &\leq
\sup_{z_T, z'_T} \E_{\s_T}
\Bigg[\sup_{f \in \cF} | S(T-1) +
 \s_T q_T (f(z'_T) - f(z_T)) |  \Bigg]}.
\end{align*}
Repeating this argument for $\bZ_1^{T-2}$ yields
\begin{align*}
 &\E\Big[G\Big( \sup_{f \in \cF}  S(T) \Big) \Big|
 \bZ_1^{T-2} \Big] =
 \E\Bigg[
  \E\Bigg[ G\Big(\sup_{f \in \cF}  S(T) \Big)  \Bigg|
 \bZ_1^{T-1} \Bigg] \Bigg| \bZ_1^{T-2} \Bigg] \\ &\leq
 \E\Bigg[
\E\Bigg[\E_{\s_T}
\Bigg[G\Big(\sup_{f \in \cF}  S(T-1) +
 \s_T q_T (f(Z'_T) - f(Z_T)) \Big) \Bigg| \bZ_1^{T-1} \Bigg] \Bigg]  
 \Bigg| \bZ_1^{T-2} \Bigg] \\ &\leq
 \sup_{z_{T-2}, z'_{T-2}} \E_{s_{T-1}}\Bigg[
  \sup_{z_T, z'_T} \E_{\s_T}
 \Bigg[\sup_{f \in \cF} | S(T-2) +
 \sum_{t=T-1}^T \s_t q_t(f(z'_t) - f(z_t)) |  \Bigg] \Bigg].
\end{align*}
Proceeding in this manner shows that $\E[ \sup_{f \in \cF} | S(T) |
\mid \bZ_1^{T-1} ]$ is bounded by
\begin{align*}
\sup_{z_1, z'_1}\E_{\s_1} \ldots \sup_{z_T, z'_T}\E_{\s_T} & \Bigg[
\sup_{f \in \cF} \Bigg| \sum_{t = 1}^T \s_t q_t (f(z'_t) - f(z_t)) \Bigg|\Bigg] \\
& \leq 2 \sup_{z_1}\E_{\s_1} \ldots \sup_{z_T}\E_{\s_T} \Bigg[
\sup_{f \in \cF} \Bigg| \sum_{t = 1}^T \s_t q_t f(z_t) \Bigg|\Bigg].
\end{align*}
It suffices to argue that the right-hand side is bounded by $2\sR_T(\cF)$.
Suppose that the first supremum is achieved at $z^*_1$, the second
supremum is achieved at $z^*_2(1)$ if $\s_1 = 1$ and $z^*_2(-1)$ if $\s_1=-1$
and so on. Then, the following holds
\begin{align*}
\sup_{z_1}\E_{\s_1} \cdots \sup_{z_T}\E_{\s_T} \Bigg[
\sup_{f \in \cF} \Bigg| \sum_{t = 1}^T \s_t q_t f(z_t) \Bigg|\Bigg] =
\E_{\bs} \Bigg[
\sup_{f \in \cF} \Bigg| \sum_{t = 1}^T \s_t q_t f(\bz^*(\bs)) \Bigg|\Bigg]
\leq \sR_T(\cF),
\end{align*}
by definition of the sequential Rademacher complexity. When the
suprema are not reached, the result can be proven similarly using
an additional limiting argument.
\ignore{ On the other hand, for any tree $\bz$,
\begin{align*}
\sup_{z_1}\E_{\s_1} \ldots \sup_{z_T}\E_{\s_T} \Bigg[
\sup_{f \in \cF} \Bigg| \sum_{t = 1}^T \s_t q_t f(z_t) \Bigg|\Bigg]
\geq
\E_{\bs} \Bigg[
\sup_{f \in \cF} \Bigg| \sum_{t = 1}^T \s_t q_t f(\bz(\bs)) \Bigg|\Bigg]
\end{align*}
and taking supremum over $\bz$ on both sides yields the other inequality.}}
\end{proof}

\ignore{
\begin{reptheorem}{th:discrepancy}
Let $\bZ_1^T$ be a sequence of random variables.  Then, for any
$\d > 0$, with probability at
least $1 - \delta$, the following holds for all $\alpha > 0$:
\begin{align*}
\sup_{f \in \cF} \Bigg( \sum_{t = 1}^T(p_t - q_t)
 \E[f(Z_t)|\bZ_1^{t-1}]\Bigg) \leq
 & \sup_{f \in \cF} \Bigg( \sum_{t = 1}^T (p_t - q_t) f(Z_t) \Bigg) \\
 & + \alpha +
M \|\bq - \bp\|_2 \sqrt{\log{\frac{\E_{\bz \sim T(\bp)}[ \cN_1(\alpha, \cG, \bz) ]}{\d'}}},
\end{align*}
where $\bp$ is the distribution the uniform on the last $s$ points.
\end{reptheorem}

\begin{proof}
First, observe that
\begin{align*}
\sup_{f \in \cF} \Bigg( \sum_{t = 1}^T(p_t - q_t)
 \E[f(Z_t)|\bZ_1^{t-1}]\Bigg) -
 & \sup_{f \in \cF} \Bigg( \sum_{t = 1}^T (p_t - q_t) f(Z_t) \Bigg)\\ &\leq
\sup_{f \in \cF} \Bigg( \sum_{t = 1}^T(p_t - q_t)(
 \E[f(Z_t)|\bZ_1^{t-1}] -  f(Z_t)) \Bigg).
\end{align*}
The result then follows using similar arguments to those used in the proof of Theorem~\ref{th:bound}.
\end{proof}}

\begin{reptheorem}{th:stability-coef}
Let $K$ be a positive definite symmetric kernel such that
$r^2 = \sup_x K(x,x)$ and let $L$ be a convex and $\s$-admissible loss
function. Let $\bq = (q_1, \ldots, q_T)$ be any
non-negative weight vector.
Then, the kernel-based regularization algorithm defined by the
minimization of $F_{\bZ_1^T}(h, \bq)$ in \eqref{eq:q-kernel-based-reg-obj}
is $\beta$-stable with $\beta \leq \frac{\s^2 r^2 \|\bq\|_\infty}{\lambda}$.
\end{reptheorem}

\begin{proof}
Let $S = \bZ_1^T$ and $S'$ be a sample that difers from $S$ by exactly one
point, say $Z'_t$. Assume $h$ and $h'$ are minimizers of
$F_S = F_S(\cdot, \bq)$ and $F_{S'} = F_{S'}(\cdot, \bq)$ respectively.
We let $B_{F_S}$ denote the generalized Bregman divergence defined
by $F_S$, that is,
\begin{align*}
B_{F_S}(h_1 \parallel h_2) = F_S(h_1) - F_S(h_2) -
\langle h_1 - h_2, \d F_S(h_2) \rangle_\cH,
\end{align*}
where $\d F_S(h)$ is denotes any element of subgradient of $F_S$ at $h$
such that
$\d (\sum_{t=1}^T q_t L(h, Z_t)) = \d F_S(h) - \lambda \nabla \|h\|_\cH^2$
and $\d F_s(h) = 0$ whenever $h$ is a minimizer of $F_S$. Note
that this implies that $B_{F_S} = B_{\hh R_S} + \lambda B_N$,
where $N(h) = \|h\|_\cH^2$ and $\hh R_S = \sum_{t=1}^T q_t L(h, Z_t)$.
Then, since generalized Bregman divergence is non-negative,  we can write,
\begin{align*}
B_{F_S}(h' \parallel h) + B_{F_{S'}}(h \parallel h')
\geq \lambda
B_{N}(h' \parallel h) + B_{F_{N}}(h \parallel h').
\end{align*}
Observe that $B_N(h'\parallel h) + B_N(h\parallel h')
= - \langle h'-h, 2h \rangle - \langle h - h', 2h' \rangle = 2\|h' - h\|_\cH^2$. Let $\Delta h = h' - h$. Then it follows that
\begin{align*}
2 \lambda \|\Delta h \|_\cH^2 &\leq
B_{F_S}(h' \parallel h) + B_{F_{S'}}(h \parallel h') \\
&= F_S(h') - F_S(h) -
\langle h' - h, \d F_{S}(h) \rangle_\cH
+ F_{S'}(h) - F_{S'}(h') -
\langle h - h, \d F_{S'}(h') \rangle_\cH \\
&= F_S(h') - F_S(h) + F_{S'}(h) - F_{S'}(h') \\
&= \hh R_S(h') - \hh R_S(h) + \hh R_{S'}(h) - \hh R_{S'}(h'),
\end{align*}
where the first equality uses the definition of the generalized Bregman
divergence, second equality follows from the fact that $h$ and $h'$
are minimizers and last equality follows from the definition of
$F_S$ and $F_{S'}$. By $\s$-admissibility of $L$ and the fact that
$S$ and $S'$ differ by exactly one point it follows that
\begin{align*}
2 \lambda \|\Delta h \|_\cH^2 \leq
q_t ( L(h', Z_t) - L(h, Z_t) + L(h, Z'_t) - L(h', Z'_t) )
\leq \s q_t (|\Delta h(X_t)| + |\Delta h(X'_t)|).  
\end{align*}
Applying the reproducing kernel property and Cauchy-Schwartz inequality,
for all $x \in \cX$,
\begin{align*}
\Delta(x) = \langle \Delta h, K(x, \cdot) \rangle_\cH
\leq \|\Delta h\|_\cH \|K(x,\cdot)\|_\cH \leq r \|\Delta h\|_\cH.
\end{align*}
It follows that $\|\Delta h\|_\cH \leq \frac{\s r \| \bq \|_\infty}{\lambda}$.
Therefore, by $\s$-admissibility and reproducing kernel property
\begin{align*}
|L(h', z) - L(h, z)| \leq \s |\Delta h(x)| \leq r \s \|\Delta\|_\cH
\leq \frac{\s^2 r^2 \| \bq \|_\infty}{\lambda}
\end{align*}
for all $z = (x,y)$ and this concludes the proof.
\end{proof}

\begin{theorem}
\label{th:uniform}
Let $p \geq 1$ and
$\cF = \set{(\bx, y) \to (\bw \cdot \Psi(\bx) - y)^p \colon
  \|\bw\|_\cH \leq \Lambda}$
where $\cH$ is a Hilbert space and $\Psi\colon \cX \to \cH$ a feature
map.  Assume that the condition $|\bw \cdot \bx - y| \leq M$ holds for
all $(\bx, y)\in \cZ$ and all $\bw$ such that
$\|\bw\|_\cH \leq \Lambda$. Fix $\bq^*$.
Then, if $\bZ_1^T=(\bX_1^T,\bY_1^T)$ is a
sequence of random variables, for any $\d > 0$, with probability at
least $1 - \d$, the following bound holds for all
$h \in H = \set{\bx \to \bw \cdot \Psi(\bx) \colon \|\bw\|_{\cH} \leq
  \Lambda}$ and all $\bq$ such that $0<\|\bq-\bq^*\|_1 \leq 1$:
\begin{align*}
& \E[(h(X_{T+1}) - Y_{T+1})^p|\bZ_1^{T}]
\leq \sum_{t = 1}^T q_t (h(X_{t}) - Y_{t})^p + \disc(\bq) + G(\bq) + 4M \|\bq - \bq^*\|_1
\end{align*}
where $G(\bq) = 4M \Big(\sqrt{2 \log\frac{2}{\delta}} + \sqrt{2 \log\log_2 2 (1 - \|\bq - \bq^*\|_1)^{-1}} + \widetilde{C}_T \Lambda r \Big) \Big(\|\bq^*\|_2 + 2 \|\bq - \bq^*\|_1\Big)$
and $\widetilde{C}_T = 48 pM^p \sqrt{\pi \log T} (1 + 4\sqrt{2} \log^{3/2} (eT^2))$.
Thus,
for $p = 2$,
\begin{align*}
\E[(h(X_{T+1}) &- Y_{T+1})^2|\bZ_1^{T}] 
\leq \sum_{t = 1}^T q_t (h(X_{t}) - Y_{t})^2 + \disc(\bq) \\ & +
O\Bigg( \Lambda r (\log^2 T) \sqrt{\log \log_2 2(1- \|\bq -\bq^*\|_1)^{-1}} \Big(\|\bq^*\|_2 +  \|\bq - \bq^*\|_1 \Big)\Bigg).
\end{align*}
\end{theorem}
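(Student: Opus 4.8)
The plan is to reduce the statement, exactly as in the proof of Theorem~\ref{th:bound}, to a \emph{uniform} deviation bound for the single functional
\[
\Psi(\bq) = \sup_{f\in\cF}\sum_{t=1}^T q_t\big(\E[f(Z_t)\mid\bZ_1^{t-1}] - f(Z_t)\big).
\]
Indeed, for every $h$ and every admissible $\bq$ one has $\E[f_h(Z_{T+1})\mid\bZ_1^T] - \sum_t q_t f_h(Z_t) \le \Phi(\bZ_1^T) \le \disc(\bq) + \Psi(\bq)$, where $f_h\in\cF$ is the loss associated with $h$ and the last inequality is the one already used in Theorem~\ref{th:bound}. Since the supremum over $f$ (hence over $h$) is absorbed into $\Psi$, it suffices to show $\Psi(\bq)\le G(\bq) + 4M\|\bq-\bq^*\|_1$ simultaneously for all $\bq$ with $0<\|\bq-\bq^*\|_1\le1$. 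I will use three elementary properties of $\Psi$: writing $D_t(f)=\E[f(Z_t)\mid\bZ_1^{t-1}]-f(Z_t)$ with $|D_t(f)|\le 2M$, the map $\bq\mapsto\Psi(\bq)$ is convex, positively homogeneous ($\Psi(c\bq)=c\Psi(\bq)$, $c\ge0$), subadditive ($\Psi(\bq)\le\Psi(\bq^*)+\Psi(\bq-\bq^*)$), and $2M$-Lipschitz with respect to $\|\cdot\|_1$.

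I then split $\Psi(\bq)\le\Psi(\bq^*)+\Psi(\bq-\bq^*)$ and treat the two pieces separately. The term $\Psi(\bq^*)$ involves a \emph{fixed} weight vector, so Theorem~\ref{th:bound} (equivalently Corollary~\ref{cor:rad-bound} together with the sequential Rademacher estimate of Lemma~\ref{lm:rademacher-bound}), applied at confidence $\delta/2$, yields with probability $1-\delta/2$ a bound of the form $\Psi(\bq^*)\le \big(\widetilde C_T\Lambda r + 4M\sqrt{2\log\tfrac2\delta}\big)\|\bq^*\|_2$, which is exactly the $\|\bq^*\|_2$ contribution appearing in $G(\bq)$. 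It remains to control $\Psi(\bq-\bq^*)$ uniformly over the punctured $\ell_1$-ball $\{\br:0<\|\br\|_1\le1\}$, and this is where the $\ell_1$-radius–dependent terms of $G$ are produced.

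For the uniform part I will peel over the radius $\eta=\|\bq-\bq^*\|_1$. Partition $(0,1)$ into the dyadic shells $I_k=\{\br:\,1-2^{-(k-1)}<\|\br\|_1\le 1-2^{-k}\}$, $k\ge1$, which accumulate at the excluded endpoint $\eta=1$, and assign to shell $k$ a confidence $\delta_k$ with $\sum_k\delta_k\le\delta/2$ (e.g.\ $\delta_k\propto\delta/k^2$). A point at radius $\eta$ lies in shell $k$ with $k\le\log_2\tfrac2{1-\eta}$, so the union bound over shells contributes the penalty $\sqrt{2\log\tfrac1{\delta_k}}\le\sqrt{2\log\tfrac2\delta+2\log\log_2 2(1-\eta)^{-1}}$; splitting this via $\sqrt{x+y}\le\sqrt x+\sqrt y$ gives precisely the two square-root terms of $G$. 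Inside a shell I bound $\Psi(\br)$ by combining the fixed-weight concentration of Theorem~\ref{th:bound} with positive homogeneity, $\Psi(\br)\le\|\br\|_1\,\Psi(\br/\|\br\|_1)$: the homogeneous scaling reproduces the factor $2\|\bq-\bq^*\|_1$ multiplying the bracket, the fixed-weight complexity estimate supplies the $\widetilde C_T\Lambda r$ inside it, and the residual low-order contribution (including the regime of radius so small that the concentration step is inapplicable, handled instead by the trivial $2M$-Lipschitz estimate) is absorbed into the outer $4M\|\bq-\bq^*\|_1$. Collecting the two pieces yields $\Psi(\bq)\le G(\bq)+4M\|\bq-\bq^*\|_1$; the specialization $p=2$ follows by substituting $\widetilde C_T=48\,pM^p\sqrt{\pi\log T}(1+4\sqrt2\log^{3/2}(eT^2))=O(\log^2 T)$ and retaining only the dominant order.

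The main obstacle is making the within-shell estimate genuinely uniform over the \emph{direction} of $\br$ without paying a covering penalty exponential in $T$: a naive $\ell_1$-net of a shell has cardinality $(\,\cdot\,/\epsilon)^T$ and would introduce a $\sqrt{T}$ factor that destroys the rate. The resolution is to avoid netting the weight vectors altogether, using positive homogeneity to reduce to the unit-direction functional and then controlling it through the same symmetrization-plus-covering machinery of Theorem~\ref{th:bound} — the covering being of $\cF$, whose expected sequential covering number is already under control, \emph{not} of the weight simplex. Reconciling this direction-uniformity with an \emph{adaptive} (radius-dependent) penalty is exactly what forces the peeling over $\eta$ and hence the $\sqrt{\log\log}$ term; getting the per-shell deviation to scale with $\|\br\|_1$ rather than with a net cardinality, while keeping the confidences $\delta_k$ summable, is the crux. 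Everything else is the routine optimization over $\lambda$ and $\alpha$ inherited from Theorem~\ref{th:bound}.
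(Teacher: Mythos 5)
Your overall scaffolding --- reducing to the deviation functional $\Psi(\bq)=\sup_{f\in\cF}\sum_{t=1}^T q_t(\E[f(Z_t)\mid\bZ_1^{t-1}]-f(Z_t))$, stratifying dyadically over the radius $\|\bq-\bq^*\|_1$ with summable confidences $\delta_k\propto\delta/k^2$ (the source of the $\sqrt{\log\log}$ term), and exploiting convexity/subadditivity/$\ell_1$-Lipschitzness --- matches the paper's argument in spirit. The genuine gap is exactly the step you flag as the crux and then assert away: uniformity over the \emph{direction} of $\br=\bq-\bq^*$ within a shell. Writing $\Psi(\br)\le\|\br\|_1\,\Psi(\br/\|\br\|_1)$ does not reduce to a fixed-weight statement, because $\br/\|\br\|_1$ still ranges over the uncountable unit $\ell_1$-sphere and the concentration machinery of Theorem~\ref{th:bound} is pointwise in the weight vector; saying ``the covering is of $\cF$, not of the simplex'' supplies no mechanism for taking that supremum. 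Worse, no bound of the advertised form can hold on that set: $\sup_{\|\bu\|_1=1}\Psi(\bu)=\sup_{f\in\cF}\max_t\big|\E[f(Z_t)\mid\bZ_1^{t-1}]-f(Z_t)\big|$, which is generically of order $M$ and is attained at coordinate directions $\bu=e_t$ with $\|\bu\|_2=1$, so the factor $\widetilde C_T\Lambda r\|\bu\|_2$ you hope to extract per direction is not small where it matters. The paper avoids direction-uniformity altogether: it fixes a \emph{single} representative $\bq(k)$ at each dyadic radius $\|\bq(k)-\bq^*\|_1=1-2^{-k}$, applies the fixed-weight bound (Theorem~\ref{th:lin-hypothesis-bound}) to each with confidence $e^{-\e_k^2}$, $\e_k=\e+\sqrt{2\log k}$, and then transfers from an arbitrary $\bq$ to the nearby representative by the \emph{deterministic} $\ell_1$-Lipschitzness of every term in the bound: the empirical loss and the discrepancy each move by at most $2M\|\bq-\bq^*\|_1$, and $\|\bq(k-1)\|_2\le\|\bq^*\|_2+2\|\bq-\bq^*\|_1$, which is precisely where the additive $4M\|\bq-\bq^*\|_1$ and the multiplier $\|\bq^*\|_2+2\|\bq-\bq^*\|_1$ in $G(\bq)$ come from.

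A further remark: the properties you list already close the argument without any peeling. By subadditivity and H\"older, $\Psi(\bq)\le\Psi(\bq^*)+\Psi(\bq-\bq^*)\le\Psi(\bq^*)+2M\|\bq-\bq^*\|_1$ \emph{deterministically}, and $\Psi(\bq^*)$ is a single fixed-weight quantity controlled by Theorem~\ref{th:lin-hypothesis-bound} at confidence $\delta$. Combined with $\Phi(\bZ_1^T)\le\disc(\bq)+\Psi(\bq)$ this gives a bound of the theorem's type uniformly over all $\bq$ --- in fact with no $\sqrt{\log\log}$ term and with the complexity term multiplied only by $\|\bq^*\|_2$ rather than by $\|\bq^*\|_2+2\|\bq-\bq^*\|_1$. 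So if you replace the within-shell concentration by this one-line estimate you obtain a complete, and arguably cleaner, proof; as written, however, the within-shell step is not valid.
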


This result extends Theorem~\ref{th:lin-hypothesis-bound} 
to hold uniformly over $\bq$. Similarly,
one can prove an analogous extension for Theorem~\ref{th:bound}.
This result suggests that we should try to minimize
$\sum_{t = 1}^T q_t f(Z_t) + \disc(\bq)$ over $\bq$
and $\bw$. This bound is in certain sense analogous to margin bounds:
it is the most favorable when there exists a good choice for $\bq^*$
and we hope to find $\bq$ that is going to be close to this
weight vector. These insights are used to develop our algorithmic solutions
for forecasting non-stationary time series in Section~\ref{sec:algo}. 

\begin{proof}
Let $(\e_k)_{k=0}^\infty$ and
$(\bq(k))_{k=0}^\infty$ be infinite sequences specified below.
By Theorem~\ref{th:lin-hypothesis-bound},
the following holds for each $k$
\begin{align*}
\P \bigg(\E[f(Z_{T + 1})|\bZ_1^T] > 
 \sum_{t = 1}^T q_t(k) f(Z_t)  + \Delta(\bq(k)) + C(\bq(k)) + 4 M
\|\bq\|_2 \e_k    \bigg) \leq  \exp( -\e_k^2 ), 
\end{align*}
where $\Delta(\bq(k))$ denotes the discrepancy computed with respect to
the weights $\bq(k)$ and $C(\bq(k)) = \widetilde{C}_T \|\bq(k)\|_2$.
Let $\e_k =  \e + \sqrt{2 \log k}$. Then, by the
union bound we can write
\begin{align*}
\P\Bigg( \exists k \colon \! \E[f(Z_{T + 1}) |\bZ_1^T]\! >\! 
 \sum_{t = 1}^T q_t(k) f(Z_t)\! +\! \Delta(\bq(k))\! +\! C(\bq(k))\! + \!
4M \|\bq(k)\|_2 \e_k
  \Bigg)  & \!\leq\! \sum_{k=1}^\infty e^{-\e_k^2} \\
  & \!\leq\! \sum_{k=1}^\infty e^{-\e^2 - \log k^2} \\
  & \!\leq\! 2 e^{-\e^2}.
\end{align*}
We choose the sequence $\bq(k)$ to satisfy $\|\bq(k) - \bq^*\|_1 = 1 - 2^{-k}$.
Then, for any $\bq$ such that $0 < \|\bq - \bu\|_1 \leq 1$, there
exists $k \geq 1$ such that
\begin{align*}
1 - \|\bq(k) - \bq^*\|_1 < 1 - \|\bq - \bq^*\|_1 \leq 1-  \|\bq(k-1) - \bq^*\|_1
 \leq  2(1 - \| \bq(k) - \bq^*\|_1).
\end{align*}
Thus, the following inequality holds:
\begin{align*}
\sqrt{2\log k} & \leq \sqrt{2\log \log_2 2(1- \|\bq - \bq^*\|_1)^{-1}}\ignore{ \\
\Leftrightarrow \qquad k^2 & \leq ( \log_2 2 (1 - \|\bq - \bq^*\|_1)^{-1} )^2}.
\end{align*}
Combining this with the observation that the following two
inequalities hold: \ignore{
\begin{align*}
\sR_T(\cF, \bq(k)) \leq M \|\bq(k) - \bq\|_1 + \sR_T(\cF, \bq)
&\leq M\|\bq(k) -\bu\|_1 + M\|\bq - \bu\|_1 +  \sR_T(\cF, \bq) \\
&\leq 2 M\|\bq - \bu\|_1 +  \sR_T(\cF, \bq),
\end{align*}
leads to the following inequality:
\begin{align*}
\e_k \leq & \alpha \Big( \log_2 2\|\bq-\bu\|_1^{-1}  \Big)^2
  \Big(2 M \|\bq-\bu\|_1 + \sR_T(\cF)\Big) 
\\ &+ M (\|\bu\|_2 + \|\bq - \bu\|_1)
  \Big( \e +
       \sqrt{\log\log_2 2\|\bq-\bu\|_1^{-1}}\Big).
\end{align*}
Since we also have}
\begin{align*}
\sum_{t = 1}^T q_t(k-1) f(Z_t) & \leq \sum_{t = 1}^T q_t f(Z_t) + 2 M \|\bq - \bq^*\|_1\\
\Delta(\bq(k-1)) & \leq \Delta(\bq) +  2 M \|\bq - \bq^*\|_1, \\
\|\bq(k-1)\|_2 & \leq 2 \| \bq - \bq^*\|_1 + \|\bq^*\|_2 
\end{align*}
shows that the event
\begin{align*}
\Bigg\{
\E[f(Z_{T + 1})|\bZ_1^T] >
\sum_{t = 1}^T q_t f(Z_t)  + \disc(\bq) 
 + G(\bq) 
+ 4M \|\bq - \bq^*\|_1
\Bigg\}
\end{align*}
where $G(\bq) = 4M \Big(\e + \sqrt{2 \log\log_2 2 (1 - \|\bq - \bq^*\|_1)^{-1}} + \widetilde{C}_T \Lambda r \Big) \Big(\|\bq^*\|_2 + 2 \|\bq - \bq^*\|_1\Big)$
implies the following one
\begin{align*}
\Bigg\{
\E[f(Z_{T + 1}) |\bZ_1^T]\! >\! 
 \sum_{t = 1}^T q_t(k-1) f(Z_t)\! +\! \Delta(\bq(k-1))\! +\! C(\bq(k-1))\! + \!
4M \|\bq(k-1)\|_2 \e_{k-1}
\Bigg\},
\end{align*}
which completes the proof.
\end{proof}

\section{Dual Optimization Problem}
\label{sec:opt}
In this section, we provide a detailed derivation of the optimization
problem in \eqref{eq:convex-dual} starting with optimization
problem in \eqref{eq:joint-opt}.
The first step is to appeal to the following chain of equalities:
\begin{align}
\label{eq:weighted-rreg-duality}
\min_{\bw} &\Big\{ \sum_{t=1}^T q_t (\bw \cdot \Psi(x_t) - y_t)^2 + \lambda_2
\|\bw\|_\cH^2 \Big\} \nonumber\\ &=
\min_{\bw} \Big\{ \sum_{t=1}^T (\bw \cdot x'_t - y'_t)^2 + \lambda_2
\|\bw\|_\cH^2 \Big\} \nonumber\\ &=
\max_{\boldsymbol{\beta}} \Big\{ -\lambda_2 \sum_{t=1}^T \beta_t^2
 - \sum_{s,t=1}^T  \beta_s \beta_t x'_s x'_t +
2 \lambda_2 \sum_{t=1} \beta_t y'_t  \Big\} \nonumber\\ &= 
\max_{\boldsymbol{\beta}} \Big\{ -\lambda_2 \sum_{t=1}^T \beta_t^2
 - \sum_{s,t=1}^T  \beta_s \beta_t \sqrt{q_s}\sqrt{q_t} K_{s,t} +
2 \lambda_2 \sum_{t=1} \beta_t \sqrt{q_t} y_t  \Big\} \nonumber\\ &= 
\max_{\ba} \Big\{ -\lambda_2 \sum_{t=1}^T \frac{\alpha_t^2}{q_t}
 - \ba^T  \bK \ba + 2 \lambda_2 \ba^T \bY  \Big\},
\end{align}
where the first equality follows by substituting $x'_t = \sqrt{q_t} \Psi(x_t)$
and $y'_t = \sqrt{q_t} y_t$ the second equality uses the dual formulation of
the kernel ridge regression problem and the last equality follows from
the following change of variables: $\alpha_t = \sqrt{q_t} \beta_t$.
\ignore{ 
For convenience of the reader, we provide the full derivation of this result.
Observe that optimization problem
\begin{align*}
\min_{\bw} \Big\{ \sum_{t=1}^T q_t (\bw \cdot \Psi(x_t) - y_t)^2 + \lambda_2
\|\bw\|_\cH^2 \Big\}
\end{align*}
is equivalent to
\begin{align*}
\min_{\bw, \boldsymbol{\xi}} \Big\{ \sum_{t=1}^T q_t \xi_t^2
+ \lambda_2 \|\bw\|_\cH^2 \Big\} \\
\text{subject to: } \xi_t = y_t -  \bw \cdot \Psi(x_t).
\end{align*}
This is a convex optimization problem with differentiable objective
function and constraints.
The Lagrangian for this optimization problem is given by:
\begin{align*}
\mathfrak{L}(\bw,\boldsymbol{\xi}, \ba', \lambda_2) =
\sum_{t=1}^T q_t \xi_t^2
+ \lambda_2 \|\bw\|_\cH^2 +  \sum_{t=1}^T \alpha'_t (y_t - \xi_t - \bw \cdot \Psi(x_t)).
\end{align*}
The KKT conditions lead to the following equalities:
\begin{align*}
& \nabla_\bw \mathfrak{L} = 
- \sum_{t=1}^T \alpha'_t  \Psi(x_t) - 2 \lambda_2 \bw = 0, \\
& \nabla_{\xi_t} \mathfrak{L} = 2 q_t \xi_t - \alpha'_t = 0, \\
& \alpha'_t (y_t - \xi_t - \bw \cdot \Psi(x_t)) = 0, \text{ for all }
t \in [1, T].
\end{align*}
In particular, this implies that
\begin{align*}
&\bw = \frac{1}{2\lambda_2} \sum_{t=1}^T \alpha'_t  \Psi(x_t), \\
& \xi_t = \frac{\alpha'_t}{2q_t}. 
\end{align*}
Plugging the expression for $\bw$ and $\xi_t$s in $\mathfrak{L}$ gives:
\begin{align*}
& \sum_{t=1}^T \frac{{\alpha'}^2_t}{4 q_t} + \sum_{t=1}^T \alpha'_t y_t
- \sum_{t=1}^T \frac{{\alpha'}^2_t}{2 q_t} - \frac{1}{2 \lambda_2}
\sum_{t,s=1}^T \alpha'_s \alpha'_t \Psi(x_s) \cdot \Psi(x_t)
+ \lambda_2 \Big(\frac{1}{4 \lambda^2_2} \Big\|\sum_{t=1}^T \alpha'_t \Psi(x_t)
\Big\|^2\Big) \\
 & =
- \lambda_2 \sum_{t=1}^T \frac{\alpha^2_t}{q_t} +
2 \lambda_2 \sum_{t=1}^T \alpha_t y_t - \sum_{t,s=1}^T
\alpha_s \alpha_t \Psi(x_s) \cdot \Psi(x_t) 
\end{align*}
where $\alpha'_2 = 2 \lambda \alpha'_t$. This proves
\eqref{eq:weighted-rreg-duality}.}

By~\eqref{eq:weighted-rreg-duality}, optimization problem in
\eqref{eq:joint-opt} is equivalent to the following optimization
problem
\begin{align*}
\min_{0 \leq \bq \leq 1}\bigg\{ \max_{\ba} \Big\{ -\lambda_1 \sum_{t=1}^T \frac{\alpha_t^2}{q_t}
 - \ba^T  \bK \ba + 2 \lambda_1 \ba^T \bY  \Big\} 
  +  (\bd \!\cdot\! \bq) 
+ \lambda_2 \|\bq - \bu\|_p \mspace{-5mu} \bigg\}.
\end{align*}
Next, we apply the change of variables $r_t = 1/q_t$ and appeal to the same
arguments as were given for the primal problem in Section~\ref{sec:algo}
to arrive at \eqref{eq:convex-dual}.

\newpage

\vskip 0.2in
\bibliography{tsj}

\end{document}